\newcommand{\BB}[1]{\mathbb{#1}}
\newcommand{\BF}[1]{\mathbf{#1}}
\newcommand{\BC}[1]{\mathcal{#1}}
\newcommand{\proj}[2]{\pi_{#1}(#2)}
\definecolor{azure(colorwheel)}{rgb}{0.0, 0.5, 1.0}
\definecolor{darkpastelgreen}{rgb}{0.01, 0.75, 0.24}
\newcommand{\addresscomments}[1]{\textcolor{black}{#1}}
\newcommand{\Mod}[1]{\ensuremath{\ (\mathrm{mod}\ #1)}}
\newtheorem{theorem}{Theorem}[]
\newtheorem{assumptionvfl}{Assumption}[]
\newtheorem{lemmavfl}{Lemma}[]
\theoremstyle{remark}
\newtheorem{remark}{Remark}
\newcommand{\Lc}{{\mathcal{L}}}
\newcommand{\pdj}{{\nabla_{(j)}}}
\newcommand{\gm}{\tilde{\bm{\theta}}}
\newcommand{\gmj}{\tilde{\bm{\theta}}_{j}}
\newcommand{\tjk}{{\bm{\theta}}_{k,j}}
\newcommand{\ykj}{{\bm{y}}_{k,j}}
\newcommand{\tjminus}{{\bm{\theta}}_{-j}^{t_0}}
\newcommand{\gkj}{ g_{k,j}}
\newcommand{\ex}{\mathbb{E}}
\newcommand{\Gj}{{\bm{G}}_{(j)}}
\newcommand{\squeezeuppicture}{\vspace{0mm}} 
\newcommand{\eqdef}{\overset{\mathrm{def}}{=}}
\newcommand{\condY}{\BF{Y}^{t_0}}
\newcommand{\fullconditional}{\bigg |~\condY}
  \providecommand\BibTeX{{%
    \normalfont B\kern-0.5em{\scshape i\kern-0.25em b}\kern-0.8em\TeX}}}
\algrenewcommand\ALG@beginalgorithmic{\footnotesize}
\begin{document}

\title{Cross-Silo Federated Learning for Multi-Tier Networks with Vertical and Horizontal Data Partitioning}

\author{Anirban Das}
\affiliation{%
  \institution{Rensselaer Polytechnic Institute}
  \city{Troy}
  \state{NY}
  \country{USA}
}
\email{dasa2@rpi.edu}

\author{Timothy Castiglia}
\affiliation{%
  \institution{Rensselaer Polytechnic Institute}
  \city{Troy}
  \state{NY}
  \country{USA}
}
\email{castit@rpi.edu}

\author{Shiqiang Wang}
\affiliation{%
  \institution{IBM Research}
  \city{Yorktown Heights}
  \state{NY}
  \country{USA}}
\email{wangshiq@us.ibm.com}

\author{Stacy Patterson}
\affiliation{%
  \institution{Rensselaer Polytechnic Institute}
  \city{Troy}
  \state{NY}
  \country{USA}
}
\email{sep@cs.rpi.edu}


\begin{abstract}
We consider federated learning in tiered communication networks.
Our network model consists of a set of silos, each holding a vertical partition of the data. Each silo contains a hub and a set of clients, with the silo's vertical data shard partitioned horizontally across its clients.
We propose Tiered Decentralized Coordinate Descent (TDCD), a communication-efficient decentralized training algorithm for such two-tiered networks. 
The clients in each silo perform multiple local gradient steps before sharing updates with their hub to reduce communication overhead. 
Each hub adjusts its coordinates by averaging its workers' updates, and then hubs exchange intermediate updates with one another. 
We present a theoretical analysis of our algorithm and show the dependence of the convergence rate on the number of vertical partitions and the number of local updates. 
We further validate our approach empirically via simulation-based experiments using a variety of datasets and objectives.
\end{abstract}

\begin{CCSXML}
<ccs2012>
   <concept>
       <concept_id>10010147.10010257.10010321</concept_id>
       <concept_desc>Computing methodologies~Machine learning algorithms</concept_desc>
       <concept_significance>500</concept_significance>
       </concept>
   <concept>
       <concept_id>10010147.10010919.10010172</concept_id>
       <concept_desc>Computing methodologies~Distributed algorithms</concept_desc>
       <concept_significance>500</concept_significance>
       </concept>
   <concept>
       <concept_id>10002950.10003714.10003716.10011138.10011140</concept_id>
       <concept_desc>Mathematics of computing~Nonconvex optimization</concept_desc>
       <concept_significance>500</concept_significance>
       </concept>
   <concept>
       <concept_id>10010147.10010257.10010293.10010294</concept_id>
       <concept_desc>Computing methodologies~Neural networks</concept_desc>
       <concept_significance>300</concept_significance>
       </concept>
 </ccs2012>
\end{CCSXML}

\ccsdesc[500]{Computing methodologies~Machine learning algorithms}
\ccsdesc[500]{Computing methodologies~Distributed algorithms}
\ccsdesc[500]{Mathematics of computing~Nonconvex optimization}
\ccsdesc[300]{Computing methodologies~Neural networks}

\keywords{coordinate descent, federated learning, machine learning, stochastic gradient descent}

\maketitle
\section{Introduction}
\label{sec:intro}

In recent times, we have seen an exponential increase in the amount of data produced at the edge of communication networks. In many settings, it is infeasible to transfer an entire dataset to a centralized cloud for downstream analysis, either due to practical constraints such as high communication cost or latency, or to maintain user privacy and security~\cite{kairouz2019advances}. This has led to the deployment of distributed machine learning and deep-learning techniques where computation is performed collaboratively by a set of clients, each close to its own data source.
Federated learning has emerged as a popular technique in this space, which performs 
iterative collaborative training of a global machine learning model over data distributed over a large number of clients without sending raw data over the network.

The more commonly studied approach
of federated learning is \emph{horizontal federated learning}. In horizontal federated learning, the clients' datasets share the same set of features, but each client holds only a subset of the sample space, i.e., the data is horizontally partitioned among clients~~\cite{mcmahan2016communication, konevcny2016federated, kairouz2019advances}. In this setting, the clients train a copy of a model on their local datasets for a few iterations and then communicate their updates in the form of model weights or gradients directly to a centralized parameter server. The parameter server then creates the centralized model by aggregating the individual client updates, and the process is repeated until the desired convergence criterion is met.

Another scenario that arises in federated learning
is when clients have different sets of features, but there is a sizable overlap in the sample ID space among their datasets~\cite{yang2019federated}.
For example, the training dataset may be distributed across silos in a multi-organizational context, such as healthcare, banking, finance, retail, etc.~\cite{yang2019federated, sun2019privacy}.
Each silo holds a distinct set of features (e.g., customer/patient features);
the data within each silo may even be of a different modality; for example, one silo may have audio features, whereas another silo has image data. At the same time, there exists an overlap in the sample ID space. More specifically, the silos have a large number of customers in common.
Further, the silos may not want to share data with one another directly due to different policy restrictions, regulations, privacy/security concerns, or even bandwidth limitations in the communication network.
The paradigm of training a global model over such feature-partitioned data is called \emph{vertical federated learning}~\cite{yang2019parallel, liu2019communication}.

A common approach for model training over vertically partitioned data builds on parallel coordinate gradient descent-type algorithms~\cite{richtarik2016distributed,druvMahajanl1regularized,Kang2014}.
Here, each silo executes independent local training iterations to optimize a global model along its subset of the coordinates.
The silos exchange intermediate information to ensure that the parallel updates lead to convergence of the global model.
In early vertical federated learning works~\cite{yang2019parallel,feng2020multi,chen2020vafl}, the silos exchanged intermediate information after every local training iteration.
However, such frequent information exchanges can lead to high training latency, especially when the communication latency is high and the intermediate information is large~\cite{liu2019communication}.
The authors in a more recent work~\cite{liu2019communication} proposed an algorithm that addresses the communication bottleneck by performing multiple local training iterations in each silo before the silos exchange intermediate information for the next round of training.
While this approach is similar to the multiple local iterations in horizontal federated learning, an important distinction is that in vertical federated learning, each silo updates only on its own subset of coordinates in contrast to updating all the coordinates in the case of horizontal federated learning.

Previous works on vertical federated learning have assumed that a silo's entire dataset is contained in a single client~\cite{yang2019parallel,feng2020multi,chen2020vafl}.
However, in silos with a large organizational structure, it is natural that the data will be distributed horizontally across multiple clients.
Previous works thus fail to capture the case where the dataset within each silo is further horizontally partitioned across multiple clients.
As a motivating example, we consider two smartphone application providers 
who wish to train a global model over the datasets stored on the smartphones of their respective customer bases.
Here, the two application companies do not want to share the customer data directly with each other. At the same time, the users do not wish to share raw personal data with the companies. In this case, the data is vertically partitioned across different applications of the different companies.
Further, the data of a single company is horizontally partitioned across different application users. The silos can be thought of as the top \emph{tier} of our system architecture.
The clients inside the silos can be thought of as the bottom \emph{tier}.

We can think of another use case when there is a significant overlap in the customer ID space
between a banking silo $\BC{B}$, e.g., a banking holding company with multiple independent subsidiary banks, and an insurance silo $\BC{I}$, e.g., an insurance holding company with multiple independent subsidiaries. The two silos want to build a global model collaboratively to predict credit scores, for example.
Here, the entire data is vertically partitioned between the bank and the insurance holding companies in the top tier.
Each subsidiary of $\BC{B}$ 
acts as a client and may have a separate customer base, operating in a different geographical location in a country. Thus the entire customer data of the bank silo is horizontally partitioned across its subsidiaries in the bottom tier.
Silo $\BC{I}$ for the insurance holding company also has a similar structure. Silos $\BC{B}$ and $\BC{I}$ do not want to share information directly with each other for reasons such as privacy and security concerns and regulations.

The settings above have aspects of both the horizontal federated learning and vertical federated learning paradigms. However, it is not possible to directly apply any one paradigm  in this setting. We cannot directly apply horizontal federated learning since horizontal federated learning requires each client to have the same set of features for training the global model. In contrast, clients from different silos hold disjoint sets of features in our setting.
\addresscomments{These disjoint feature sets may not be sufficient to train an accurate model independently.
For example, a pair of labels can be only distinguishable by features
in silo A, while another pair of labels can be only distinguishable by features in silo B.}
Further, we cannot use existing vertical federated algorithms directly since the data in each silo is also horizontally distributed across clients.
Therefore, such a system setting calls for a new algorithm that combines aspects of both horizontal and vertical federated learning.

We, therefore, propose Tiered Decentralized Coordinate Descent (TDCD), a federated learning algorithm for training over data that is vertically partitioned across silos and further horizontally partitioned within silos.
Each silo internally consists of multiple clients (shown as the grey circles in Fig.~\ref{fig:system_model}) connected to a hub (shown as the orange circles in Fig.~\ref{fig:system_model}), which is a server in the silo.
The hub can be a cloud server maintained by a mobile application company. The hub can also be a server maintained by the company in charge of orchestrating the IT infrastructure of a particular silo. A hub itself does not contain data but facilitates training by coordinating clients' information.
The goal is to jointly train a model on the features of the data contained across silos, without explicitly sharing raw data between the clients and the hubs and between clients across different silos.

TDCD works by performing a non-trivial combination of parallel coordinate descent on the top tier between silos and distributed stochastic gradient descent, with multiple local steps per communication round, in the bottom tier of clients inside each silo.
More specifically, each hub maintains a block of the trainable parameters. Clients within a silo train these parameters on their local data sets for multiple iterations. The hub then aggregates the client models to update its parameter block.
To execute the local iterations, clients need embeddings from the data from clients in other silos. The hubs orchestrate this information exchange every time they update their parameter blocks. By waiting for several training iterations to exchange this information, the communication cost of the algorithm is reduced.
Our approach is thus a communication efficient combination of learning with both vertically and horizontally partitioned data in a multi-tiered network.
TDCD is novel since it interleaves both the horizontal and vertical federated learning paradigms and thus has to account for both the perturbed gradients from the horizontal federated learning component and the stale information from the vertical federated learning component. This combination leads to a different convergence analysis.

Specifically, our contributions are the following:
\begin{enumerate}
\item We present a system model for decentralized learning in a two-tier network, where data is both vertically and horizontally partitioned.
\item We develop TDCD, a communication-efficient decentralized learning algorithm using principles from coordinated descent and stochastic gradient descent.
\item We analyze the convergence of TDCD for non-convex objectives and show how it depends on the number of silos, the number of clients, and the number of local training steps. With appropriate assumptions,
TDCD achieves a convergence rate of $\BC{O}\left(\frac{1}{\sqrt{R}}\right)$, where $R$ is the total number of communication rounds between hubs and clients.
\item We validate our analysis via experiments using different objectives and different datasets. We further explore the effect of communication latency on the convergence rate.
\end{enumerate}
We observe from our experiments that
when communication latency is high with respect to the computation latency at the clients, a larger number of local iterations leads to faster convergence. 
Further, our experimental results indicate that TDCD shows little performance degradation as the number of vertical and horizontal partitions increases.

\subsection{Related Work}
Machine learning model training with vertically-partitioned features has been studied in the context of coordinate descent algorithms.
A coordinate descent algorithm with gradient steps was first proposed in~\cite{tseng2009coordinate}, where a fully centralized system is considered.
In the decentralized context, parallel coordinate descent methods have been proposed in \cite{richtarik2016distributed,druvMahajanl1regularized,Kang2014}. These algorithms typically require a shared memory architecture or raw data sharing and shuffling of the vertical partitions between the workers. Since TDCD operates in a federated learning setting, such access to raw data and shared memory is not possible, and therefore, these algorithms are not applicable.

In recent years, in distributed learning, where data is horizontally partitioned across multiple clients periodic averaging-based methods like federated averaging and its variations have been extensively studied. Federated averaging methods have been studied in the context of convex objectives~\cite{stich2018local, wang2019adaptive, khaled2019first} and non-convex objectives~\cite{yu2019parallel,haddadpour2019convergence,tlifedprox, wang2018cooperative}.
The common approach is to take multiple (stochastic) gradient steps in parallel locally to achieve better communication efficiency as well preserve privacy by not sharing the raw data. Other works have analyzed versions of horizontal federated learning for hierarchical or multi-tier networks where data is partitioned horizontally across all clients participating in the process~\cite{9148862, wang2020local,castiglia2020multi,abad2020hierarchical}.
In horizontal hierarchical federated learning, all clients update all the weights of the global model.
In contrast, in TDCD, the clients only contribute towards updating their own silo's subset of the global model weights.

Several works have focused on distributed training with vertical partitions in a federated setting. The authors in works~\cite{chen2020vafl, hardy2017private, yang2019parallel, wu2020privacy, feng2020multi, kang2020fedmvt} propose vertical federated learning algorithms for single-tier communication networks, but they do not use local iterations in the parties during training. In all of these works, each party communicates in each iteration of training, which is communication-wise expensive. 
A more recent work on vertical federated learning~\cite{liu2019communication} solved this problem by allowing clients to take multiple local steps of training before communicating to the hub.
\addresscomments{Another recent work~\cite{wang2020federated}
considers the problem of training a latent Dirichlet allocation model
in a cross-silo federated learning setting.}
However, all these works only addressed the case where all data in a silo is contained within a single client.
In this work, we consider vertical and horizontal partitions of the dataset simultaneously in the two tiers.
TDCD performs model training in such a multi-tiered system architecture by fusing horizontal and vertical learning approaches in a novel manner.

\addresscomments{Recent work by Wang et al.~\cite{wang2021efficient}
also considers the cross-silo setting. In their method,
the silos first exchange information to form augmented local datasets. They then perform
 horizontal federated learning to train a global model. 
This is in contrast to our system, where silos only store a subset of all features and 
must combine horizontal and vertical federated learning to effectively train a model.
}

\subsection{Paper Outline}
The rest of the paper is organized as follows. We describe our system model, data partitioning, 
and loss function in Section~\ref{sec.system_model}. In Section~\ref{sec.proposedalgorithm}, we present TDCD, followed by its convergence analysis in Section~\ref{sec.convanalysis}. The proofs of the theorem and supporting lemmas are deferred to Appendix~\ref{AppendixOne}. We provide an experimental evaluation of TDCD in Section~\ref{sec.expresults} and finally conclude in Section~\ref{sec:conclusion}.

\section{System Model and Problem Formulation}
\label{sec.system_model}

This section describes the system architecture, the allocation of the training data, and the loss function we seek to minimize.

\subsection{System Architecture and Training Data} \label{sec.sysarch_and_data}
We consider a decentralized system consisting of $N$ silos, shown Fig.~\ref{fig:system_model}.
Each silo consists of a hub and multiple clients connected to it in a hub-and-spoke fashion. Each silo $j$ has $K_j$ clients. The number of clients per silo may be unequal.
Our network model thus has two tiers as shown in Fig.~\ref{fig:system_model}: the top tier of hubs, shown in orange
and the bottom tier of clients in each silo, shown in gray. The hub of each silo can communicate with the hubs of all other silos. Thus, the hub communication network forms a complete graph.
The clients within a silo only communicate with that silo's hub.
 If we map this system model on the bank and insurance example in Section~\ref{sec:intro}, there are two silos, a banking silo corresponding to a banking holding company and an insurance silo corresponding to an insurance holding company. The clients in the banking silo are subsidiary banks of the banking holding company,
and similarly, for the insurance silo, the clients are different divisions of the insurance holding company.

\begin{figure*}[htpb]
\centering
\begin{subfigure}[h]{.48\linewidth}
\centering
    \includegraphics[width=\linewidth]{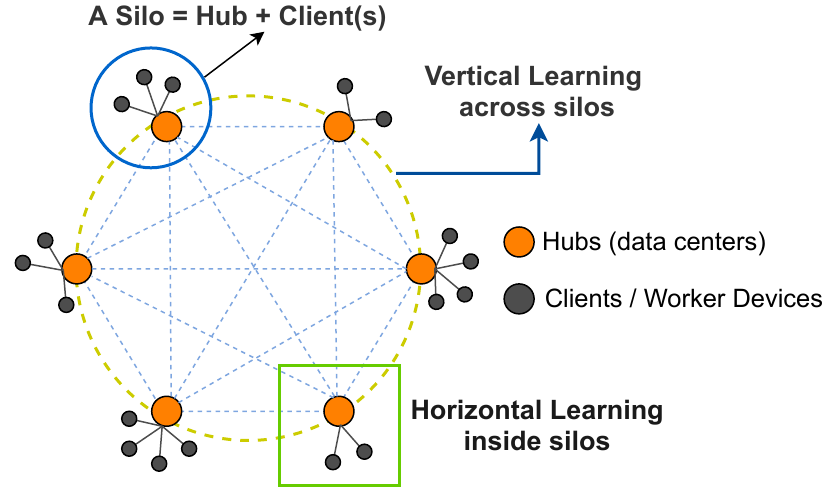}
    \caption{}
    \label{fig:system_model}
\end{subfigure}%
\begin{subfigure}[h]{.48\linewidth}
\centering
    \includegraphics[width=\linewidth]{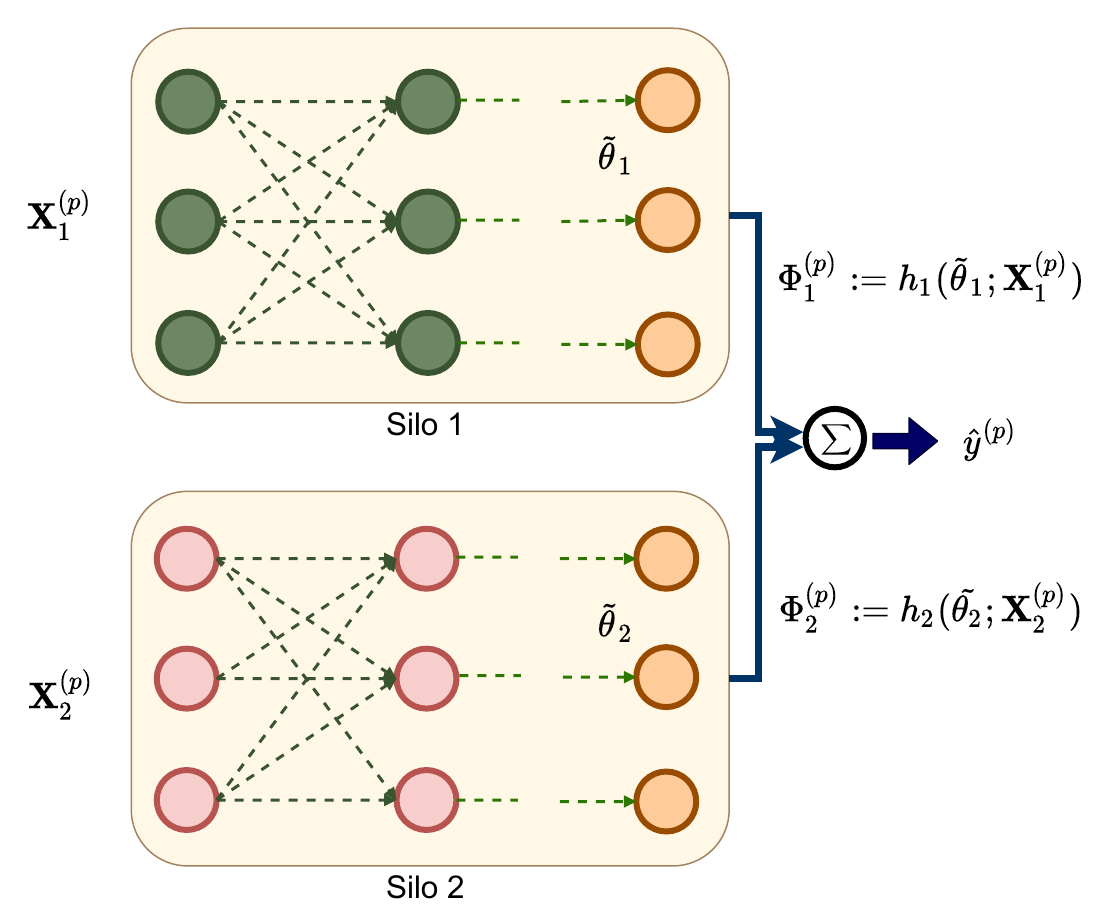}
    \caption{}
    \label{fig:global_system_model}
\end{subfigure}
\caption{(a) System architecture (b) An example of client models and the inputs and outputs to the system. The figure shows a client $k$ at each of the two silos and the concatenation of their output embedding to find the target $\tilde{y}$ corresponding to a single data sample $p$. }
\end{figure*}

\begin{figure*}[htpb]
\centering
\centering
    \includegraphics[width=0.5\linewidth]{./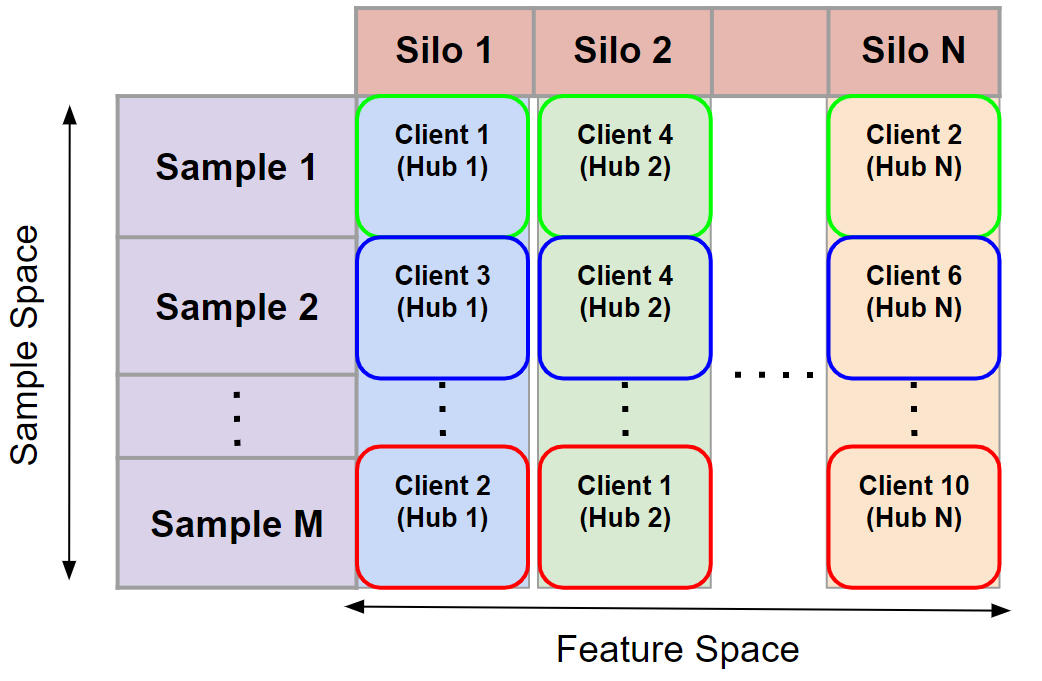}
    \caption{\addresscomments{
        Illustration of data partitioning among hubs and clients in silos. 
        Each silo $j$ owns a vertical partition of the full dataset $\BF{X}$. The features of each sample $i$ are distributed among clients of different silos. 
   Each client in each hub owns a subset of features of some sample IDs. 
}}
    \label{fig:data_partition}
\end{figure*}

The training data consists of $M$ sample IDs that are common across all silos.
Each sample has $D$ features.
The data is partitioned vertically across the $N$ silos so that each silo owns a disjoint subset of the features.
Within each silo, its data is partitioned horizontally across its clients so that each client holds data for the vertical partition of a subset of IDs.
\addresscomments{This data partitioning is illustrated in Figure~\ref{fig:data_partition}.}
More formally, we can express the entire training dataset by a matrix $\BF{X} \in \BB{R}^{M\times D}$.
We denote the set of data, i.e., the columns of $\BF{X}$, owned by silo $j$ by $\BF{X}_{j}$.
The vertical slice of data in silo $j$, $\BF{X}_j$, is, therefore, a matrix of dimension $M\times D_j$, where $D_j$ denotes the number of features held by silo $j$.
Within silo $j$, each client holds some rows of $\BF{X}_{j}$.
We denote the horizontal shard of $\BF{X}_{j}$ that is held by client $k$ in silo $j$ by the matrix $\BF{X}_{k,j}$.
Lastly, we denote a sample $p$ of the dataset (single row of $\BF{X}$) as $\BF{X}^{(p)}$, and $\BF{X}^{(p)}_{j}$ denotes the features of the $p$th sample corresponding to silo $j$.
We note that for each sample $p$, $\BF{X}_{j}^{(p)}$ is only held in a single client in silo $j$.
A sample ID in the banking and insurance example corresponds to an individual who is a common customer to both the banking and the insurance silos.
 \addresscomments{The features of the samples in the banking silo consist of balance, installments, debit history, credit history, etc., and the banking features for each customer are stored by the bank subsidiary that serves this customer. Similarly, the insurance silo has insurance-related features such as policy details, premium, age, dependent information, past claims, etc., and each customer's insurance features are stored by the insurance subsidiary that serves that customer.} 
We assume that the datasets do not change for the duration of the training.

We denote the label for a sample $p$ by $y^{(p)}$. We assume that each client stores a copy of the sample labels for each of its sample IDs. We denote the copy of sample labels for $\BF{X}_{k,j}$ by $\BF{y}_{k,j}$. In many low-risk scenarios, such as credit history, labels can be shared across silos. Labels can also be shared when the sample IDs themselves are anonymized. In scenarios where the labels are sensitive, for example, the case when the labels are only present in a single silo, it is still possible to extend TDCD to work without label sharing, albeit with extra communication steps, in an approach similar to ~\cite{liu2019communication}. We elaborate on this further in Section~\ref{sec.proposedalgorithm}, Remark~\ref{remark.2}.

\subsection{Model Formulation and Loss Function} \label{sec.problemformulation}

The objective is to train a global model $\gm$ that can be decomposed into $N$ coordinate block partitions, each corresponding to the model parameters assigned to a silo:
\begin{align}
\gm = [ \gm_{1}^{\intercal}, \ldots, \gm_{N}^{\intercal}]^{\intercal} \label{eqn.definition_of_gobal_model}
\end{align}
where $\gmj$ is the block of coordinate partition for silo $j$. $\gm$ could be weights of a linear model such as linear or logistic regression, for example, or could be trainable weights of a neural network.
Let us represent the function computed by a silo $j$ as $h_j$, where $h_j$ is a map from the higher dimensional input space $\BF{X}_j$ to the lower-dimensional embedding space $\Phi_j$ parameterized by $\gm_j$. 
We denote the embedding that is output by silo $j$ for a sample $p$ by $\Phi_j^{(p)} \eqdef h_j(\gm_j;\BF{X}_j^{(p)})$.

In Fig.~\ref{fig:global_system_model} we show an example of a global model expressed by Equation~\ref{eqn.definition_of_gobal_model}. In this example, the global model is a neural network (e.g., a CNN or an LSTM) partitioned across Silo 1 and Silo 2.
Each partition of the global model is also a neural network
and each client in a silo $j$ holds a copy of the neural network architecture represented by $h_j$.
Each silo $j$ is responsible for updating its partition of $\gm$, i.e., $\gm_j$ in the training algorithm.
A client can independently compute the embedding $\Phi_{j}^{(p)}$ if $p$ is a sample ID contained in that client.

The goal of the training algorithm is to minimize an objective function with the following structure:
\begin{align}
& \min_{\gm} \Lc (\gm,\BF{\BF{X;y}}) \eqdef \frac{1}{M} \sum\limits_{p=1}^M \ell(\Phi_{1}^{(p)}; \Phi_{2}^{(p)}; \cdots \Phi_{N}^{(p)}; y^{(p)})
\end{align}
where $\ell$ is the loss corresponding to a data sample $p$ with features $\BF{X}^{(p)}$.
Similar to the work in~\cite{liu2019communication}, our system architecture considers the following form for $\ell$:
\begin{align}
\ell (\Phi_{1}^{(p)}; \Phi_{2}^{(p)}, \cdots, \Phi_{N}^{(p)}; y^{(p)}) \eqdef \ell \left( \sum\limits_{j=1}^N h_j(\gm_j;\BF{X}_j^{(p)});y^{(p)} \right). \label{eqn.loss_separable}
\end{align}
The objective function $\Lc$ can be non-convex.
Since $\ell$ is a function of $\gm$, consequently, $\Lc$ is a function of $\gm$. 
When the context is clear, we drop $\BF{X}, \BF{y}$ from $\Lc(\cdotp)$ to simplify the notation.

\section{Proposed Algorithm}
\label{sec.proposedalgorithm}

\begin{table}[t]
\centering
\caption{{Description of all the variables used in Algorithm~\ref{alg:algorithm}.}}
\resizebox{\textwidth}{!}{%
\begin{tabular}{|l|l|}
\hline
\textbf{Notation} & \textbf{Definition}                                                                                                                                                  \\ \hline
N                  & Number of silos / number of vertical partitions.                                                                                                                      \\ \hline
$K_j$              & Number of clients in silo $j$.                                                                                                                                        \\ \hline
$\BF{X}_j^{(p)}, y^{(p)}$  & Features of $p$th sample held by in silo $j$ and corresponding target label.
\\ \hline
$\Lc$              & Objective function.
\\ \hline
$l$                & Loss function.                                                                                                                       \\ \hline
$h_j$             & Function computed by the silo $j$. Can be represented by a neural network.
\\ \hline
$\gm$              & Global model parameters.                                                                                                                                                         \\ \hline
$\gmj$             & Block of global model  parameters corresponding to silo $j$, maintained by hub $j$.                                                                                                                       \\ \hline
$\tjk^t$   & \begin{tabular}[c]{@{}l@{}}Local version of the weights $\tilde{\theta}_j^t$ each\\ client in hub $j$ updates in iteration $t$. $\gmj^t=\frac{1}{K_j} \sum\limits_{k=1}^{K_j} \left [\tjk^{t}\right]$. \end{tabular}
\\ \hline
$\Phi_j^{(p)}$     & Intermediate information for the $j$th coordinate block for a single sample $p$.                                                                                      \\ \hline
$\zeta$            & Set of sample IDs forming a mini-batch. $\zeta^{t_0}$ denotes the set at iteration $t_0$.                                                                                                             
\\ \hline
$\zeta_{k,j}^{t_0}$& Subset of sample IDs from a mini-batch $\zeta^{t_0}$ held at $k$th client of silo $j$.                                                                                                              \\ \hline
$\eta$            & Learning rate.
\\ \hline
$\Phi_{j}^{\zeta^{t_0}}$            & Intermediate information of mini-batch $\zeta^{t_0}$ formed at hub $j$.                                                                                                              \\ \hline
$\Phi_{k,j}^{\zeta^{t_0}}$          & Subset of information from $\Phi_j^{\zeta^{t_0}}$ that is relevant to samples from $\zeta^{t_0}$ held by client $k$ of silo $j$.                                                                                                              \\ \hline
$\Phi_{-j}^{\zeta^{t_0}}$             & Intermediate information that is received by silo $j$ from all other silos for the mini-batch $\zeta^{t_0}$.
\\ \hline
$\Phi_{-k,j}^{\zeta^{t_0}}$               & Subset of information
from $\Phi_{-j}^{\zeta^{t_0}}$ that is relevant to samples from $\zeta^{t_0}$ held by client $k$ of silo $j$.
\\ \hline
$\gkj$            & Local stochastic partial derivative of the loss function in client $k$ of hub $j$.                                                                                                              \\ \hline
$\pi_{k,j}$            & A projection function such that $\proj{k,j}{\Phi_{-j}^{\zeta}} = \Phi_{-k,j}^{\zeta}$.                                                                                                               \\ \hline
$Q$                & Number of local iterations between each global communication round.
\\ \hline
\end{tabular}%
}\color{black}
\label{tab:variables}
\end{table}%

In this section, we present our Tiered Decentralized Coordinate Descent algorithm.
The pseudocode is given in Algorithm~\ref{alg:algorithm}.
The notation used in this section is summarized in Table~\ref{tab:variables}.

\begin{algorithm}[h]
    \caption{Tiered Decentralized Coordinate Descent (TDCD)}
    \begin{algorithmic}[1]
        \For {$t = 0, \ldots, T$}
            \If{$t=0$}
                \For {$j = 1, \ldots, N$ silos \emph{in parallel}}
                    \State Initialize hub $j$ model: $\gmj^{0}$ \label{line.hub_init}
                \EndFor
            \EndIf
            \If{$t\Mod{Q}=0$}
                \For {$j = 1, \ldots, N$ silos \emph{in parallel}}
                    \State Randomly select a set of sample IDs: $\zeta^{t_0}$. \label{line.pickmini-batch}
                    \For {$k = 1,\ldots, K_j$ clients \emph{in parallel}} \label{line:start_updates_from_clients_client}
                        \State Hub $j$ sends copy of $\gmj^{t}$, $\zeta^{t_0}$ to client
                        \State Client $k$ initializes local model: $\tjk^{t} \leftarrow \gmj^{t}$ \label{line:client_replace_local_model}
                        \State Client $k$ calculates intermediate information $\Phi_{k,j}^{\zeta^{t_0}}$ and sends it to hub $j$ \label{line:send_updates_from_clients_client}
                    \EndFor \label{line:end_updates_from_clients_client}
                \EndFor

                \For {$j = 1, \ldots, N$ silos \emph{in parallel}} \label{line:start_hubs_aggregate_info}
                 \State Hub $j$ forms $\Phi_{j}^{\zeta^{t_0}}  \gets \cup_{k=1}^{K_j} \{\Phi_{k,j}^{\zeta^{t_0}}\}$
                    \State All hubs exchange $\Phi_{j}^{\zeta^{t_0}}, j=1,\ldots, N$ \label{line:broadcast_hubs}
                    \State Hub $j$ calculates $\Phi_{-j}^{\zeta^{t_0}} \leftarrow \sum_{l=1,l\neq j}^N \Phi_{l}^{\zeta^{t_0}}$
                              \EndFor \label{line:end_hubs_aggregate_info}

                \For {$j = 1, \ldots, N$ silos \emph{in parallel}}

                            \State For each client $k$, hub $j$ calculates $\Phi_{-k,j}^{\zeta^{t_0}} = \proj{k,j}{\Phi_{-j}^{\zeta^{t_0}}}$ and sends it to client $k$ \label{line.projection}
                \EndFor \label{line:decen_stop}
            \EndIf

            \For {$j= 1,\ldots, N$ silos \emph{in parallel}} \label{line:dc_train_start}
                \For {$k = 1, \ldots, K_j$ clients \emph{in parallel}} \label{line:GD_start}
                    \State $\tjk^{t+1}\leftarrow \tjk^{t} - \eta g_{k,j}(\Phi_{-k,j}^{\zeta^{t_0}},\Phi_{k,j}^{\zeta^{t_0}};\zeta^{t_0}_{k,j})$ \text{(See Eqn.~\ref{eqn.partial_derivative_def})} \label{line:local_gradient_update}
                \EndFor \label{line:GD_stop}
            \EndFor  \label{line:dc_train_stop}
            \If{$(t+1)\Mod{Q}=0$}
                \For {$j = 1, \ldots, N$ silos \emph{in parallel}}
                    \State Hub $j$ computes $\gmj^{t+1} \leftarrow \frac{1}{K_j} \sum_{k=1}^{K_j} \tjk^t$ \label{line:federated_avg} 
                \EndFor
            \EndIf
     \EndFor
     \end{algorithmic}
    \label{alg:algorithm}
\end{algorithm}

In Algorithm~\ref{alg:algorithm}, in iteration $t=0$ (line~\ref{line.hub_init}), the hub in each silo initializes its model weights $\gmj^{0}$.
Then, in iteration $t=0$, and every $Q$th iteration thereafter,
the hubs first select a set of sample IDs $\zeta^{t_0}$, 
samples, randomly drawn from the global dataset $\BF{X}$, to form a mini-batch. 
This selection can be achieved, for example, via pseudo-random number generators at each silo that are initialized with the same seed.
This step is described in line~\ref{line.pickmini-batch}.
In lines~\ref{line:start_updates_from_clients_client}-\ref{line:end_updates_from_clients_client}, each hub $j$ first sends a copy of its weights $\gmj^t$ and the sample IDs $\zeta^{t_0}$ 
to the clients in its silo.
The clients replace their local model weights with the updated weights received from the hub, in line~\ref{line:client_replace_local_model}. We define $\tjk^t$ as the local version of the weights $\gmj^t$ that client $k$ updates.
Next, TDCD computes the information needed for each client to calculate its local stochastic partial derivatives.
We call the embedding $\Phi_j^{(p)}$ the \emph{intermediate information} for the $j$th coordinate block for a single sample $p$.
In TDCD, to obtain the set of intermediate information for a single mini-batch $\zeta$, client $k$ in silo $j$ computes the set of information $\Phi_{k,j}^{\zeta} = \{ \Phi_{j}^{(p)}\}_{p \in \zeta}$, for example, by executing forward propagation through its local copy of the neural network $h_j$. 
Let $\zeta_{k,j}^{t_0}$ denote the subset of sample IDs from $\zeta^{t_0}$ held at client $k$ of silo $j$.   
In line~\ref{line:send_updates_from_clients_client}, each client $j$ computes the intermediate information corresponding to the sample IDs in $\zeta_{k,j}^{t_0}$. We denote this by
$\Phi_{k,j}^{\zeta^{t_0}}$. The client then sends this information to its hub.

In the next step between lines~\ref{line:start_hubs_aggregate_info}-\ref{line:end_hubs_aggregate_info}, each hub gathers the intermediate information from its clients.
Each hub computes the union over the sets of updates $\{\Phi_{k,j}^{\zeta^{t_0}}\}$ from each of its clients to obtain $\Phi_{j}^{\zeta^{t_0}}$.
This information must be propagated to the clients in the other silos so that they can compute their stochastic partial derivatives for the mini-batch $\zeta^{t_0}$.
Each hub $j$ broadcasts $\Phi_j^{\zeta^{t_0}}$ to all other hubs in line~\ref{line:broadcast_hubs}.
For hub $j$, we denote the intermediate information obtained from other hubs for sample $p$ by $\Phi_{-j}^{(p)} \eqdef \sum_{l=1, l\neq j}^N \Phi_{j}^{(p)}$. Let $\Phi_{-j}^{\zeta^{t_0}}$ denote the entire intermediate information that is received by silo $j$ from all other silos for the set of sample IDs in $\zeta^{t_0}$. 
We similarly denote $\Phi_{-k,j}^{\zeta^{t_0}}$ as the subset of information
from $\Phi_{-j}^{\zeta^{t_0}}$ corresponding to $\zeta^{t_0}_{k,j}$.
We define projection functions $\pi_{k,j}$ such that $\proj{k,j}{\Phi_{-j}^{\zeta^{t_0}}} = \Phi_{-k,j}^{\zeta^{t_0}}$. 
Hub $j$ sends $\Phi_{-k,j}^{\zeta}$ to each clients $k$ in line~\ref{line.projection}.
Alternatively, a hub can send the entire $\Phi_{-j}^{\zeta}$ to the client, and the client can extract the rows corresponding to its own samples.

Finally, after receiving this intermediate information, each client $k$ of silo $j$ can calculate its local stochastic partial derivative for 
its subset of $\zeta^{t_0}$ denoted by $\zeta^{t_0}_{k,j}$.
Let $g_{k,j}$ denote the local stochastic partial derivative of the loss function with respect to the block of coordinates $j$ in client $k$:
\begin{align}
g_{k,j}(\Phi_{-k,j}^{\zeta^{t_0}},\Phi_{k,j}^{\zeta^{t_0}};\zeta^{t_0}_{k,j}) \eqdef \frac{1}{|\zeta^{t_0}_{k,j}|} \sum_{p \in \zeta^{t_0}_{k,j}} \nabla_{\tjk} \ell (\Phi_{1}^{(p)}; \Phi_{2}^{(p)}; \cdots \Phi_{N}^{(p)}; y^{(p)}). \label{eqn.partial_derivative_def}
\end{align}

To train its local model, each client executes $Q$ local stochastic gradient steps on the feature block corresponding to its silo (lines~\ref{line:dc_train_start}-\ref{line:dc_train_stop}):
\begin{align}
\tjk^{t+1}= \tjk^{t} - \eta g_{k,j}(\Phi_{-k,j}^{\zeta^{t_0}},\Phi_{k,j}^{\zeta^{t_0}};\zeta^{t_0}_{k,j}) \label{eq.gradient_step_local_client_vfl} .
\end{align}
Here, $\eta$ is the learning rate.
Note that $t_0$ represents the most recent iteration $t_0 < t$ in which the client received intermediate information from the other silos. 
A client uses the same subset of sample IDs $\zeta^{t_0}_{k,j}$ for all $Q$ iterations.
During local training, when a client executes (\ref{eq.gradient_step_local_client_vfl}), it recalculates its own local intermediate information $\Phi_{k,j}^{\zeta^{t_0}}$ using the most up to date value of its set of coordinates $\tjk^t$. However, it reuses the intermediate information $\Phi_{-k,j}^{\zeta^{t_0}}$ that it received from the other silos throughout the $Q$ iterations of local training. Therefore the information about the rest of the coordinates after iteration $t_0+1$ is effectively stale.

Finally, after $Q$ local iterations, in line~\ref{line:federated_avg}, the hubs average the model weights from the clients, where hub $j$ updates the $j$th block coordinates of global model weights as $\gmj^{t+1}=\frac{1}{K_j} \sum\limits_{k=1}^{K_j} \left [\tjk^{t}\right]$. This step is similar to horizontal federated learning.
The entire process from start to the completion of $Q$ iterations of parallel training inside silos comprises a \textit{communication round}, and this communication round is repeated until convergence.

In TDCD, clients only communicate their local model weights and intermediate information every $Q$ iterations. This contrasts to distributed SGD algorithms, where the clients need to synchronize with a coordinating hub
in each iteration. This allows TDCD to save bandwidth by increasing $Q$, especially when the size of the model or intermediate information is large.
We explore how $Q$ impacts the convergence of TDCD in the next section.

\begin{remark}
Throughout the training process, each hub maintains the block of model parameters for its silo.
To form the full global model, each hub's model definition and parameters can be copied to a central server for downstream inference purposes. This can be done periodically for model checkpointing or at the end of the training procedure.
\end{remark}

\begin{remark} \label{remark.2}
We assume that every client has the labels for all of its samples.
However, in cases when the labels are sensitive and sharing the labels for a sample ID across silos is not feasible, the label information for a sample ID may only be present in a client in one silo. In this case, we could modify our algorithm in the following way, similar to~\cite{liu2019communication}: the clients in all silos send the intermediate information for a sample to the client that has the label for the sample. The client with the label information calculates the loss and the partial derivatives, which can then be propagated back to the other clients for use in the local gradient steps. This modification would significantly increase the communication cost of the algorithm. We note that the modified algorithm is mathematically equivalent to TDCD, albeit with a higher communication cost. Hence, the convergence analysis given in Section~\ref{sec.convanalysis} can be trivially extended to this case. 
\end{remark}


\section{Convergence Analysis}
\label{sec.convanalysis}
In this section, we provide the convergence analysis of the TDCD algorithm.
We first make the following set of common assumptions~\cite{bottou2018optimization, yu2019parallel, liu2019communication} about the loss function $\Lc$ and the local stochastic partial derivatives $\gkj$ at each client.
\begin{assumptionvfl} \label{assumption.lipschitz}
    The gradient of the loss function is Lipschitz continuous with constant $L$; further, the local stochastic partial derivative of $\BC{L}$, with respect to each coordinate block $j$, denoted by $\gkj(\cdot)$, is Lipschitz continuous with constant $L_{k,j}$, i.e., for all $\bm{\theta}_1, \bm{\theta}_2 \in \mathbb{R}^D$
   \begin{align}
     &\textcolor{black}{\lVert \nabla \Lc(\bm{\theta}_1) - \nabla \Lc(\bm{\theta}_2) \parallel \leq L\parallel \bm{\theta}_1 - \bm{\theta}_2\rVert} \\
    & \| \gkj (\bm{\theta}_1) - \gkj (\bm{\theta}_2) \| \leq L_{k,j} \| \bm{\theta}_1 - \bm{\theta}_2\|.
    \end{align}
   \end{assumptionvfl}
\begin{assumptionvfl}\label{assumption.lowerbound}
The function $\Lc$ is lower bounded so that for all $\bm{\theta} \in \mathbb{R}^D, \Lc(\bm{\theta}) \geq \Lc_{inf}$.
\end{assumptionvfl}

\begin{assumptionvfl}\label{assumption.unbiased}
Let $\zeta$ be a mini-batch drawn uniformly at random from all samples.
Then, for all $\bm{\theta} \in \mathbb{R}^D$,
\begin{align}
    &\ex_{\zeta} \left[\gkj(\Phi_{-j}^{\zeta},\Phi_{j}^{\zeta};\zeta) \right]= \pdj \Lc(\bm{\theta}) \label{assumption.3a}\\
      &\ex_{\zeta} \left[\| \gkj(\Phi_{-j}^{\zeta},\Phi_{j}^{\zeta};\zeta) - \pdj \Lc(\bm{\theta}) \|^2 \right] \leq \sigma^2_j \label{assumption.3b}
\end{align}
where the expectation is over the choice of mini-batch $\zeta$, and $\pdj \Lc (\cdot)$ denotes the partial derivative of $\BC{L}$, with respect to coordinate block $j$. Recall $\Phi_{j}^{\zeta}= h_j(\bm{\theta}_j;\BF{X}_j^{\zeta}) $ for a given $\theta$ and hence the local stochastic partial derivatives $g_{k,j}$ are functions of ${\bm{\theta}}$.
\end{assumptionvfl}

We also use the following definition:
\begin{align}
L_{max} \eqdef \max_{k} L_{k,j}. ~
\end{align}

Our analysis is based on the evolution of the global model $\gm $ following Algorithm \ref{alg:algorithm}. It is noted that the components of $\gm$, $\gmj$ are realized every $Q$ iterations, but for theoretical analysis we study the evolution of a virtual $\gmj$ at each iteration with $\gmj^t= \frac{1}{K_j} \sum_{k=1}^{K_j} \tjk^t$.

We further define the following two quantities :
\begin{align}
& \Gj^t =  \frac{1}{K_j} \sum_{k=1}^{K_j} \gkj(\Phi_{-k,j}^{\zeta^{t}},\Phi_{k,j}^{\zeta^{t}};\zeta^t_{k,j}) \\
&\bm{G}^t = [(\bm{G}_{(1)}^t)^T, \ldots, ({\bm{G}_{(N)}}^t)^T]^T.
\end{align}
We can then write the evolution of the global model as follows,
\begin{align}
\gm^{t+1} = \gm^t - \eta {\bm{G}}^t. \label{eqn.original}
\end{align}

We now provide the main theoretical result of this paper. We observe that since each mini-batch is used for $Q$ local iterations, this reuse leads to a bias in some of the stochastic partial derivatives. Nevertheless, with some care, it is possible to prove the algorithm convergence.
The proof of the theorem is deferred to Appendix~\ref{AppendixOne}
\begin{theorem} \label{thm.main_theorem}
Under Assumptions \ref{assumption.lipschitz}, \ref{assumption.lowerbound}, \ref{assumption.unbiased}, when the learning rate $\eta$ satisfies the following condition:
\begin{align}
0 \leq \eta \leq \frac{1}{8Q \max (L, L_{max})} 
\end{align}
the expected averaged squared gradients of $\Lc$ over $T=QR>0$ local iterations satisfies the following bound: 
\begin{align}
\frac{1}{R}\sum\limits_{t_0=0}^{R-1} \ex \left[\| \nabla \Lc(\gm^{t_0})\|^2 \right] \leq \frac{4 (\Lc(\gm^{0}) - \ex [ \Lc(\gm^{T})])}{\eta QR} +  4 \left (\eta L Q + 4\eta^2 Q^2 L^{ 2}_{max} + 8\eta^3 Q^3 L L^{ 2}_{max} \right) \sum\limits_{j=1}^N \sigma_j^2. \label{eqn.final_convergence_rate}
\end{align}
\end{theorem}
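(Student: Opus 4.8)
The plan is to prove \eqref{eqn.final_convergence_rate} one communication round at a time and then telescope over the $R$ rounds. Fix a round starting at iteration $t_0$ (a multiple of $Q$). As the text notes, averaging the clients' per-step updates within a silo yields the \emph{exact} virtual recursion $\gm^{t+1} = \gm^t - \eta\bm{G}^t$, so $\gm^{t_0+Q} = \gm^{t_0} - \eta\sum_{\tau=t_0}^{t_0+Q-1}\bm{G}^\tau$. Conditioning on $\condY$ (all randomness before the round, which fixes $\gm^{t_0}$), the only remaining randomness inside the round is the single mini-batch $\zeta^{t_0}$, over which I take $\extzz$. Applying $L$-smoothness of $\Lc$ (Assumption~\ref{assumption.lipschitz}) between $\gm^{t_0}$ and $\gm^{t_0+Q}$ gives
\[ \extzz\!\left[\Lc(\gm^{t_0+Q})\right] \;\le\; \Lc(\gm^{t_0}) - \eta\Big\langle \nabla\Lc(\gm^{t_0}),\, \extzz\!\Big[\textstyle\sum_{\tau=t_0}^{t_0+Q-1}\bm{G}^\tau\Big]\Big\rangle + \frac{\eta^2 L}{2}\,\extzz\!\Big[\big\|\textstyle\sum_{\tau=t_0}^{t_0+Q-1}\bm{G}^\tau\big\|^2\Big], \]
so it remains to lower-bound the inner-product term and upper-bound the second-moment term.

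The difficulty, highlighted in the statement, is that $\zeta^{t_0}$ is reused for all $Q$ inner steps, so $\bm{G}^\tau$ is a \emph{biased} estimate of $\nabla\Lc$ once $\tau>t_0$. The key observation is that at the first inner step $\tjk^{t_0}=\gmj^{t_0}$ is independent of $\zeta^{t_0}$, and the stale cross-silo embeddings $\Phi_{-k,j}^{\zeta^{t_0}}$ are precisely those generated by $\gm^{t_0}$; hence by Assumption~\ref{assumption.unbiased}, $\extzz[\bm{G}^{t_0}] = \nabla\Lc(\gm^{t_0})$ and $\extzz\|\Gj^{t_0}-\pdj\Lc(\gm^{t_0})\|^2 \le \sigma_j^2$. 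For $\tau>t_0$ I write $\bm{G}^\tau = \bm{G}^{t_0} + (\bm{G}^\tau-\bm{G}^{t_0})$ and control the correction termwise. Since client $k$ of silo $j$ recomputes only its own embedding and reuses the $t_0$-embeddings of the other silos, $\gkj^\tau$ and $\gkj^{t_0}$ are stochastic partial derivatives on the same mini-batch at two points differing only in block $j$ by the client drift $\tjk^\tau-\gmj^{t_0}$; Assumption~\ref{assumption.lipschitz} then gives the pointwise bound $\|\gkj^\tau-\gkj^{t_0}\| \le L_{k,j}\|\tjk^\tau-\gmj^{t_0}\| \le L_{max}\|\tjk^\tau-\gmj^{t_0}\|$. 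This one inequality simultaneously absorbs the horizontal (local-step drift) and vertical (staleness) perturbations.

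It therefore remains to bound the client drift $\|\tjk^\tau-\gmj^{t_0}\| = \eta\|\sum_{s=t_0}^{\tau-1}\gkj^s\|$, for which -- since no uniform gradient bound is assumed -- I would run a bootstrap argument: from $\|\gkj^s\| \le \|\gkj^{t_0}\| + L_{max}\eta\sum_{s'<s}\|\gkj^{s'}\|$, $\extzz\|\gkj^{t_0}\|^2 \le \sigma_j^2 + \|\pdj\Lc(\gm^{t_0})\|^2$, and the step-size condition $\eta\le\tfrac{1}{8Q\max(L,L_{max})}$ (which makes the resulting discrete Gr\"onwall sum converge), one gets a drift lemma $\extzz\|\tjk^\tau-\gmj^{t_0}\|^2 \le c\,\eta^2 Q^2\big(\sigma_j^2 + \|\pdj\Lc(\gm^{t_0})\|^2\big)$ with an absolute constant $c$, and similarly $\extzz\|\bm{G}^\tau\|^2$ bounded by variance, drift and $\|\nabla\Lc(\gm^{t_0})\|^2$ terms. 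Feeding these into the displayed inequality, using Cauchy--Schwarz for $\|\sum_\tau\bm{G}^\tau\|^2\le Q\sum_\tau\|\bm{G}^\tau\|^2$ and Young's inequality on the cross term, and invoking the step-size condition once more so that the coefficient of $\|\nabla\Lc(\gm^{t_0})\|^2$ stays at $-\tfrac{\eta Q}{4}$, yields the one-round descent bound
\[ \extzz\!\left[\Lc(\gm^{t_0+Q})\right] \;\le\; \Lc(\gm^{t_0}) - \frac{\eta Q}{4}\,\|\nabla\Lc(\gm^{t_0})\|^2 + \eta Q\big(\eta L Q + 4\eta^2 Q^2 L_{max}^2 + 8\eta^3 Q^3 L L_{max}^2\big)\sum_{j=1}^N \sigma_j^2 . \]

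Finally I take total expectation, sum over the $R$ rounds (round $r$ starting at $t_0=rQ$) so the $\Lc$ terms telescope, use $\Lc(\gm^T)\ge\Lc_{inf}$ where needed (the statement simply keeps $\ex[\Lc(\gm^T)]$), divide through by $\tfrac{\eta Q R}{4}$, and rearrange to obtain \eqref{eqn.final_convergence_rate}. I expect the main obstacle to be the drift lemma: without a bounded-gradient assumption, the drift, the bias of $\bm{G}^\tau$, and $\ex\|\bm{G}^\tau\|^2$ must be bounded self-consistently, and tracking the constants accumulated through the nested Young/Cauchy--Schwarz steps tightly enough that they collapse into exactly the stated coefficients -- which is precisely what forces the factor $8Q\max(L,L_{max})$ in the step-size condition -- is the delicate part.
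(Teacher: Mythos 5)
Your proposal is correct and follows essentially the same route as the paper: the descent lemma applied across a full round of the virtual recursion $\gm^{t+1}=\gm^t-\eta\bm{G}^t$, the decomposition $\bm{G}^\tau=\bm{G}^{t_0}+(\bm{G}^\tau-\bm{G}^{t_0})$ exploiting unbiasedness only at $t_0$, and a recurrence/Gr\"onwall-type drift lemma (the paper's Lemma~2, which recurses directly on the gradient differences $\|\gkj(\ykj^t)-\gkj(\ykj^{t_0})\|^2$ with ratio $1+2/Q$ rather than on the iterate drift, a cosmetic difference) followed by the same Young/Cauchy--Schwarz bookkeeping and telescoping. Your one-round descent bound matches the paper's intermediate inequality exactly, so no substantive gap remains.
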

The convergence error in Theorem~\ref{thm.main_theorem} results from the parallel updates on the coordinate blocks, which depend on the number of vertical partitions $N$ via the summation over $\sigma_j$, as well the staleness of the intermediate information due to multiple local iterations, i.e., the value of $Q$.
With an increase in the number of vertical partitions $N$, the error term increases linearly. The error also depends on $Q$; however, all the terms containing $Q$ also contains the learning rate $\eta$ with the same exponent. Hence, in practice, if $Q$ is offset by a suitable learning rate $\eta$, then we can leverage multiple local iterations to achieve faster convergence in terms of wall clock time as we show in Section~\ref{sec.expresults}. Choosing a very small $\eta$ will decrease the convergence error, but it will but increase the first term on the right-hand side of (\ref{eqn.final_convergence_rate}), leading to slower convergence.

\begin{remark}
When $N=1$, i.e., there is only one silo, then the algorithm reduces to local SGD with horizontally partitioned data, and the convergence rate is similar to that obtained in~\cite{yu2019parallel} with respect to the orders of $Q, K_j, L$.
\end{remark}

\begin{remark}
When $K_j=1$, for $j=1,2,\ldots, N$, i.e., there exists only one client per silo, the algorithm reduces to vertical federated learning with a single tier, and the convergence rate is similar with respect to the orders of $Q, N, L$ to that obtained in \cite{liu2019communication}.
\end{remark}

\begin{remark}
If $\eta \propto \frac{1}{Q\sqrt{R}}$, then 
the (average) expected gradient squared norm converges with a rate of $\mathcal{O}\left(\frac{1}{\sqrt{R}}\right)$, where $R$ is the total number of communication rounds. 
\end{remark}


\section{Experimental Results}
\label{sec.expresults}

We verify the convergence properties of TDCD with respect to the different algorithm parameters of the system.

\subsection{Datasets and Models}

\textbf{CIFAR-10:}
We train a CNN model on the CIFAR-10 dataset~\cite{cifar10}.
CIFAR-10 is a set of $3\times32\times32$ pixels color images with
$50$,$000$ images in the training set and $10$,$000$ images in the test set. The goal is multi class classification with ten classes of images. 
We use $N=2$ for all the experiments and divide each CIFAR-10 image vertically into two parts in all three channels ($3\times32\times16$). Each silo trains a local ResNet18 model with a penultimate linear layer, and finally, we use a cross-entropy loss function that is parameterized by the outputs from all the silos. We assign the left ($3\times32\times16$) vertical partition of all images to one silo and the right vertical partition of all images to the other silo.
Comparing with the system model in Fig.~\ref{fig:global_system_model}, each $h_j$ of a silo is represented by the ResNet18 model described above.
\\
\textbf{MIMIC-III:}
We train an LSTM model using the MIMIC-III (Medical Information Mart for Intensive Care) dataset,~\cite{johnson2016mimic}, which consists of anonymized information of patients admitted to critical care units in a hospital. We follow the data processing steps from~\cite{Harutyunyan2019} to obtain $14,681$ training samples and $3,236$ test samples. Each sample consists of $48$ time steps corresponding to 48 hours, and each time step has 76 features. The goal is to predict in-hospital mortality as a binary classification task. 
For training, we split the data vertically along the feature axis. Each silo trains an LSTM model 
with a linear layer, and finally we use binary cross-entropy as class prediction output.
Comparing with the system model in Fig.~\ref{fig:global_system_model}, each $h_j$ of a silo is represented by the LSTM model described above.
\\
\addresscomments{\textbf{ModelNet40:}
We train a CNN model on the ModelNet40 dataset~\cite{modelnet}.
ModelNet40 is a set of images of CAD models of different objects.
For each CAD model, there are $12$ images from different camera views.
The dataset consists of $9$,$843$ CAD models in the training set and $2$,$468$ CAD models in the test set. 
The goal is multi-class classification with ten classes of objects. 
We use $N=12$ for all the experiments, assigning a single view of each CAD model to each silo.
Each silo trains a local ResNet18 model with a penultimate linear layer, and finally, 
we use a cross-entropy loss function that is parameterized by the outputs from all the silos. 
Comparing with the system model in Fig.~\ref{fig:global_system_model}, each 
$h_j$ of a silo is represented by the ResNet18 model described above.}

\begin{figure*}
\centering
\begin{subfigure}[b]{.48\linewidth}
      \centering
    \includegraphics[width=\linewidth]{./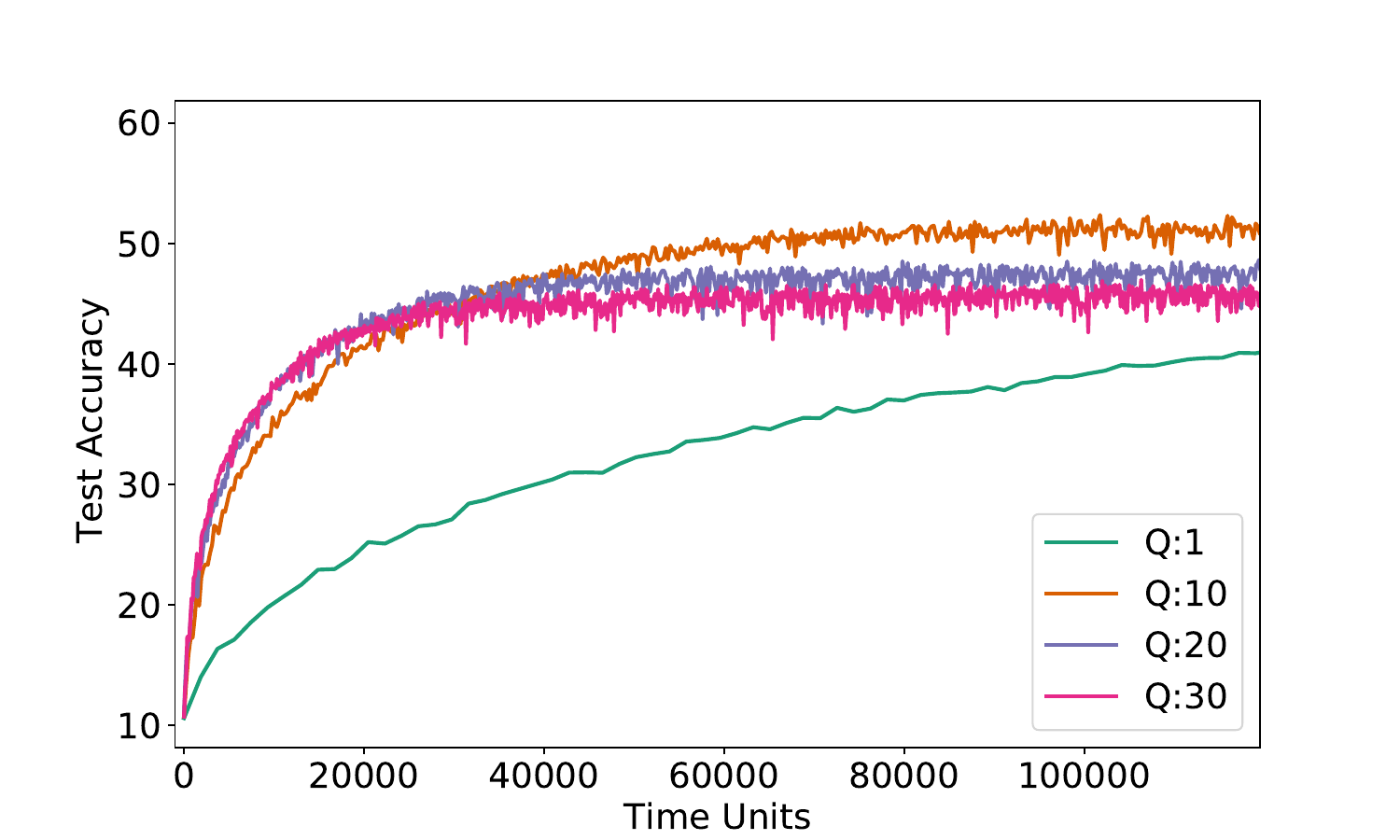}
    \caption{$t_{comm}=10,~t_{comp}=1,~K=50$.}
    \label{fig:cifar10_comm_vs_Q_10_K50}
     \end{subfigure}
\begin{subfigure}[b]{.48\linewidth}
      \centering
    \includegraphics[width=\linewidth]{./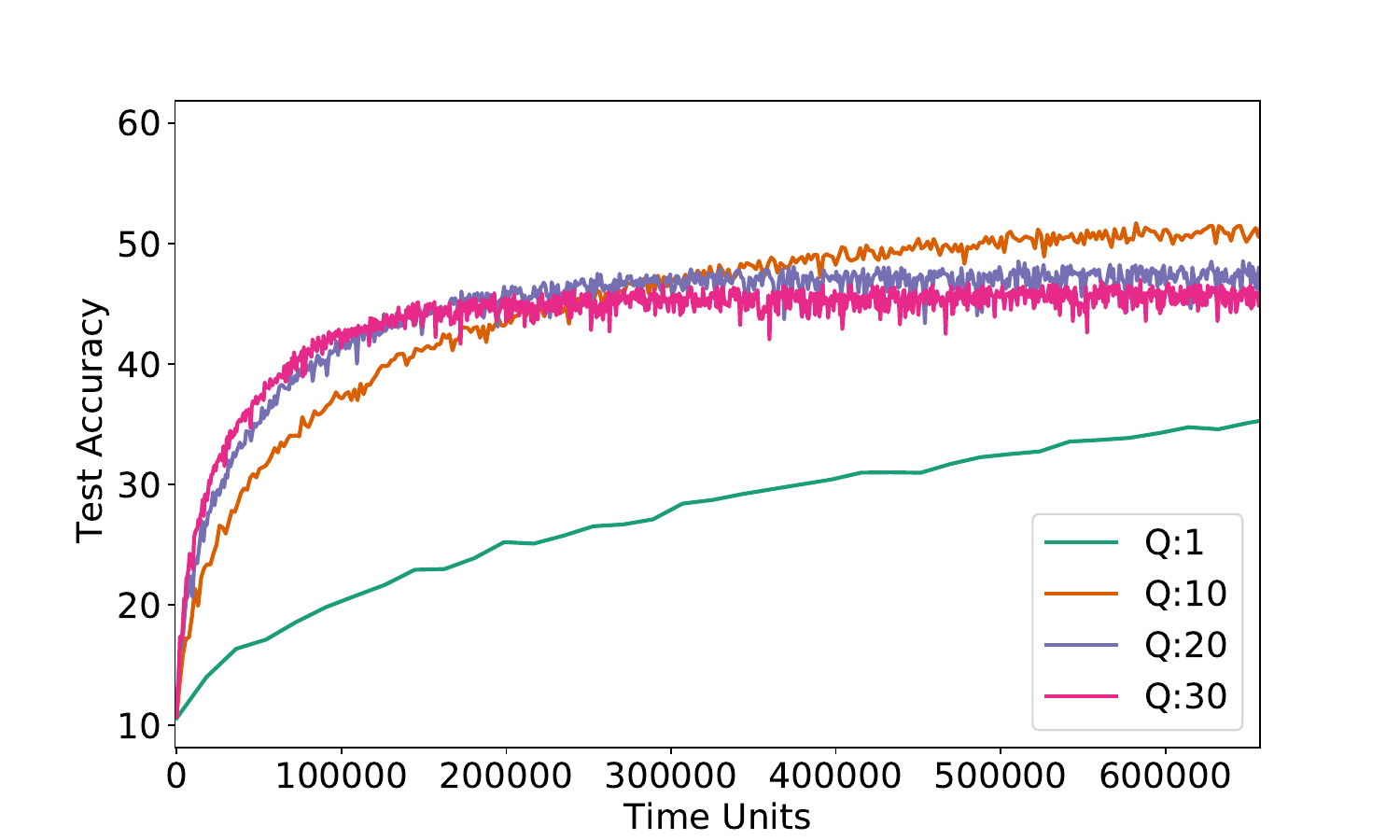}
    \caption{$t_{comm}=100,~t_{comp}=1,~K=50$.}
    \label{fig:cifar10_comm_vs_Q_100_K50}
     \end{subfigure}
\begin{subfigure}[b]{.48\linewidth}
      \centering
    \includegraphics[width=\linewidth]{./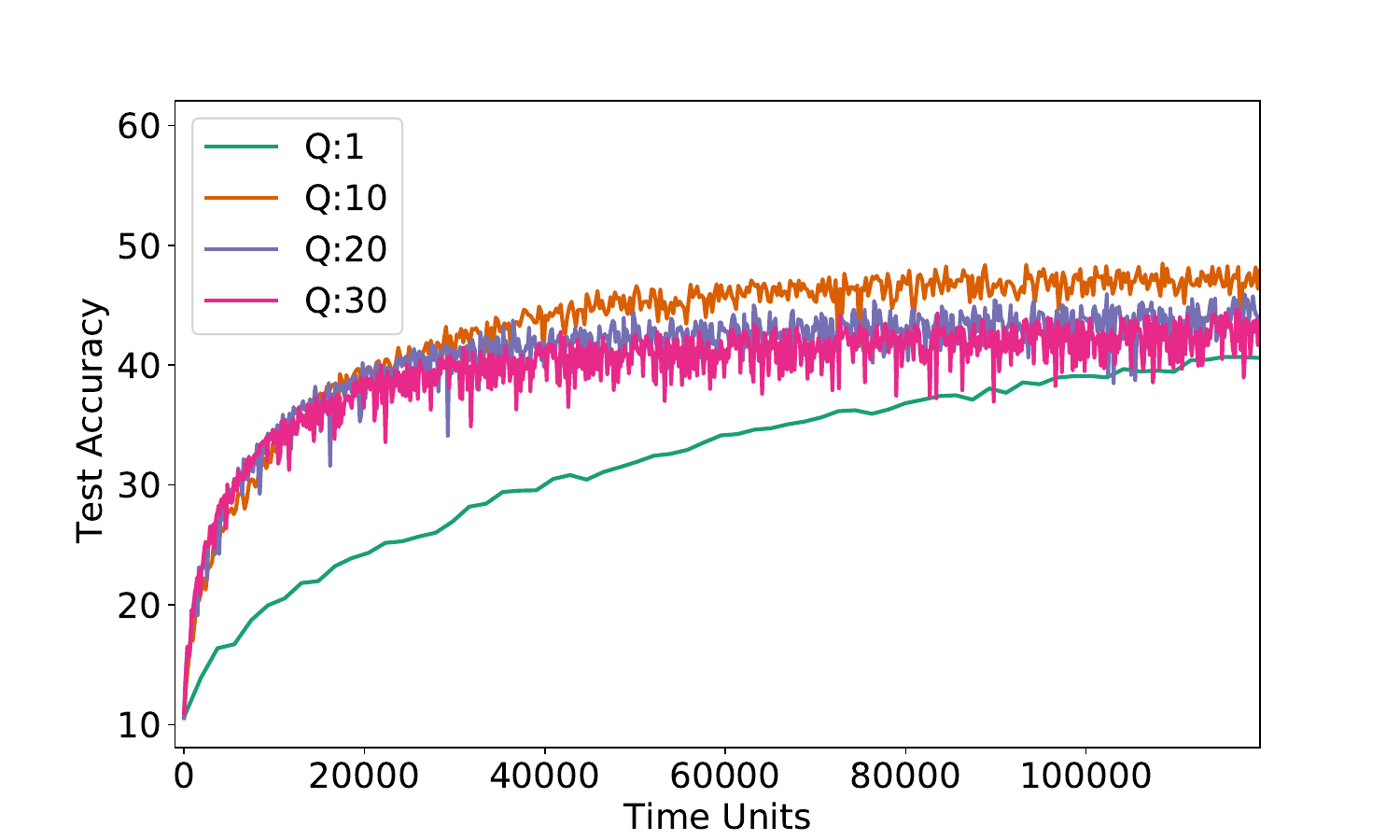}
    \caption{$t_{comm}=10,~t_{comp}=1,~K=100$.}
    \label{fig:cifar10_comm_vs_Q_10_K100}
     \end{subfigure}
\begin{subfigure}[b]{.48\linewidth}
      \centering
    \includegraphics[width=\linewidth]{./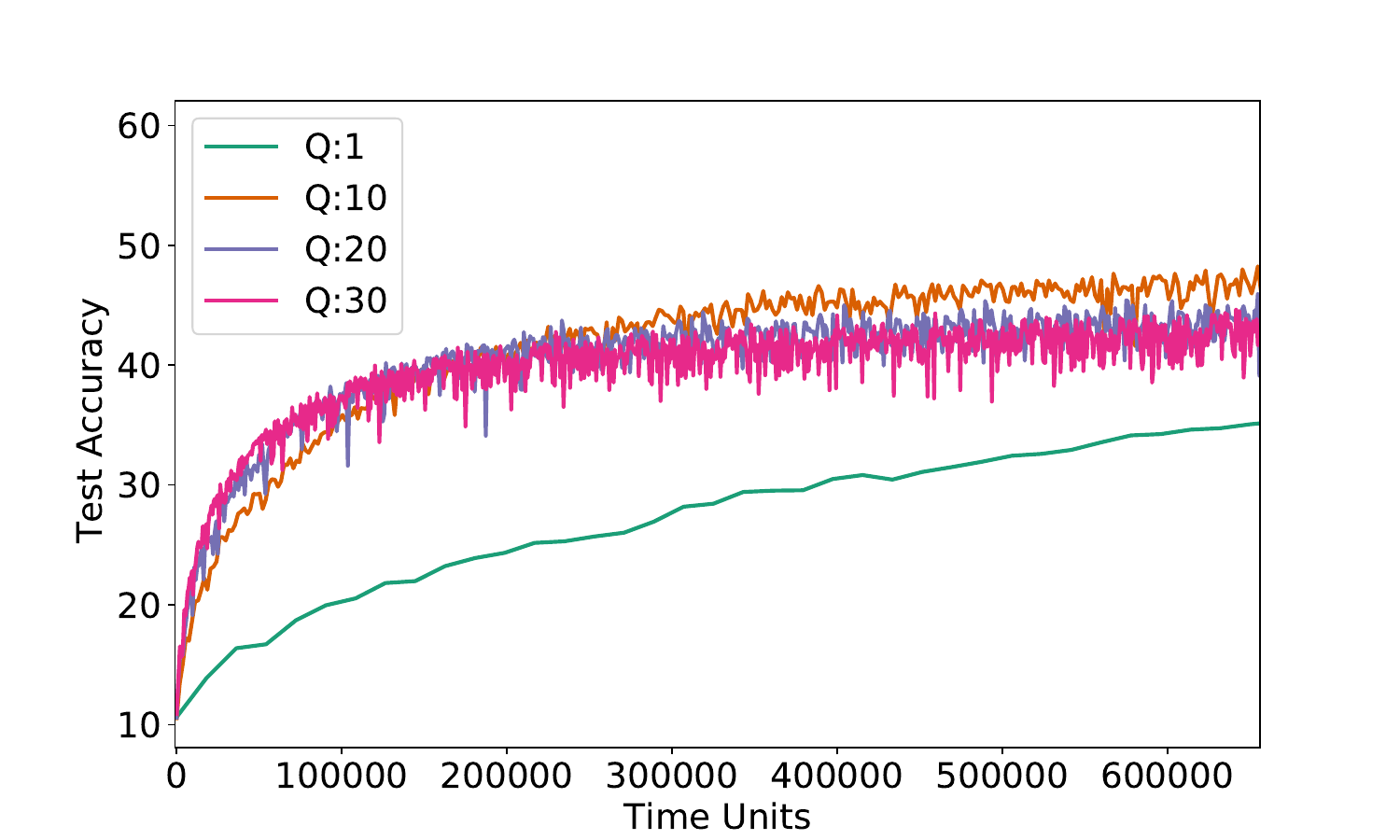}
    \caption{$t_{comm}=100,~t_{comp}=1,~K=100$.}
    \label{fig:cifar10_comm_vs_Q_100_K100}
     \end{subfigure}
\caption{Performance of TDCD on CIFAR-10. We show the test accuracy score vs. training time units $t_Q$.
$N=2$ in all four figures. $K=50$ in Figures (a) and (b), and  $K=100$ in Figures (c) and (d). In Figures (a) and (c), we use a lower ratio of communication vs. computation latency with $t_{comm}=10$. In Figures (b) and (d), we use a higher ratio of communication vs. computation latency with $t_{comm}=100$. $t_{comp}=1$ in all figures.}
     \label{fig.cifar10_commvsQgeneral}
\squeezeuppicture
\end{figure*}

\subsection{Experimental Setup} \label{subsec.tdcd_exp_setup}

\addresscomments{In our experiments, we simulate a multi-tier communication network
and train TDCD on the CIFAR-10, MIMIC-III, and ModelNet40 datasets\footnote{\addresscomments{The source code is available at \url{https://github.com/rpi-nsl/TDCD}.}}.}
In all figures, $N$ represents the number of silos (vertical partitions of the dataset), and $K_j$ represents the number of clients (horizontal partitions) in each silo.
For simplicity in our experiments, we consider that all the silos have the same number of clients $K_j=K$ for all $j$. Thus, there are $N \times K$ clients, in total, participating in the TDCD algorithm.
To distribute the training data among the clients in a silo, we generate a random permutation of the $M$ sample IDs. Each client in silo $j$ is  assigned a contiguous block of $\frac{M}{K}$ sample IDs from this permutation. 
\addresscomments{Across all experiments, we use a mini-batch size of 2000 for CIFAR-10 and MIMIC-III, 
and we use a mini-batch size of 320 for ModelNet40.}
In expectation, each client holds data for an equal fraction of each mini-batch.

The training loss is calculated using the global model $\gm$ and the full training data matrix. In CIFAR-10, we use the test accuracy metric calculated on the entire test dataset as generalization performance measure.
Further, since MIMIC-III is an imbalanced class dataset, with only 16\% positive sample, we use the F1 score as generalization performance measure on the test dataset. F1 score is the harmonic mean of the precision and recall performance of the global model $\gm$ calculated on the entire test dataset.
\addresscomments{For ModelNet40, we use top-$5$ test accuracy as our performance measure.
When using top-$5$ accuracy, a prediction is considered correct if any of the $5$ highest
probabilities of the model's output matches the correct class label.}

In each experiment, for each value of $Q$, we choose the learning rate using a grid search. For CIFAR-10 we search for a learning rate in the range [0.0001, 0.00001], for MIMIC-III we search in the range [0.1, 0.001], \addresscomments{and for ModelNet40 we search in the range [0.001, 0.00005].} For each $Q$ and $K$, we let TDCD train for 5,000 iterations for CIFAR-10, 10,000 iterations for MIMIC-III, \addresscomments{and 4,000 iterations for ModelNet40,} and pick the learning rate with the lowest training loss. 

Lastly, we study the performance of TDCD in terms of both the number of training iterations and simulated latency time units.
We denote the round trip communication latency between the hub and clients, and between hubs and one another, as $t_{comm}$ . We denote the local computation latency of each step of training by $t_{comp}=1$ time unit.
Since for each communication round, i.e., $Q$ local iterations, clients and hubs communicate twice, and hubs communicate with each other once, the total latency of one communication round can be simplified as $t_{Q} = 3 \times t_{comm} + Q \times t_{comp}$.
\addresscomments{In our experiments, we simulate different ratios between the computation time $t_{comp}$
and communication time $t_{comm}$ to explore the impact of lower and higher ratios of computation speed to communication latency.}

\subsection{Results}
Here, we describe our experimental results.

\subsubsection{Communication Efficiency} \label{sec.latency_vs_convergence}

In the first set of experiments, we study the training latency of TDCD with simulated latency time units.
Since in many communication networks, the communication latency between nodes dominates the computation latency inside a node, we study higher values of $t_{comm}$ compared to $t_{comp}$.
We study two cases, one with a low $t_{comm}$ of 10 units and one with a high $t_{comm}$ latency of 100 units, with $t_{comp}=1$ unit in both cases.

\begin{figure*}[t]
\centering
\begin{subfigure}[b]{.48\linewidth}
      \centering
    \includegraphics[width=\linewidth]{./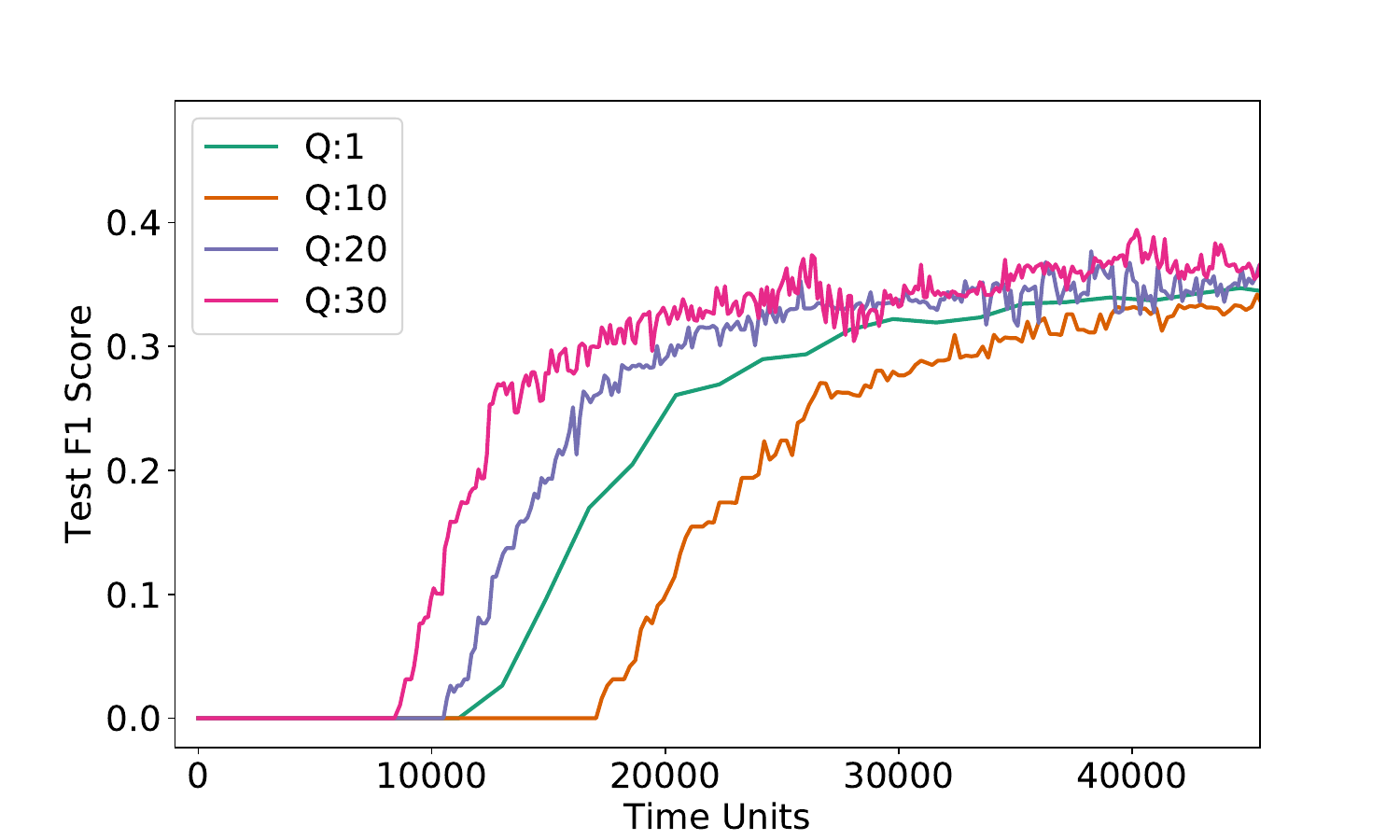}
    \caption{$t_{comm}=10,~t_{comp}=1,~K=20$.}
    \label{fig:mimic3_comm_vs_Q_10_K20}
     \end{subfigure}
\begin{subfigure}[b]{.48\linewidth}
      \centering
    \includegraphics[width=\linewidth]{./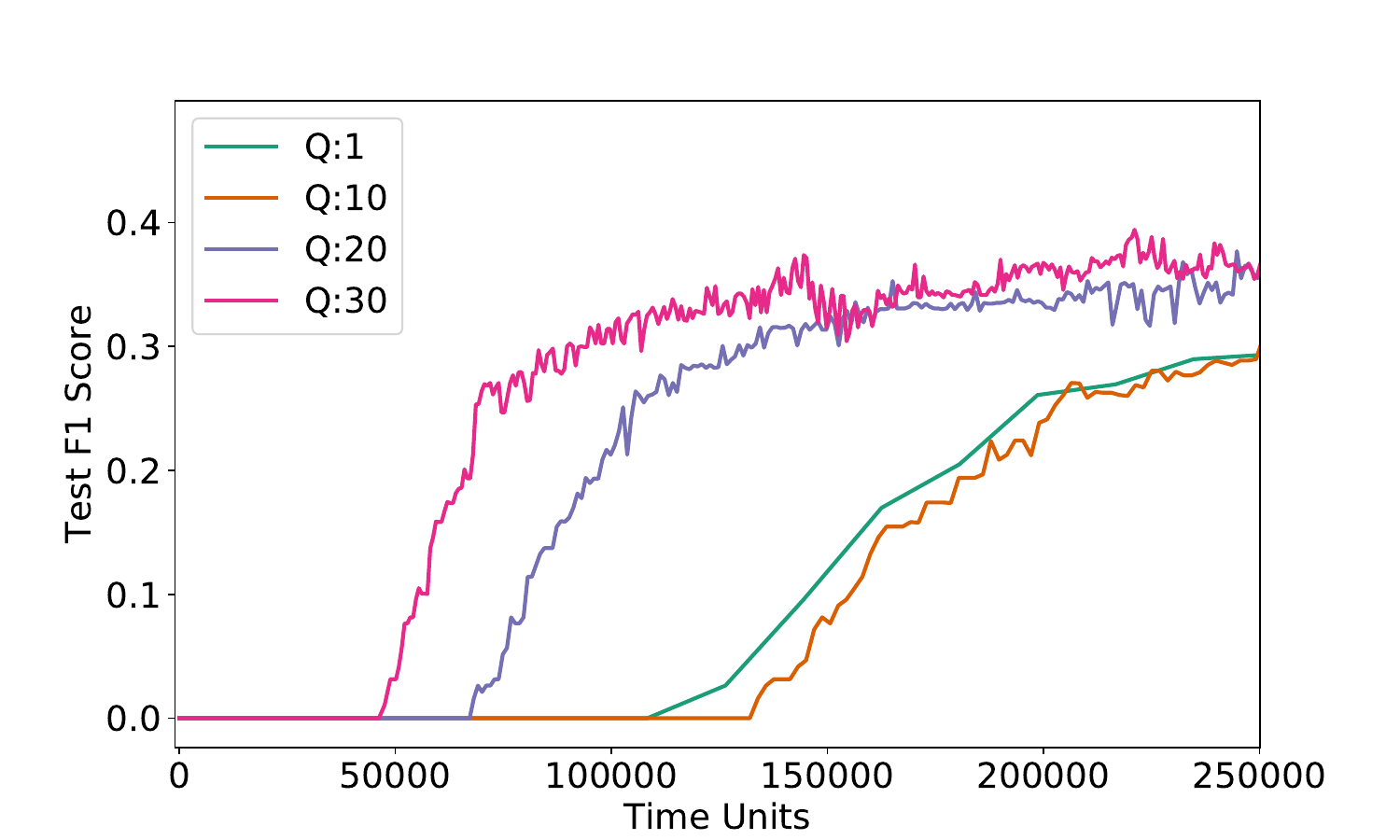}
    \caption{$t_{comm}=100,~t_{comp}=1,~K=20$.}
    \label{fig:mimic3_comm_vs_Q_100_K20}
     \end{subfigure}
\begin{subfigure}[b]{.48\linewidth}
      \centering
    \includegraphics[width=\linewidth]{./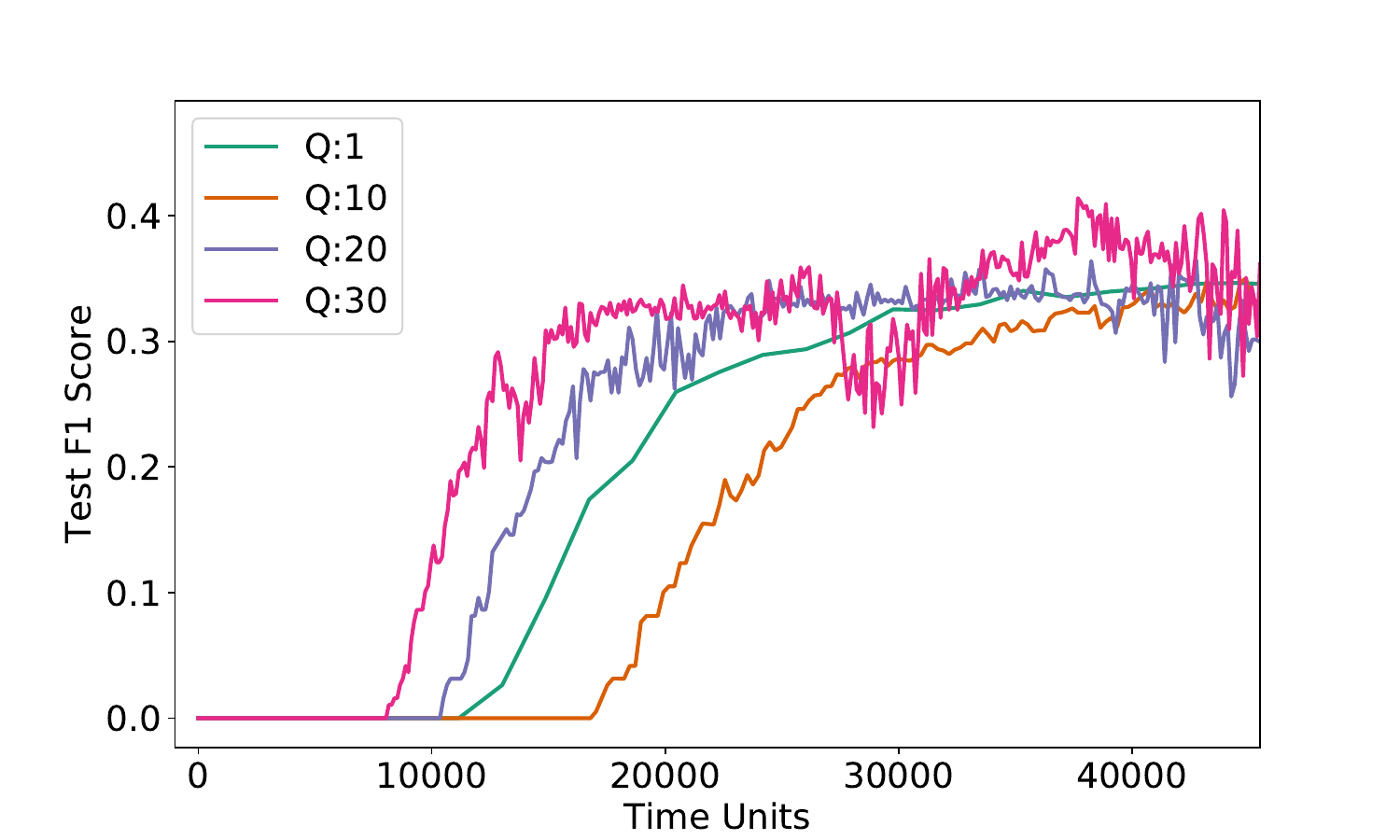}
    \caption{$t_{comm}=10,~t_{comp}=1,~K=50$.}
    \label{fig:mimic3_comm_vs_Q_10_K50}
     \end{subfigure}
\begin{subfigure}[b]{.48\linewidth}
      \centering
    \includegraphics[width=\linewidth]{./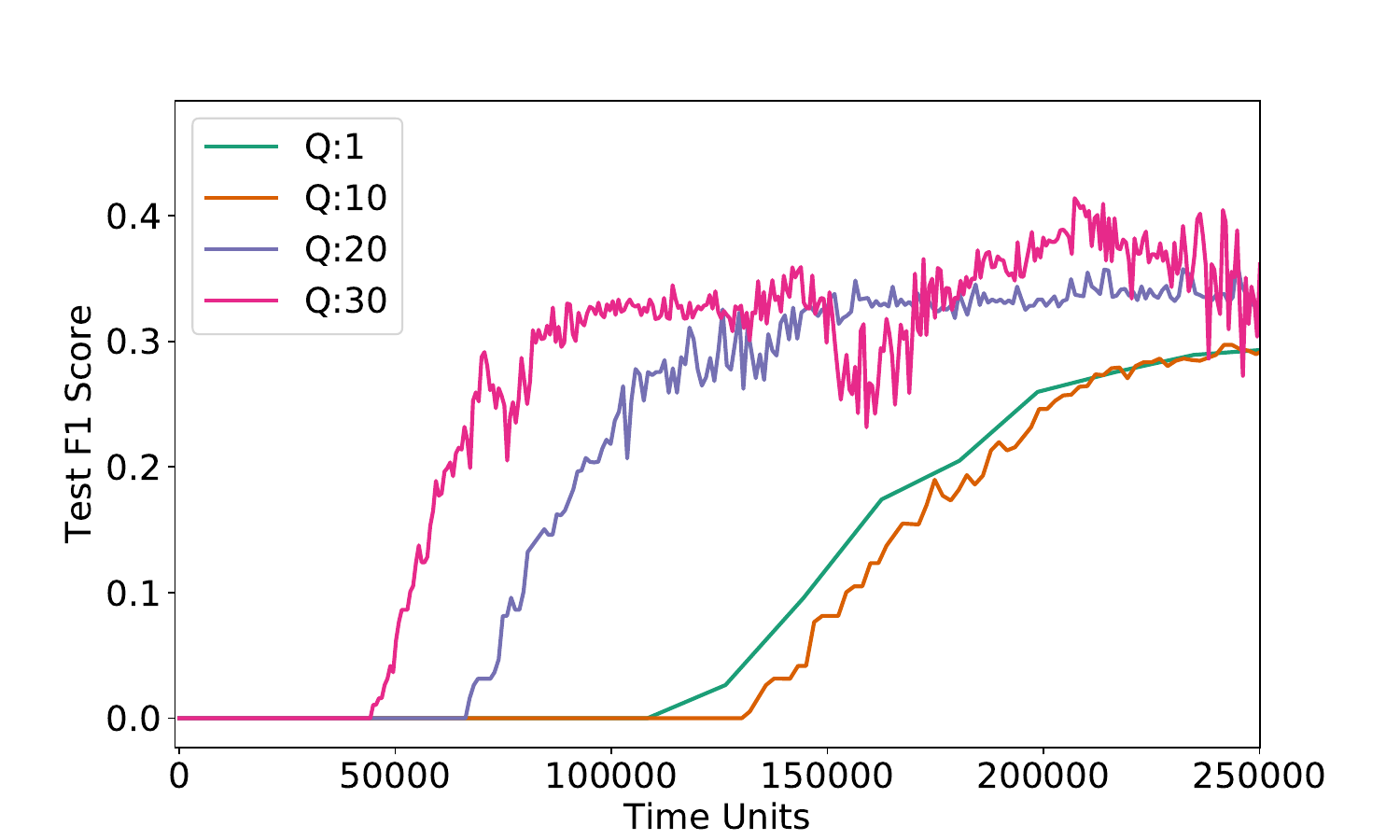}
    \caption{$t_{comm}=100,~t_{comp}=1,~K=50$.}
    \label{fig:mimic3_comm_vs_Q_100_K50}
     \end{subfigure}
\caption{Performance of TDCD on MIMIC-III. We show the test F1 score vs. training time units $t_Q$. $N=4$ in all four figures.  $K=20$ in Figures (a) and (b) and $K=50$ in Figures (c) and (d). In Figures (a) and (c), we use a lower ratio of communication vs. computation latency with $t_{comm}=10$. In Figures (b) and (d), we use a higher ratio of communication vs. computation latency with $t_{comm}=100$. $t_{comp}=1$ in all figures.}
     \label{fig.mimic3_commvsQgeneral}
\squeezeuppicture
\end{figure*}

In Fig.~\ref{fig:cifar10_comm_vs_Q_10_K50} and Fig.~\ref{fig:cifar10_comm_vs_Q_100_K50}, we show results for CIFAR-10 with 
$N=2$ silos (vertical partitions) and $K=50$ clients per silo.
The Y-axis gives test accuracy, and the X-axis gives the training latency in time units.
When $t_{comm}=10$, we observe in Fig.\ref{fig:cifar10_comm_vs_Q_10_K50} that the experiments with $Q>1$ converge faster than for $Q=1$. Similar trends can be seen for the higher value of $t_{comm}=100$ in Fig.~\ref{fig:cifar10_comm_vs_Q_100_K50}.
Moreover, comparing the results for $t_{comm}=10$ and $t_{comm}=100$, we observe that the gap in test accuracy between $Q>1$ and $Q=1$ widens as the ratio between $t_{comm}$ and $t_{comp}$ increases. We thus see more benefits from larger a $Q$ when the communication latency is larger.

Results for MIMIC-III are shown in Fig.~\ref{fig:mimic3_comm_vs_Q_10_K20} and Fig.~\ref{fig:mimic3_comm_vs_Q_100_K20} for $N=4$ silos and $K=20$ clients per silo. The Y-axis gives the test F1 Score, and the X-axis gives the training latency in time units.
The results follow similar trends to those in the previous set of experiments. Comparing the case with $t_{comm}=10$ in Fig.~\ref{fig:mimic3_comm_vs_Q_10_K20}
with the case with $t_{comm}=100$ in Fig.~\ref{fig:mimic3_comm_vs_Q_100_K20},
we again observe that the gap in the test F1 score widens as the ratio between $t_{comm}$ and $t_{comp}$ increases, resulting in similar benefits from a larger $Q$ as in CIFAR-10.
\begin{figure*}
\centering
\begin{subfigure}[b]{.48\linewidth}
      \centering
    \includegraphics[width=\linewidth]{./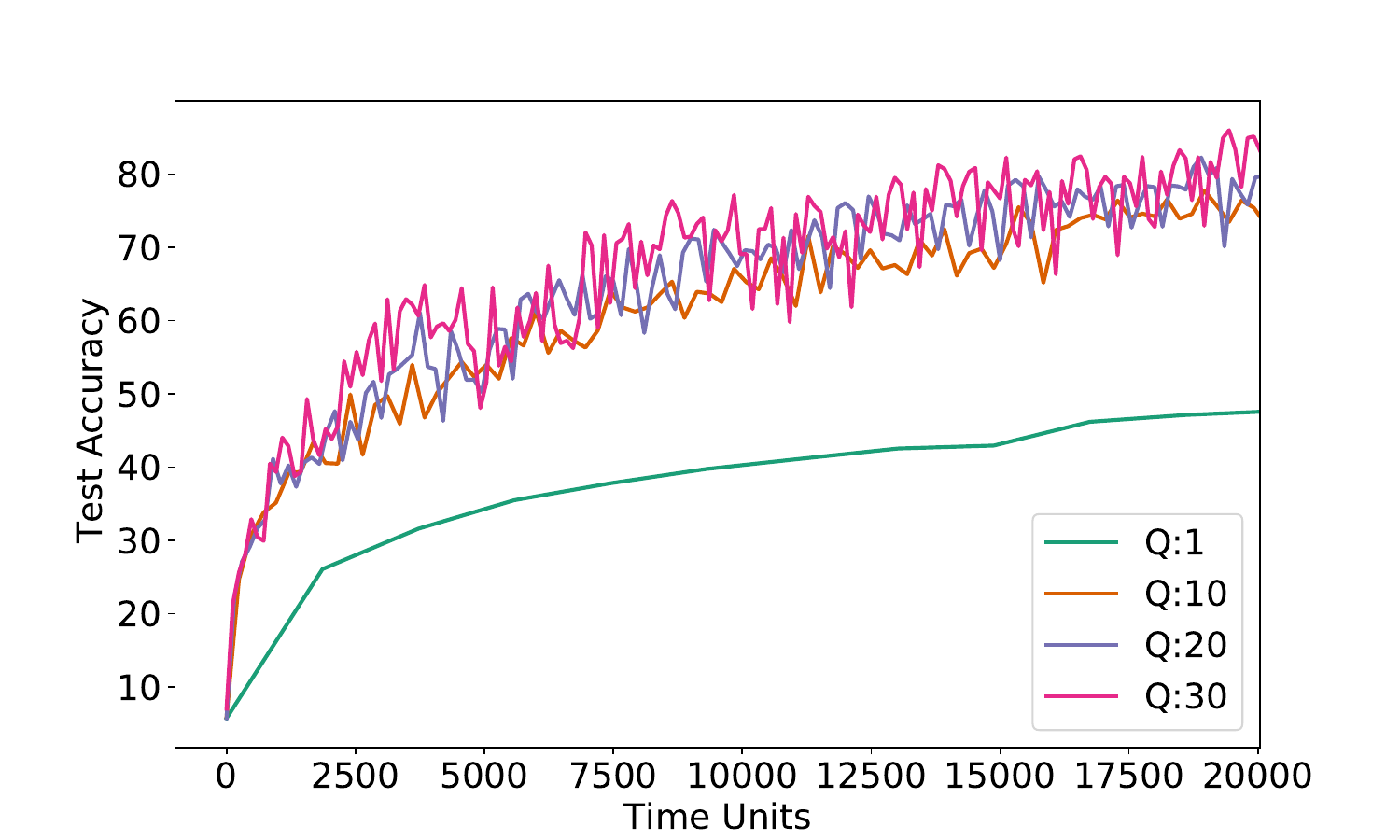}
    \caption{$t_{comm}=10,~t_{comp}=1,~K=10$.}
    \label{fig:modelnet_comm_vs_Q_10_K50}
     \end{subfigure}
\begin{subfigure}[b]{.48\linewidth}
      \centering
    \includegraphics[width=\linewidth]{./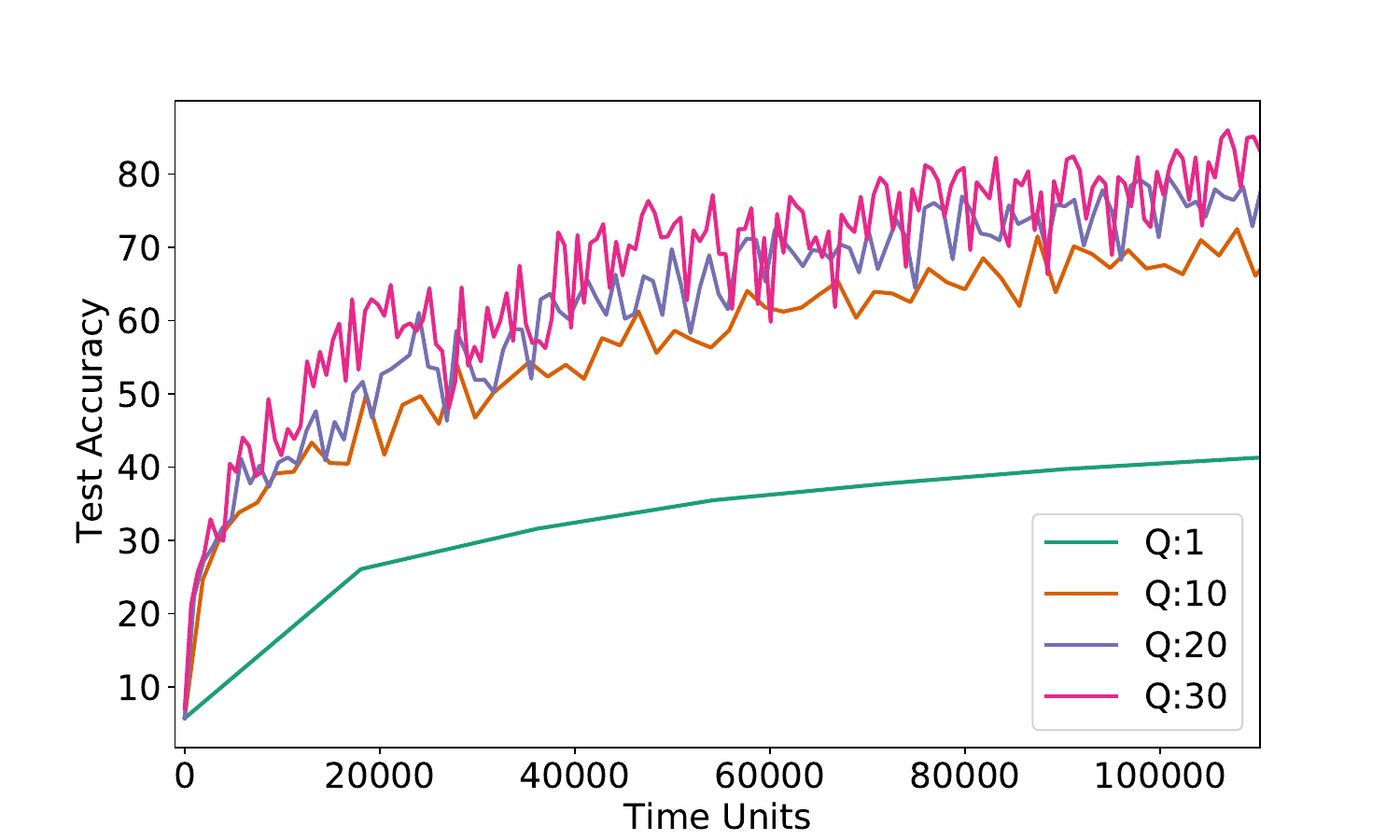}
    \caption{$t_{comm}=100,~t_{comp}=1,~K=10$.}
    \label{fig:modelnet_comm_vs_Q_100_K50}
     \end{subfigure}
\begin{subfigure}[b]{.48\linewidth}
      \centering
    \includegraphics[width=\linewidth]{./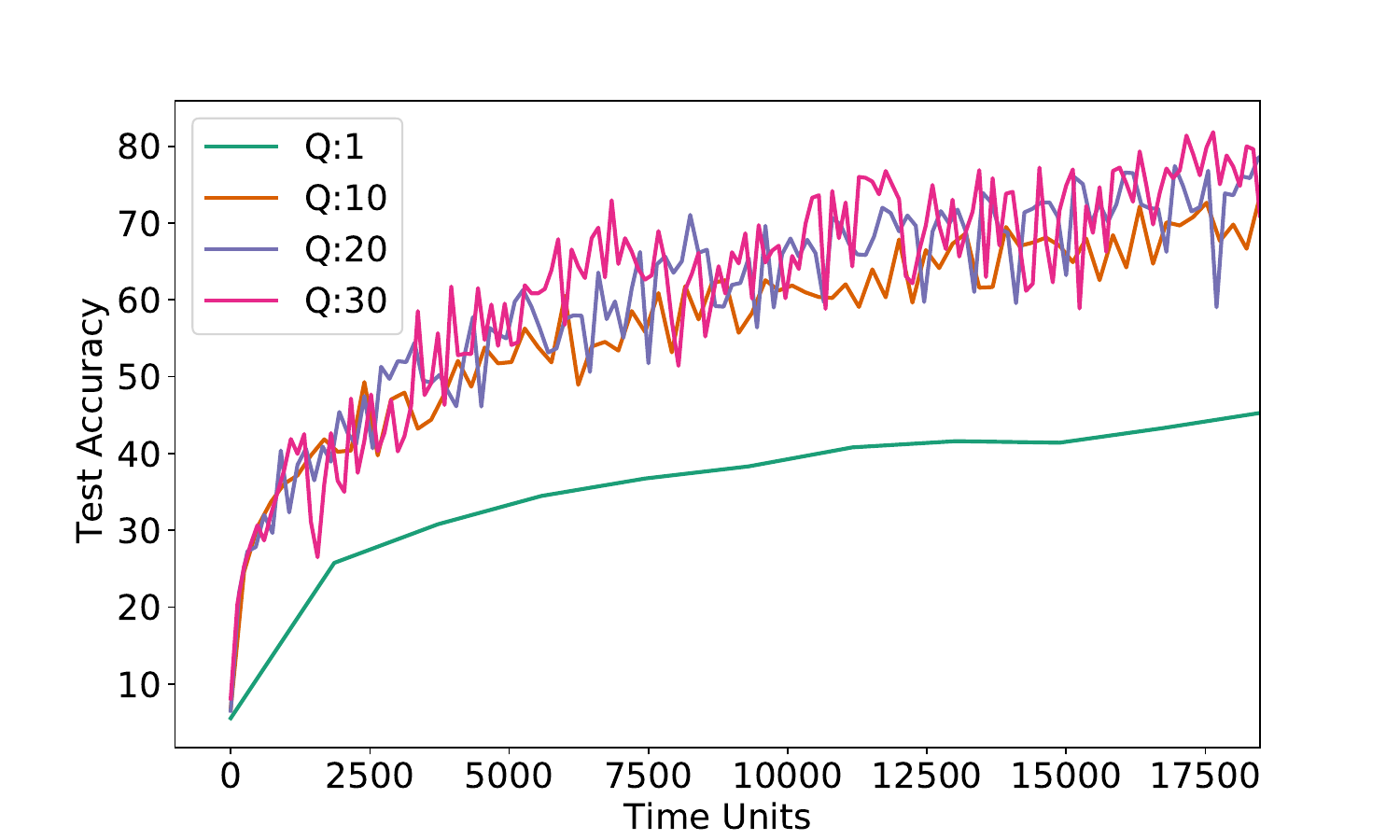}
    \caption{$t_{comm}=10,~t_{comp}=1,~K=20$.}
    \label{fig:modelnet_comm_vs_Q_10_K100}
     \end{subfigure}
\begin{subfigure}[b]{.48\linewidth}
      \centering
    \includegraphics[width=\linewidth]{./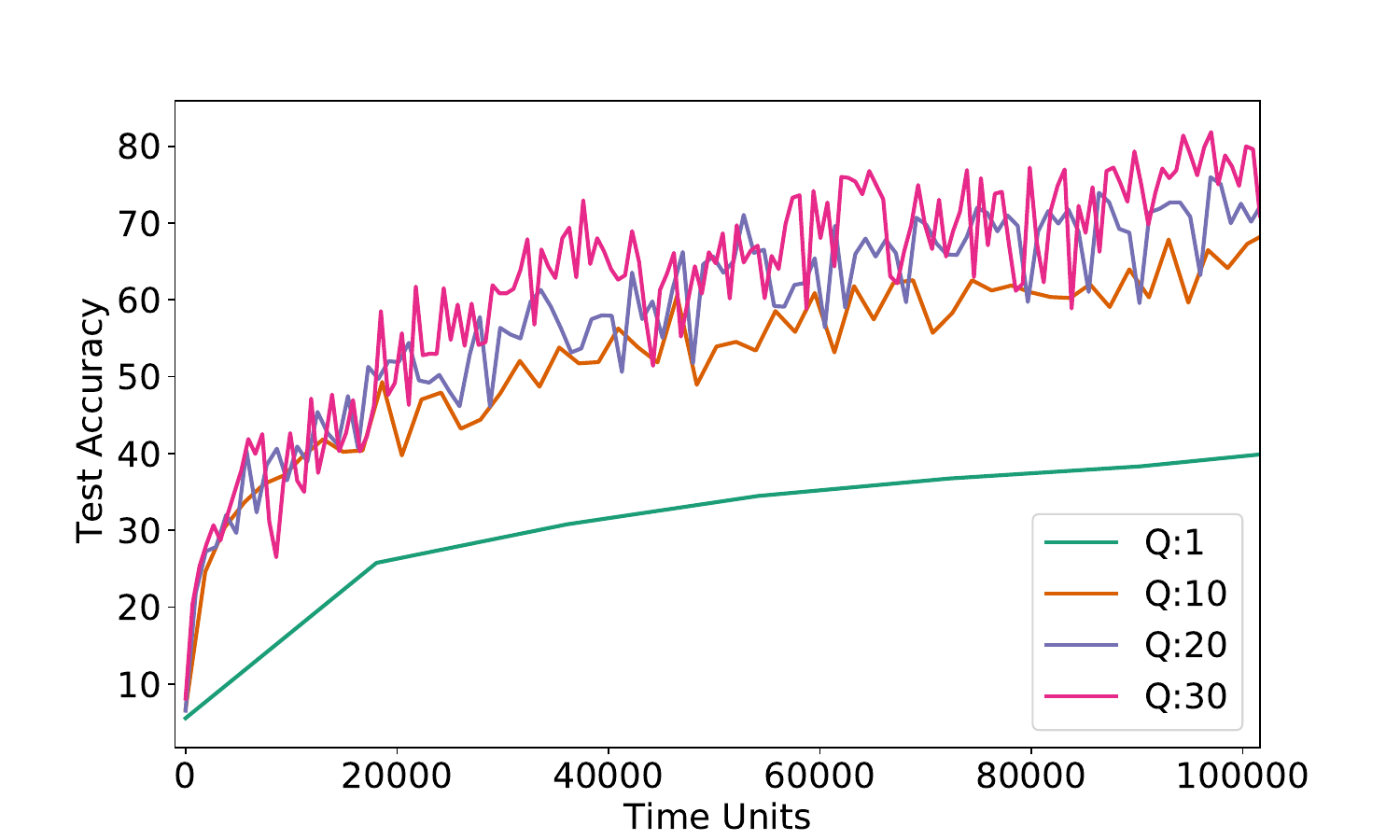}
    \caption{$t_{comm}=100,~t_{comp}=1,~K=20$.}
    \label{fig:modelnet_comm_vs_Q_100_K100}
     \end{subfigure}
\caption{\addresscomments{Performance of TDCD on ModelNet40. We show the top-$5$ test accuracy score vs. training time units $t_Q$.
    $N=12$ in all four figures. $K=10$ in Figures (a) and (b), and  $K=20$ in Figures (c) and (d). In Figures (a) and (c), we use a lower ratio of communication vs. computation latency with $t_{comm}=10$. In Figures (b) and (d), we use a higher ratio of communication vs. computation latency with $t_{comm}=100$. $t_{comp}=1$ in all figures.}}
     \label{fig.modelnet_commvsQgeneral}
\squeezeuppicture
\end{figure*}

\addresscomments{
Results for ModelNet40 are shown in Fig.~\ref{fig:modelnet_comm_vs_Q_10_K50} and 
Fig.~\ref{fig:modelnet_comm_vs_Q_100_K50} for $N=12$ silos and $K=10$ clients per silo. 
The Y-axis gives the top-$5$ test accuracy, 
and the X-axis gives the training latency in time units.
As with previous results, we see that $Q>1$ converges faster than $Q=1$.
We again observe that increasing the ratio between $t_{comm}$ and $t_{comp}$
results in larger $Q$ performing better than smaller $Q$.
}

\begin{figure*}[t]
\centering
\begin{subfigure}[b]{.48\linewidth}
      \centering
    \includegraphics[width=\linewidth]{./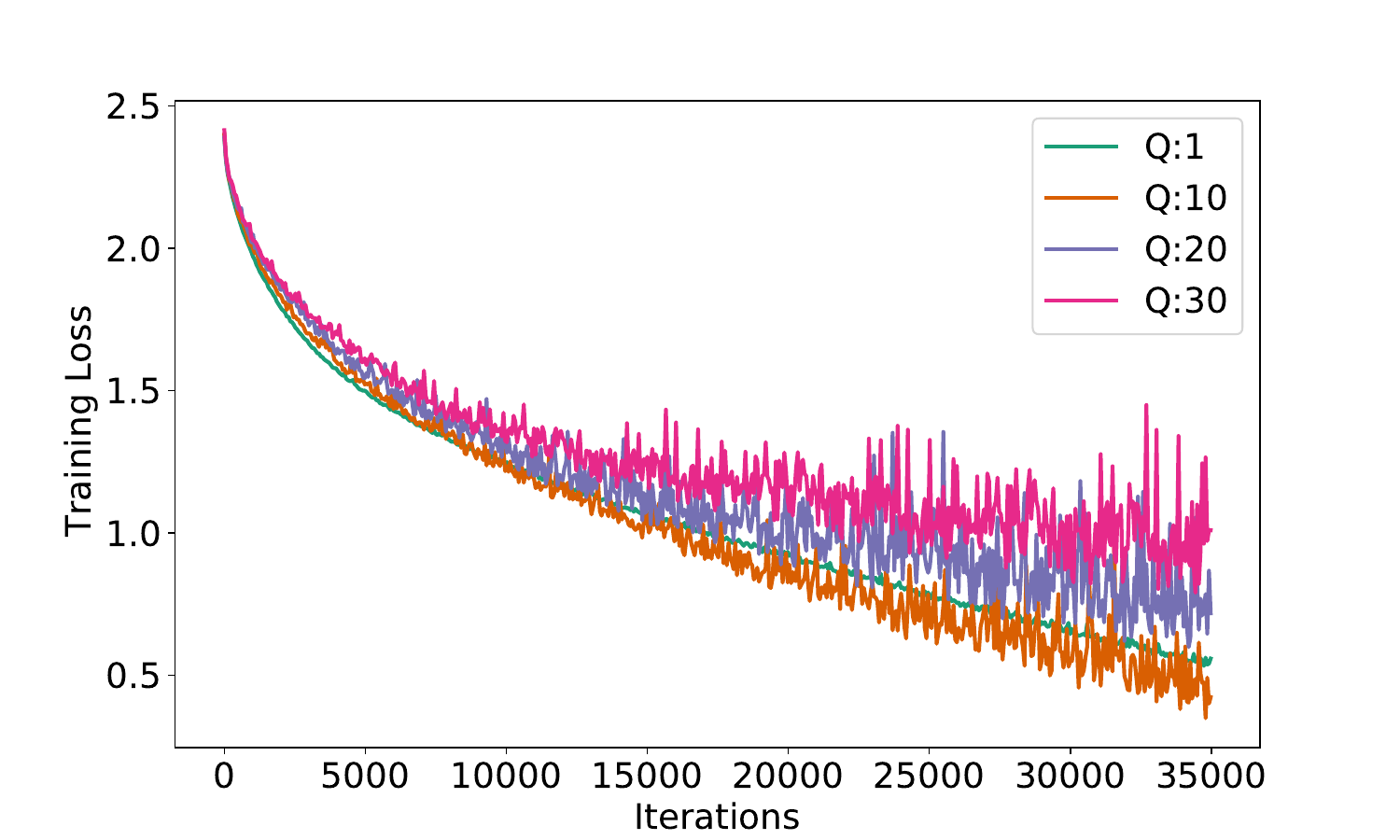}
    \caption{Variation with $Q$, N=2, K=50.}
    \label{fig:cifar10varK50_loss}
     \end{subfigure}
\begin{subfigure}[b]{.48\linewidth}
      \centering
    \includegraphics[width=\linewidth]{./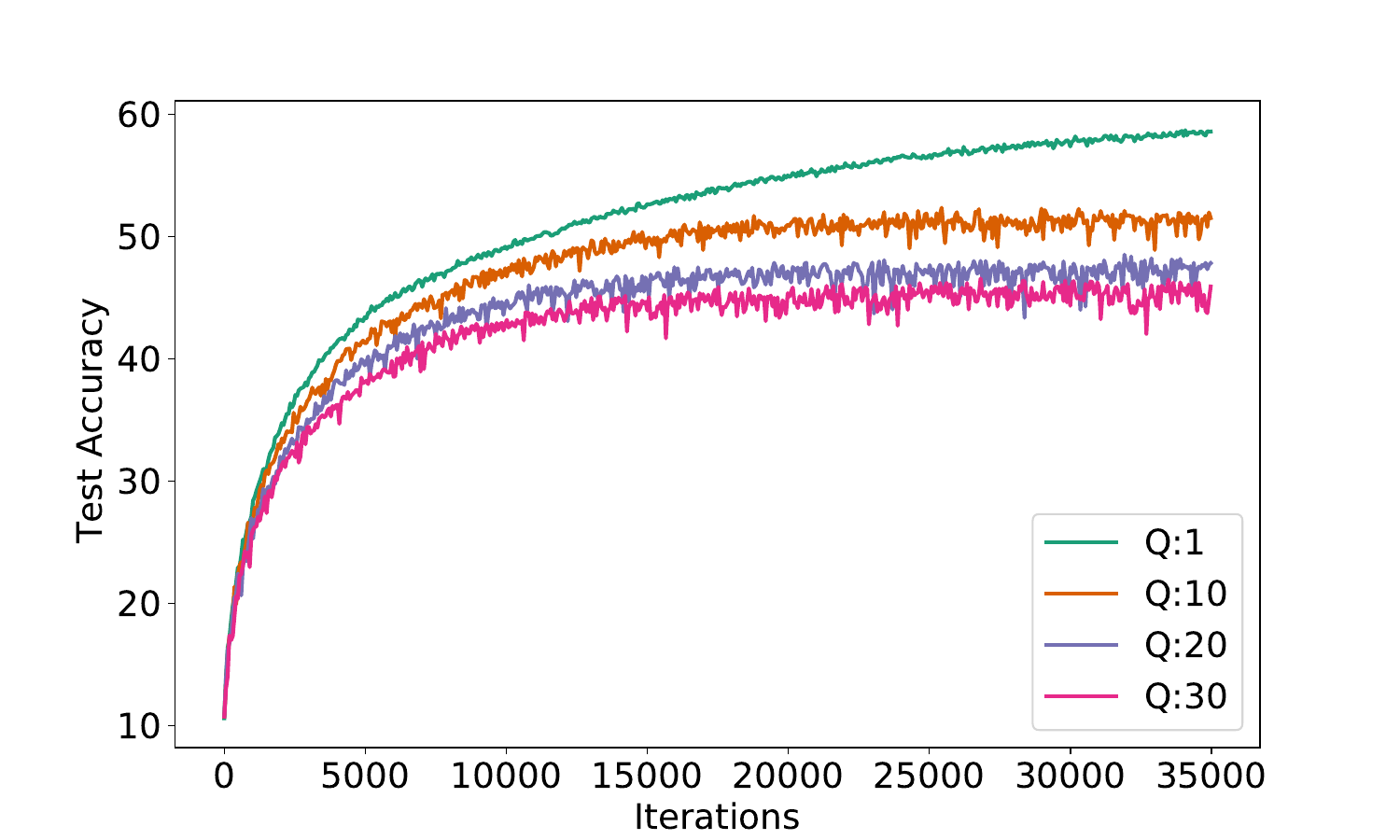}
    \caption{Variation with $Q$, N=2, K=50.}
    \label{fig:cifar10varK50_f1score}
     \end{subfigure}
\begin{subfigure}[b]{.48\linewidth}
      \centering
    \includegraphics[width=\linewidth]{./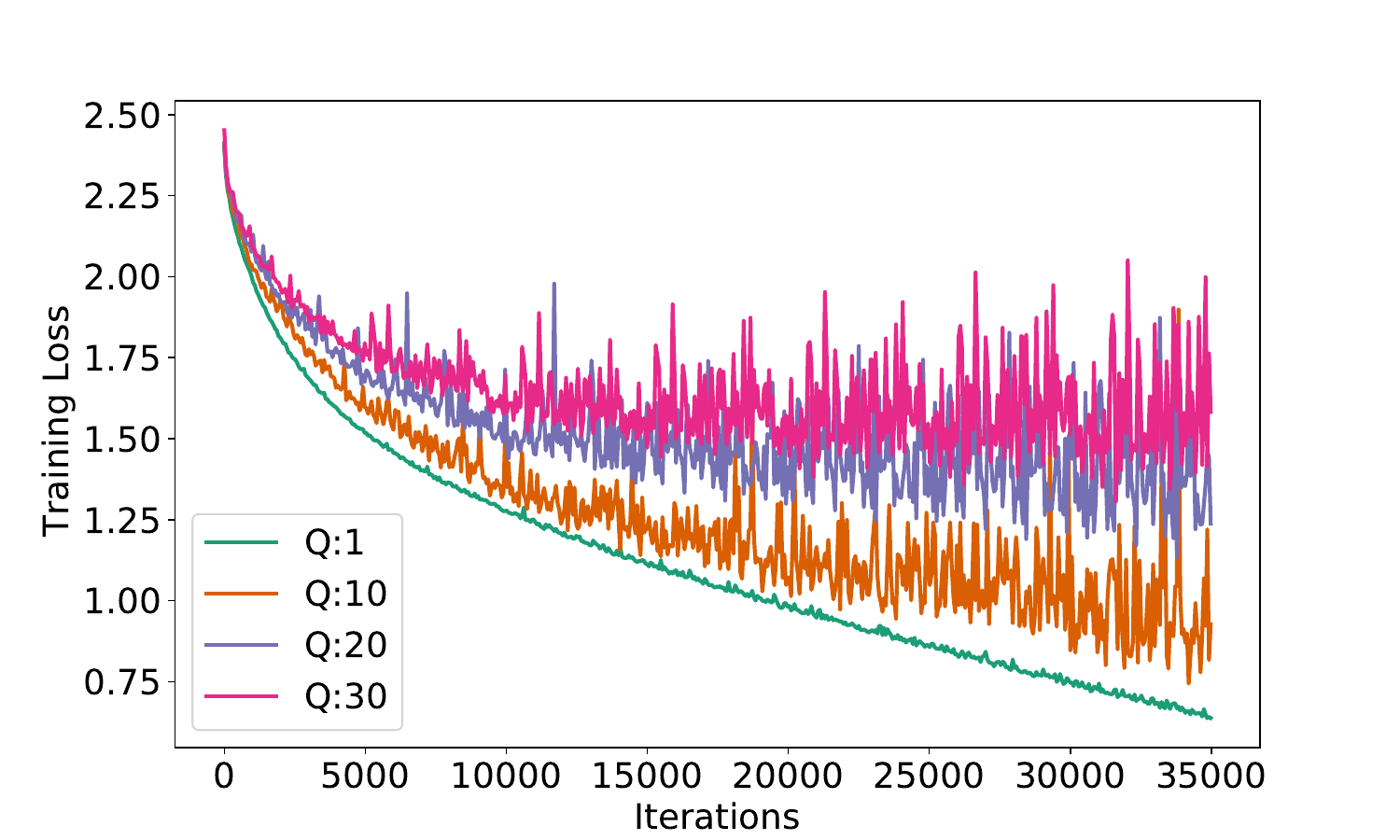}
    \caption{Variation with $Q$, N=2, K=100.}
    \label{fig:cifar10varK100_loss}
     \end{subfigure}
\begin{subfigure}[b]{.48\linewidth}
      \centering
    \includegraphics[width=\linewidth]{./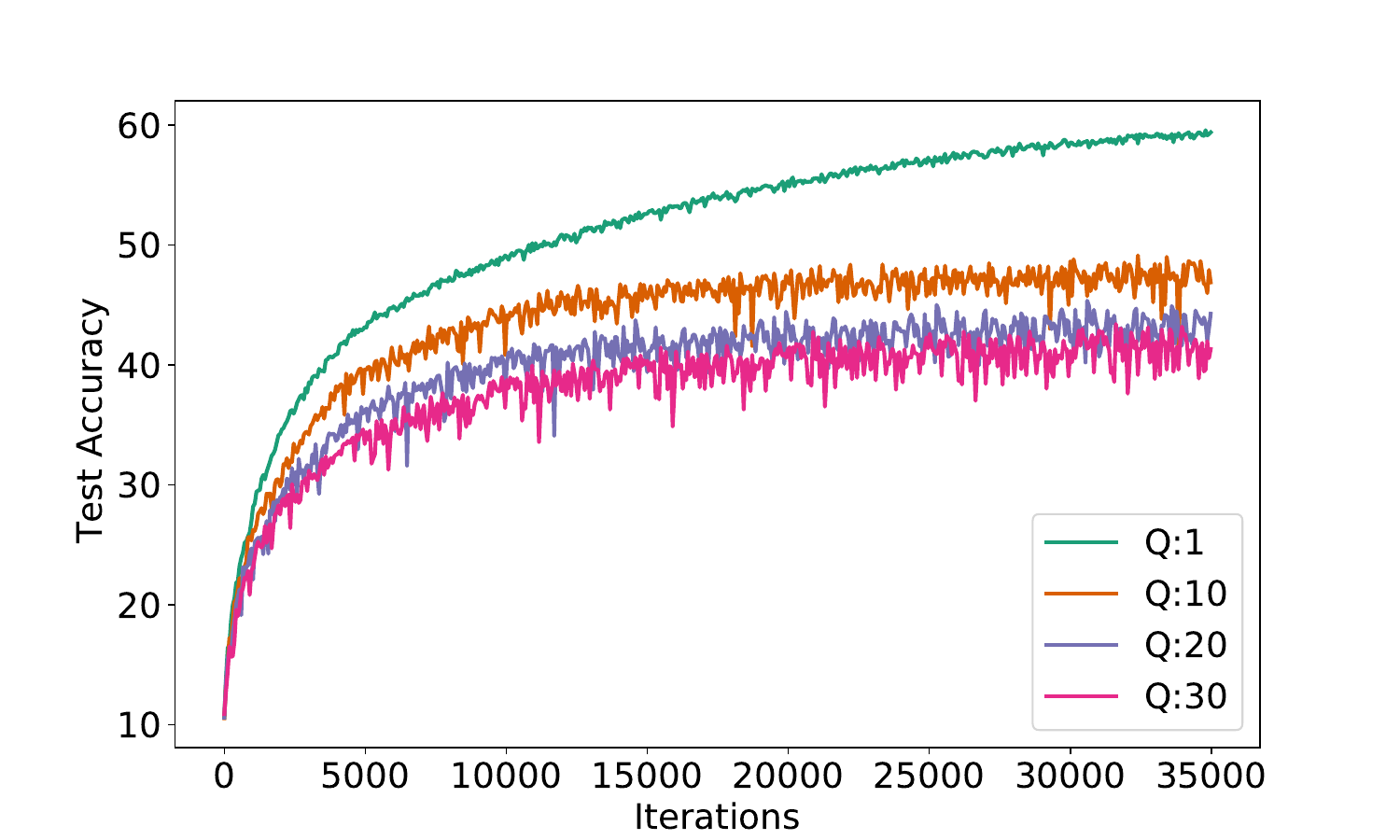}
    \caption{Variation with $Q$, N=2, K=100.}
    \label{fig:cifar10varK100_f1score}
     \end{subfigure}
\caption{Performance of TDCD on CIFAR-10 in terms of the number of training iterations for various values of $Q$. $N=2$ in all figures. $K=50$ in Figures (a) and (b), and $K=100$ in Figures (c) and (d). }
\label{fig:cifar10varQ}
\squeezeuppicture
\end{figure*}

For completeness, we also include figures for the same experiments that plot the training loss and test performance with respect to the number of local iterations. These results are shown in Fig.~\ref{fig:cifar10varK50_loss} and Fig.~\ref{fig:cifar10varK50_f1score} for CIFAR-10, Fig.~\ref{fig:mimic3varK20_loss} and Fig.~\ref{fig:mimic3varK20_f1score} for MIMIC-III, \addresscomments{and Fig.~\ref{fig:modelnetvarK50_loss} and Fig.~\ref{fig:modelnetvarK50_f1score} for ModelNet40.}
We observe for that CIFAR-10 with $K=50$ clients, in Fig.~\ref{fig:cifar10varK50_loss} 
with increasing $Q$, the convergence rate in terms of iterations is worse. The convergence error is higher as well. The results of model test performance in Fig.~\ref{fig:cifar10varK50_f1score}
also reflect a performance decay as $Q$ increases.
We observe similar trends for the convergence rate and test performance for MIMIC-III
with $K=20$ clients in Fig.~\ref{fig:mimic3varK20_loss} and Fig.~\ref{fig:mimic3varK20_f1score}
\addresscomments{and ModelNet40 with $10$ clients in Fig.~\ref{fig:modelnetvarK50_loss} and Fig.~\ref{fig:modelnetvarK50_f1score}.}

These observations are in accordance with Theorem~\ref{thm.main_theorem}.
However, as shown in Fig.~\ref{fig.cifar10_commvsQgeneral} and Fig.~\ref{fig.mimic3_commvsQgeneral}, there is definite benefit to larger $Q$ in terms of training latency.

\begin{figure*}[t]
\centering
\begin{subfigure}[b]{.48\linewidth}
      \centering
    \includegraphics[width=\linewidth]{./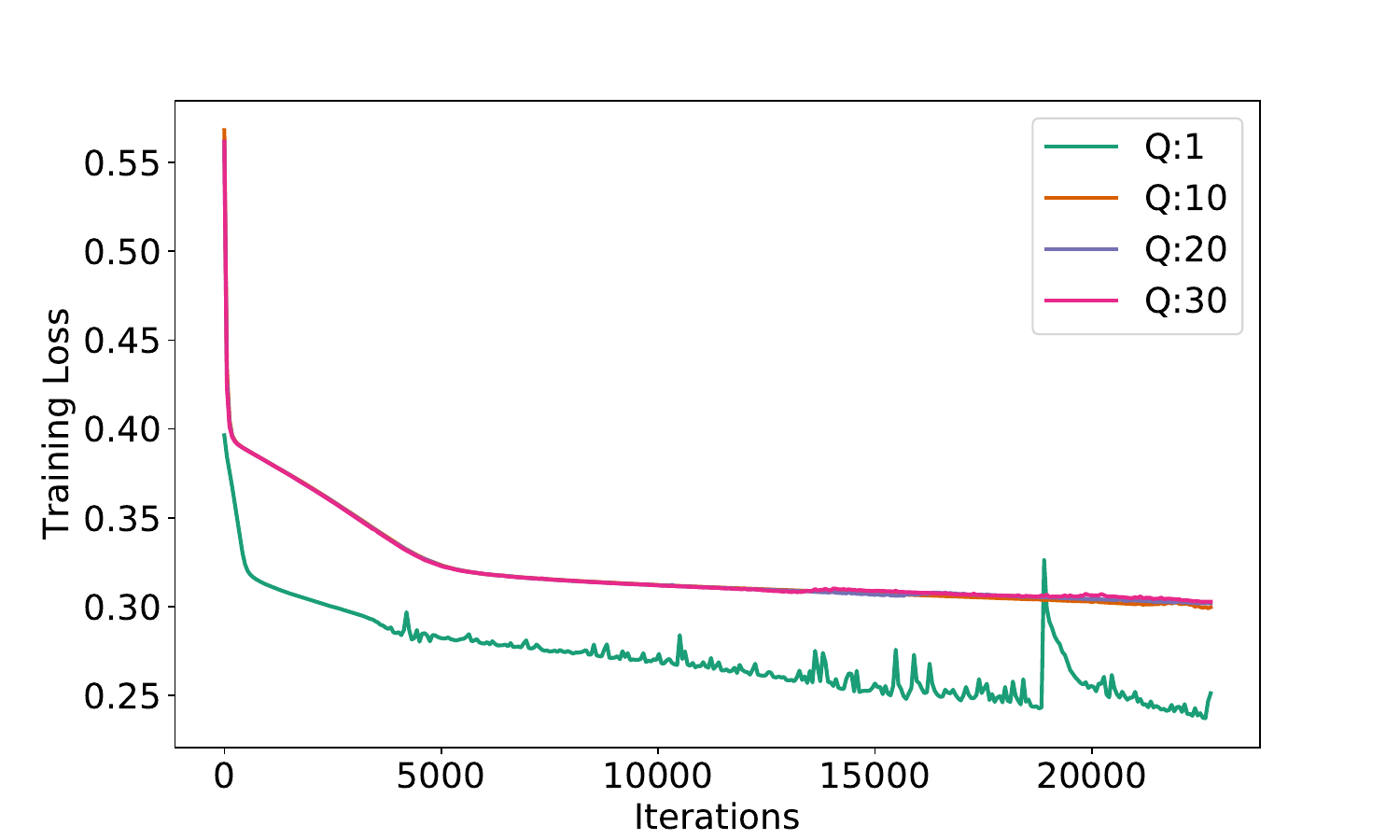}
    \caption{Variation with $Q$, N=4, K=20.}
    \label{fig:mimic3varK20_loss}
     \end{subfigure}
\begin{subfigure}[b]{.48\linewidth}
      \centering
    \includegraphics[width=\linewidth]{./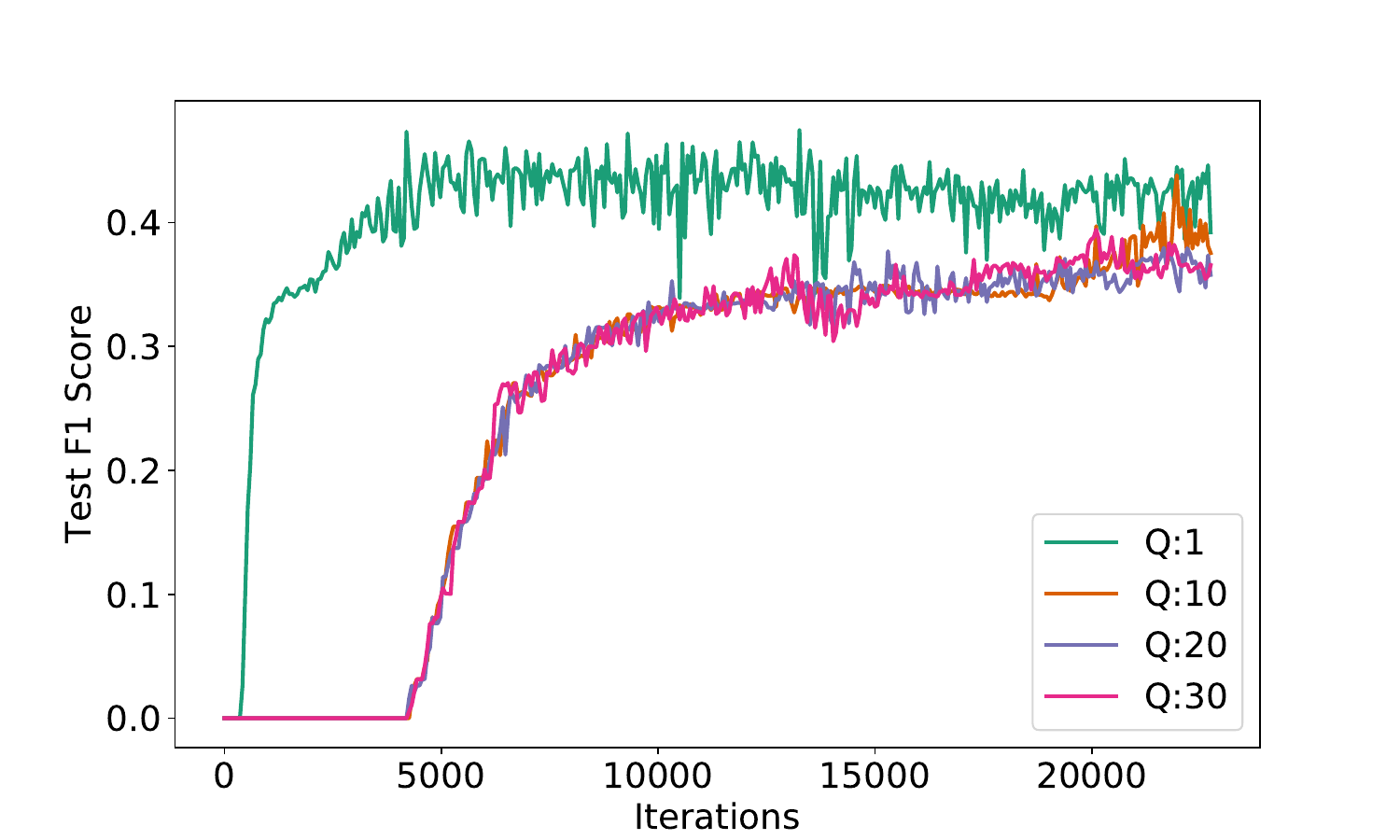}
    \caption{Variation with $Q$, N=4, K=20.}
    \label{fig:mimic3varK20_f1score}
     \end{subfigure}
\begin{subfigure}[b]{.48\linewidth}
      \centering
    \includegraphics[width=\linewidth]{./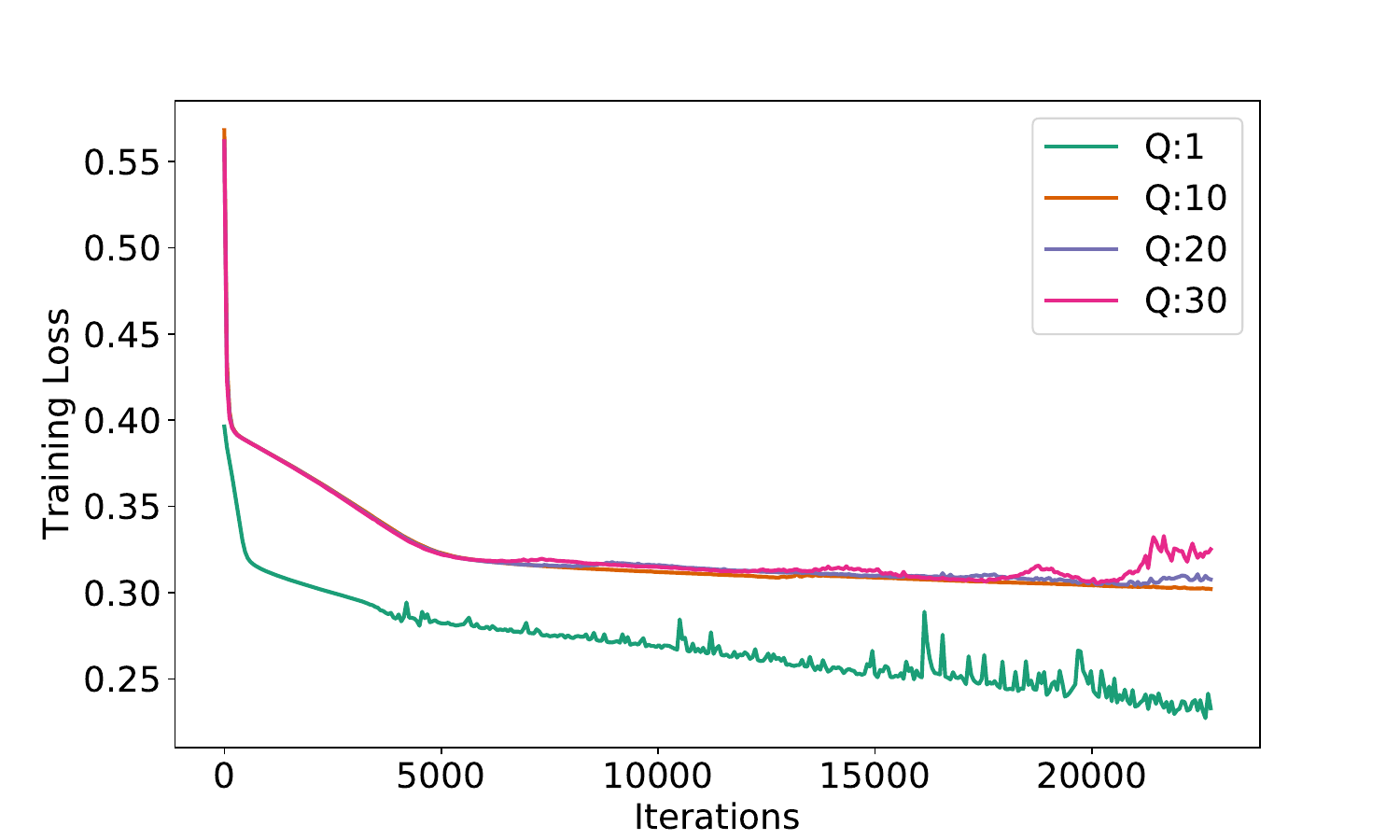}
    \caption{Variation with $Q$, N=4, K=50.}
    \label{fig:mimic3varK50_loss}
     \end{subfigure}
\begin{subfigure}[b]{.48\linewidth}
      \centering
    \includegraphics[width=\linewidth]{./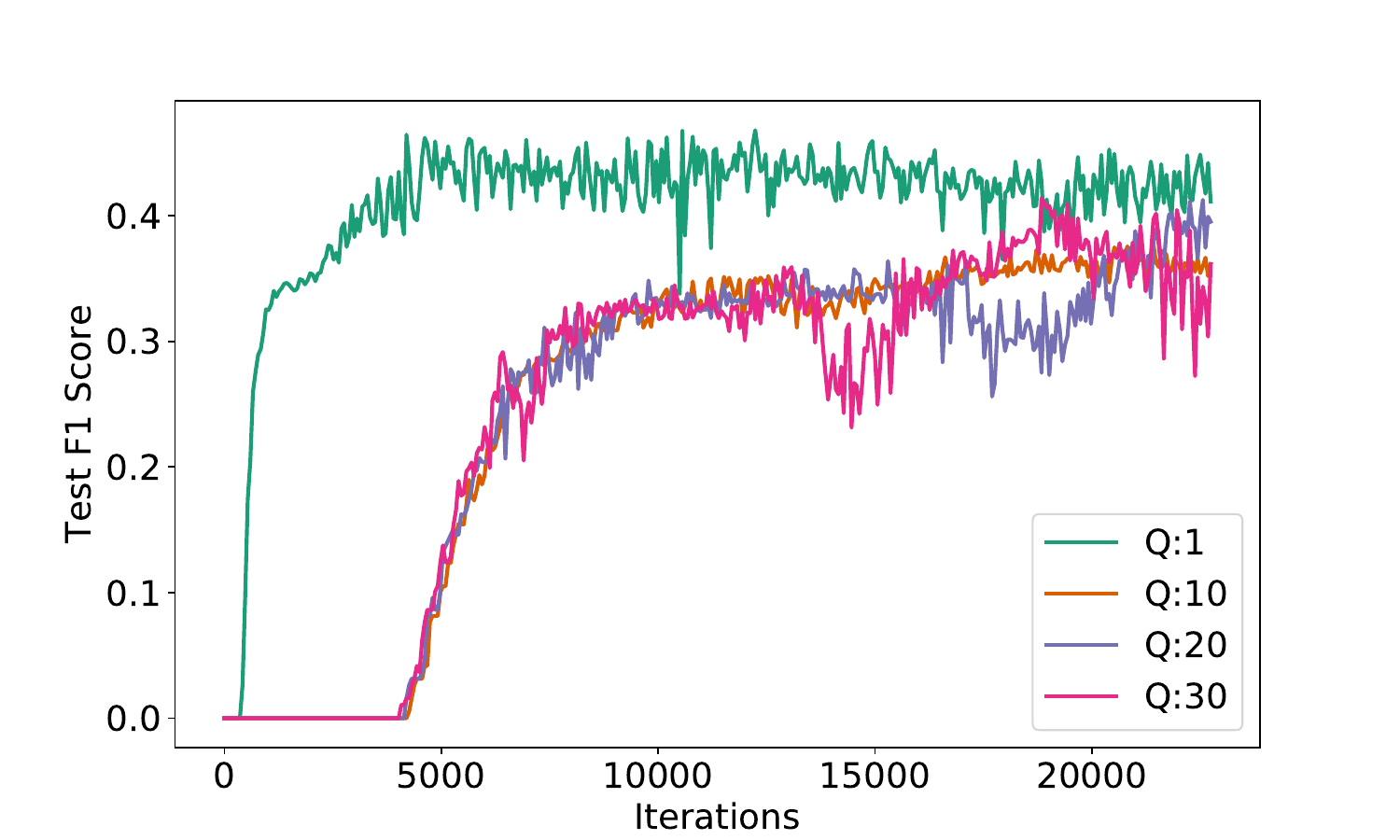}
    \caption{Variation with $Q$, N=4, K=50.}
    \label{fig:mimic3varK50_f1score}
     \end{subfigure}
\caption{Performance of TDCD on MIMIC-III in terms of the number of training iterations for various values of $Q$. $N=4$ in all figures. $K=20$ in Figures (a) and (b), and $K=50$ in Figures (c) and (d). }
\label{fig:mimic3varQ}
\squeezeuppicture
\end{figure*}

\begin{figure*}[t]
\centering
\begin{subfigure}[b]{.48\linewidth}
      \centering
    \includegraphics[width=\linewidth]{./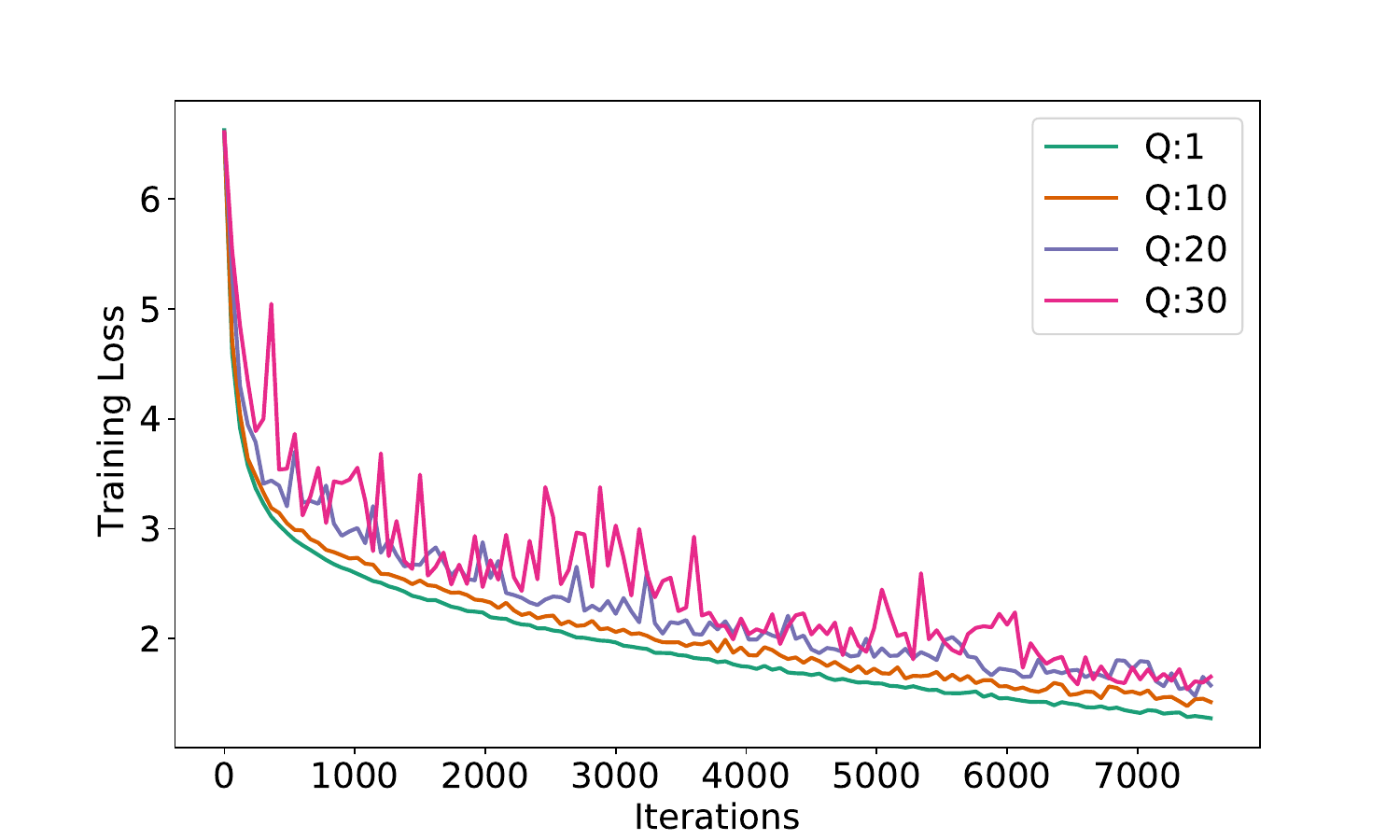}
    \caption{Variation with $Q$, N=12, K=10.}
    \label{fig:modelnetvarK50_loss}
     \end{subfigure}
\begin{subfigure}[b]{.48\linewidth}
      \centering
    \includegraphics[width=\linewidth]{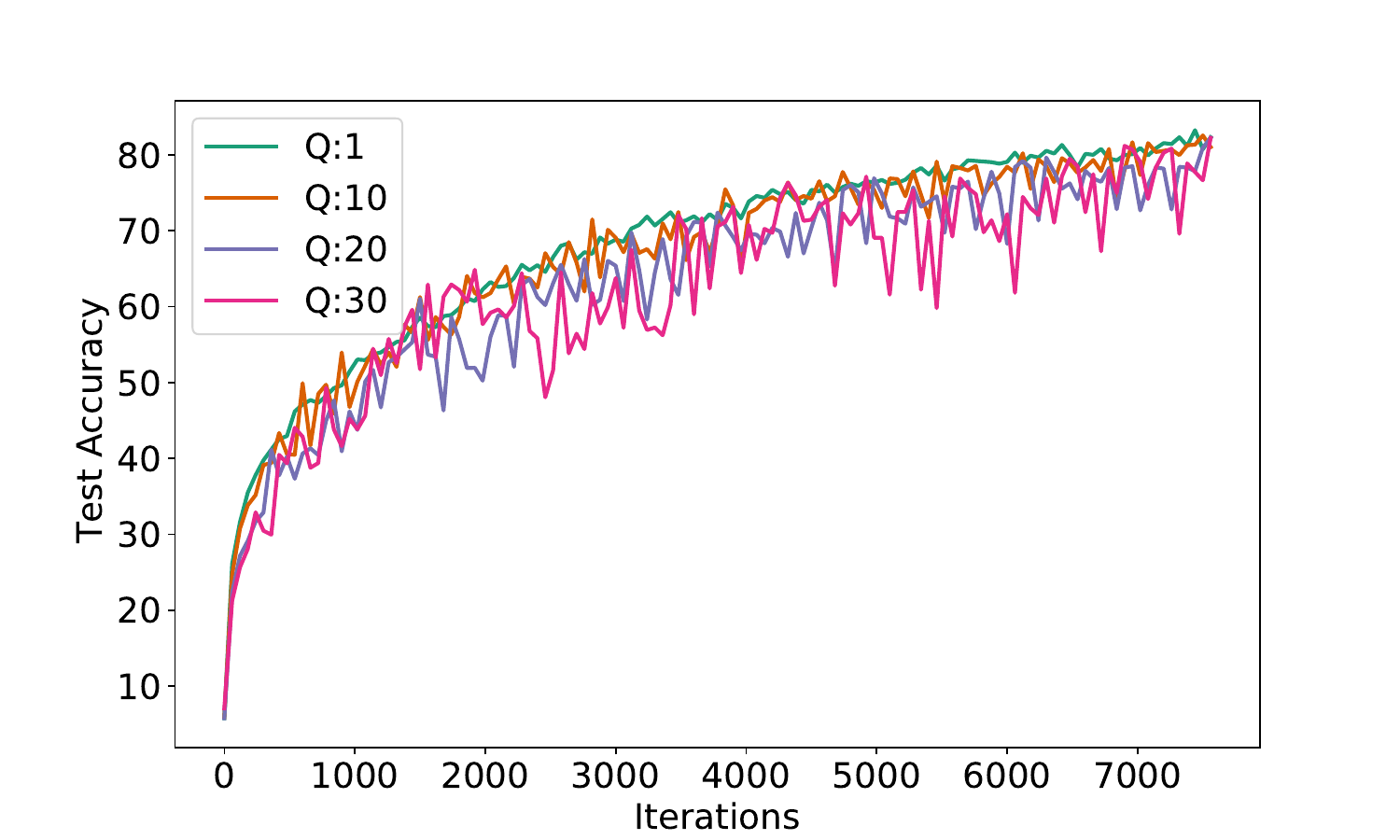}
    \caption{Variation with $Q$, N=12, K=10.}
    \label{fig:modelnetvarK50_f1score}
     \end{subfigure}
\begin{subfigure}[b]{.48\linewidth}
      \centering
    \includegraphics[width=\linewidth]{./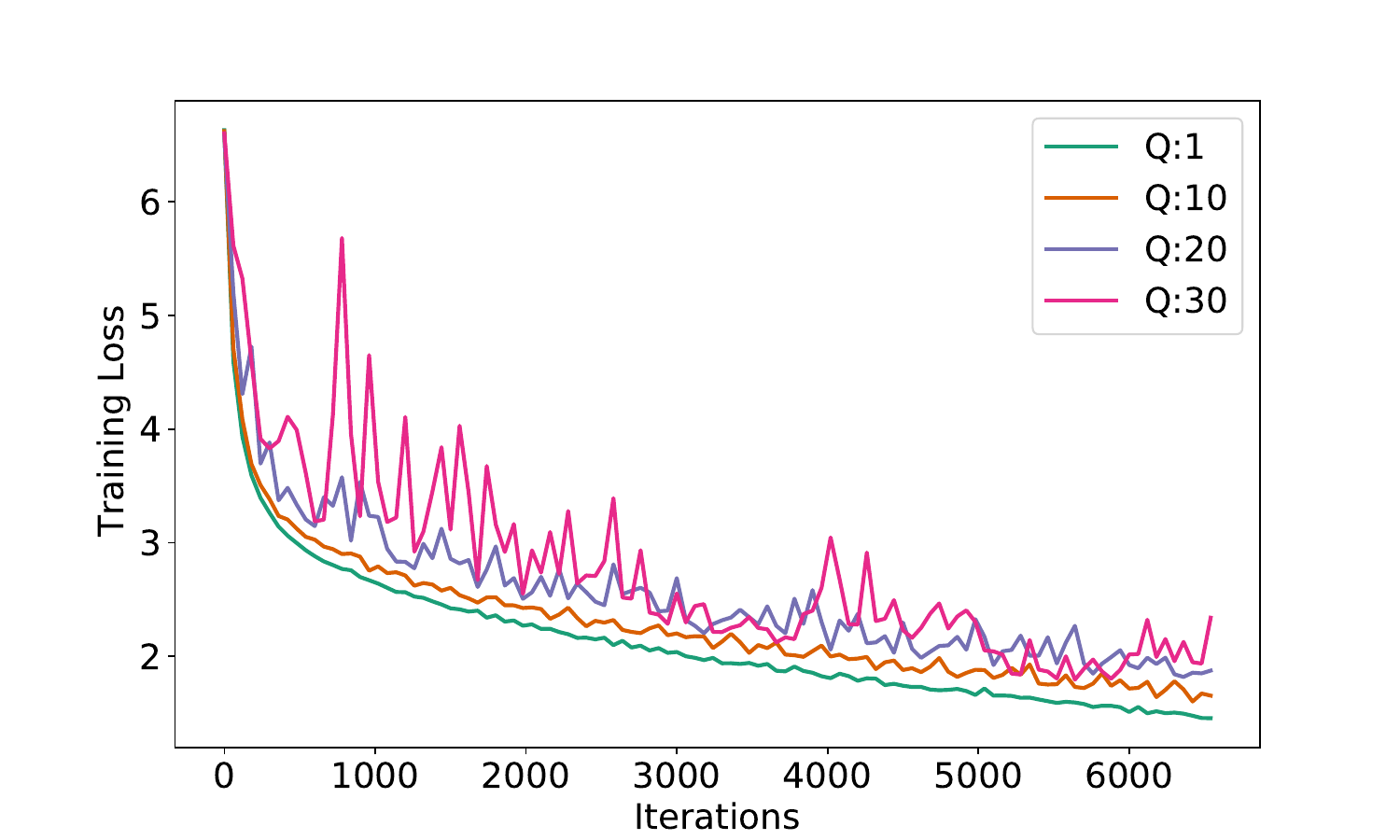}
    \caption{Variation with $Q$, N=12, K=20.}
    \label{fig:modelnetvarK100_loss}
     \end{subfigure}
\begin{subfigure}[b]{.48\linewidth}
      \centering
    \includegraphics[width=\linewidth]{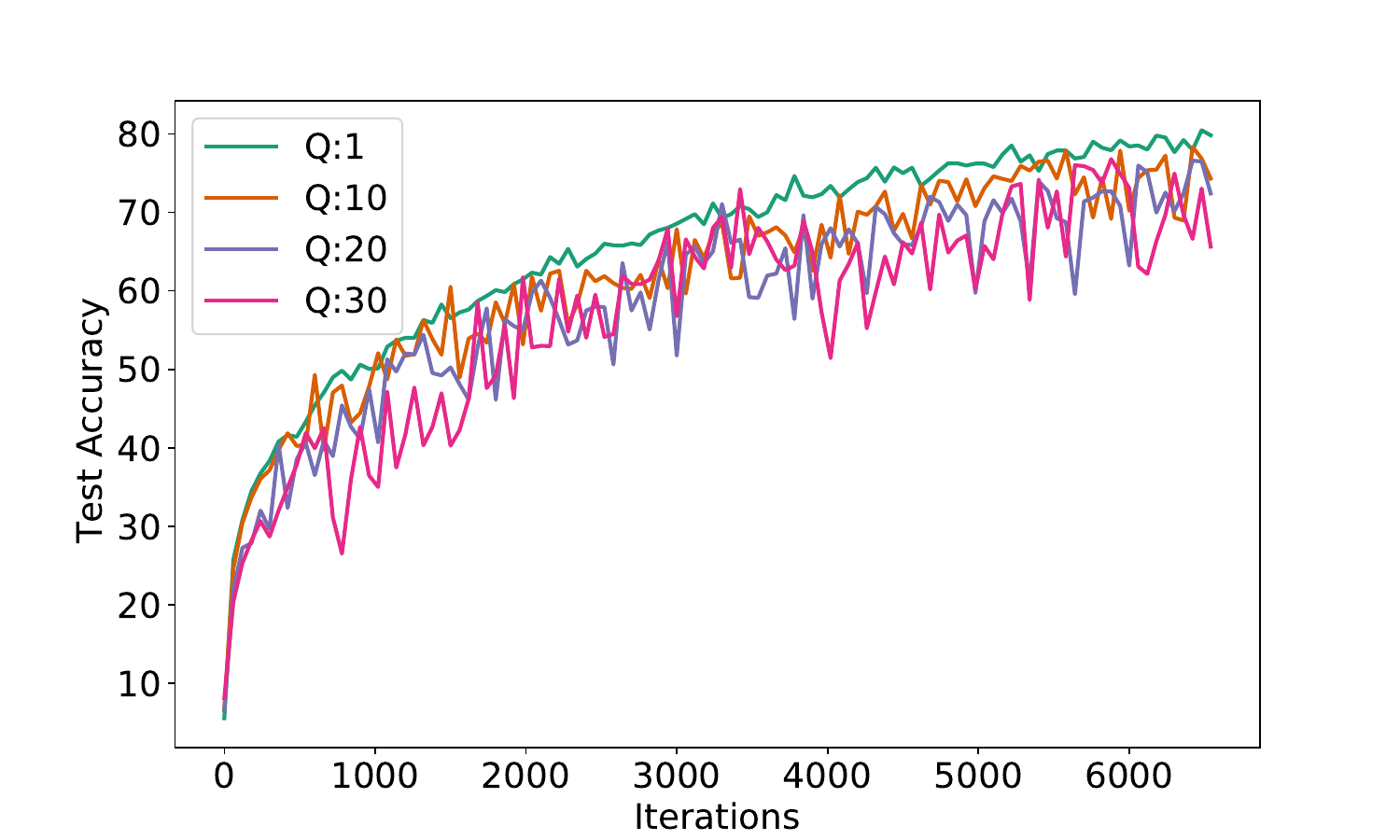}
    \caption{Variation with $Q$, N=12, K=20.}
    \label{fig:modelnetvarK100_f1score}
     \end{subfigure}
    \caption{\addresscomments{Training loss and top-$5$ accuracy of TDCD on ModelNet40 in terms of the number of training iterations for various values of $Q$. $N=12$ in all figures. $K=10$ in Figures (a) and (b), and $K=20$ in Figures (c) and (d). }}
\label{fig:cifar10varQ}
\squeezeuppicture
\end{figure*}

\subsubsection{Performance in Larger Networks}
Next, we repeat the experiments in the previous section using a larger number of clients per silo. This also results in larger number of horizontal data partitions in each silo. 

For CIFAR-10, we increase the number of clients in each silo from $K=50$ to $K=100$, resulting in total 200 participants with $N=2$ vertical partitions. We show the test performance results vs. the training latency in Fig.~\ref{fig:cifar10_comm_vs_Q_10_K100}, Fig.~\ref{fig:cifar10_comm_vs_Q_100_K100}. As in the previous section, the training latency improves as $Q$ increases. Comparing the results for $K=100$ with $K=50$, we observe that the convergence rate is similar, but the convergence error and corresponding test accuracy are slightly worse in the larger network. It is our intuition that this is because each client holds a smaller number of samples, and thus the variance of its stochastic partial derivatives is higher.

For MIMIC-III, we increase from $K=20$ to $K=50$ clients, resulting in 200 total clients, with $N=4$ vertical partitions.   
We show the test performance in terms of the F1 score vs. the training latency in 
Fig.~\ref{fig:mimic3_comm_vs_Q_10_K50} and Fig.~\ref{fig:mimic3_comm_vs_Q_100_K50} for MIMIC-III. 
The results are similar to those in Sec.~\ref{sec.latency_vs_convergence}, though we observe a smaller performance degradation with the larger $K$ than in the experiments with CIFAR-10.

\addresscomments{For ModelNet40, we increase from $K=10$ to $K=20$ clients, resulting in 240 total clients, 
with $N=12$ vertical partitions.   
We show the test performance in terms of the top-$5$ accuracy vs. the training latency in 
Fig.~\ref{fig:modelnet_comm_vs_Q_10_K50} and Fig.~\ref{fig:modelnet_comm_vs_Q_100_K50} for ModelNet40. 
As with the other datasets, test accuracy is slightly worse in the larger network.}

In Fig.~\ref{fig:cifar10varK100_loss} and Fig.~\ref{fig:cifar10varK100_f1score}, 
Fig.~\ref{fig:mimic3varK50_loss} and Fig.~\ref{fig:mimic3varK50_f1score}, 
and Fig.~\ref{fig:modelnetvarK50_loss} and Fig.~\ref{fig:modelnetvarK50_f1score},
we show the training loss and test performance as a function of the number of training iterations in these larger networks. In all cases, TDCD converges for all of the tested values of $Q$. However, we observe that for the same value of $Q$ across the figures, larger values of $K$ result in a larger convergence error in the training loss curves.
We again believe this is due to the increased variance of the stochastic partial derivatives that the clients compute during training.

\begin{figure*}[t]
\centering
\begin{subfigure}[b]{.48\linewidth}
      \centering%
    \includegraphics[width=\linewidth]{./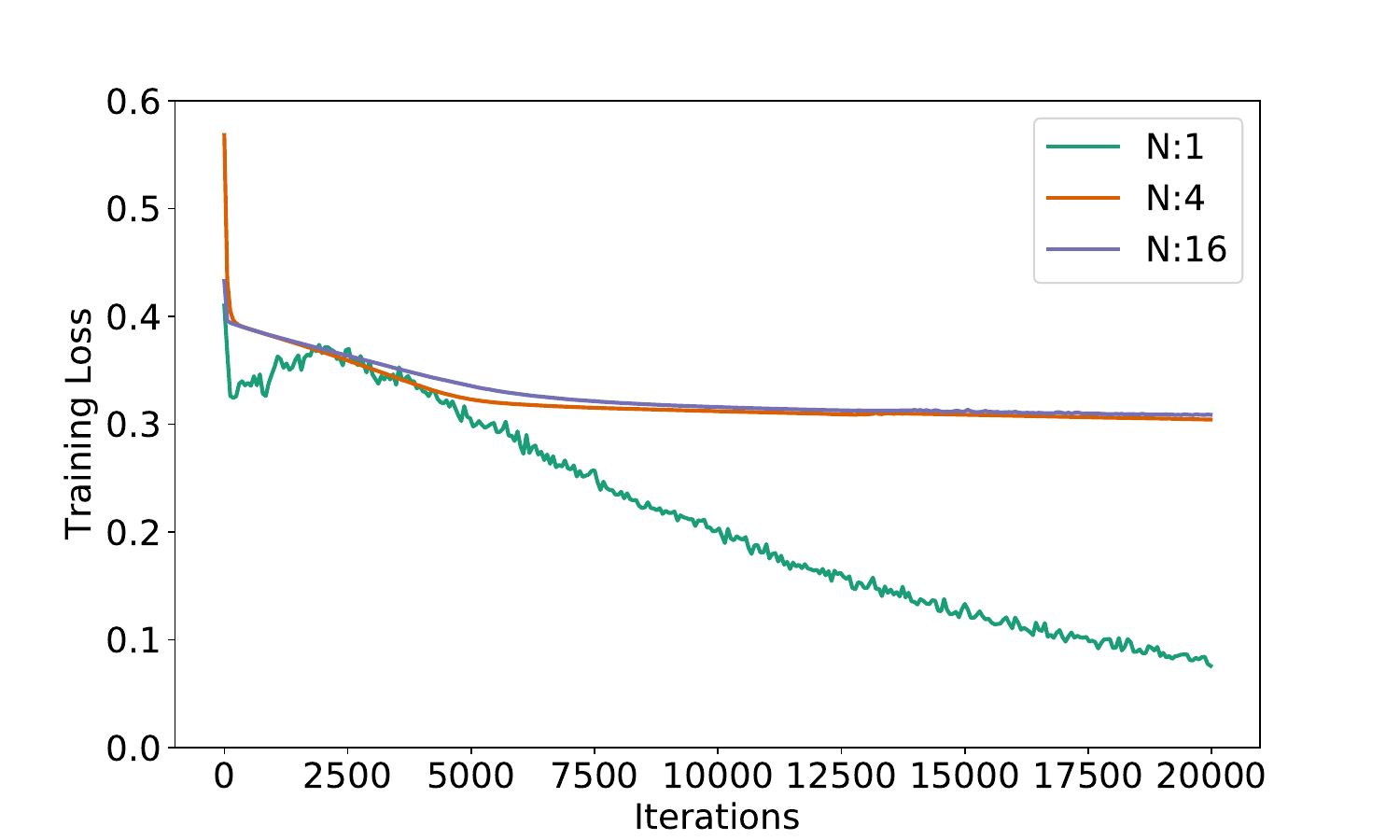}
    \caption{Variation of training loss with $N$.}
    \label{fig:mimic3highN}
     \end{subfigure}%
\begin{subfigure}[b]{.48\linewidth}
      \centering%
    \includegraphics[width=\linewidth]{./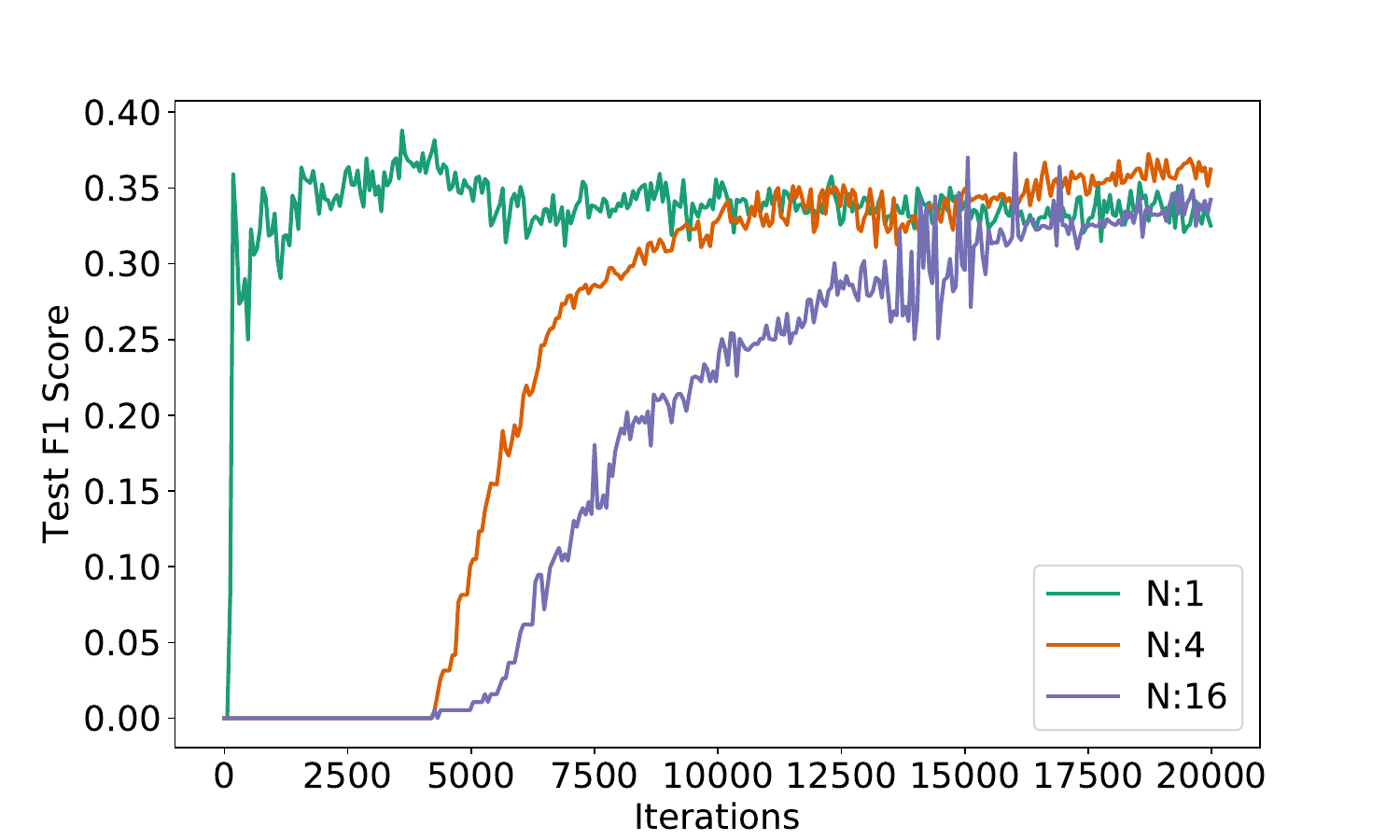}
    \caption{Variation with test F1 score with $N$.}
    \label{fig:mimic3highN_f1score}
     \end{subfigure}
\caption{Performance of TDCD on MIMIC III. Figure (a) shows training loss vs iterations $t$ and Figure (b) shows F1 Score vs. iterations $t$ for various values of $N$. We have $K=50$, $Q=10$.}
\label{fig.mimic3_var_N}
\squeezeuppicture
\end{figure*}

\subsubsection{Effect of the Number of Vertical Partitions $N$}
Finally, we study the impact of the number of vertical partitions $N$ on the convergence rate. We perform this experiment using the MIMIC-III dataset. 
We show the results in Fig.~\ref{fig:mimic3highN} and Fig.~\ref{fig:mimic3highN_f1score} for $Q=10$ and $K=50$. We plot the number of iterations on the X-axis. Since we fix the value of $Q$, the same trend will be observed for both the number of iterations and the training latency.
It is to be noted that varying $N$ introduces other forms of dissimilarity in this experiment as the neural network architectures are different.
We observe that the convergence error is worse for higher values of $N$. We also observe that lower values of $N$ reach comparable F1 scores faster than higher values, in terms of local iterations.
This is as expected from Theorem~\ref{thm.main_theorem}.

\section{CONCLUSION}
\label{sec:conclusion}
We have introduced TDCD, a communication efficient decentralized algorithm for a multi-tier network model with both horizontally and vertically partitioned data. We have provided a theoretical analysis of the algorithm convergence and its dependence on the number of vertical partitions and the number of local iterations. Finally, we have presented experimental results that validate our theory in practice. 
In future work, we plan to explore the possibility of hubs communicating with each other asynchronously to share information.


\appendix


\section{Proof of the Theorem and Supporting Lemmas} \label{AppendixOne}
In this section we provide the proofs of our theorem and the associated helping lemmas.

To facilitate the analysis, we first define an auxiliary local vector that represents the local view of the global model at each client. 
Let $y_{k,j}^t$ denote the auxiliary weight vector used to calculate the partial derivative $g_{k,j}$, 
\begin{align}
\ykj^t = [(\tjminus)^{\intercal}, (\tjk^t)^{\intercal}]^{\intercal}
\end{align}
where, $\tjminus$ denotes the vector of all coordinates of $\gm$ excluding block $j$ at iteration $t_0 \leq t$. $t_0$ is the most recent iteration when the client $k$ last updated the value of $\tjminus$ from its hub. $(\boldsymbol{\cdot})^\intercal$ represents transpose operation.
The partial derivative $\gkj$ at iteration $t$ is a function of $\ykj^t$ and the minibatch selected at $t_0$. With slight abuse of notation, we let:
\begin{align}
\gkj (\ykj^t) \eqdef  g_{k,j}(\Phi_{-k,j}^{\zeta^{t_0}},\Phi_{k,j}^{\zeta^{t_0}}; \zeta^{t_0}_{k,j}). 
\end{align}

We recall the  definition of the local stochastic partial derivative $g_{k,j}^t$ from (\ref{eqn.partial_derivative_def}). $g_{k,j}^t$ at iteration $t$ is a function of $\ykj^t$ and the minibatch selected at $t_0$. Our analysis looks at the evolution of the $\ykj^t~~,~~ t_0\leq t \leq t_0+Q$, i.e. the interval between the communication round $t_0$ and communication round $t_0+Q$.

We note that TDCD reuses each mini-batch for $Q$ local iterations between each communication round. This reuse implies that the local stochastic partial derivatives are not unbiased during the local iterations $t_0 +1 \leq t \leq t_0 +Q -1$. Here, $t_0$ denotes the last iteration before $t$ when clients synced with their hubs.
Using the conditional expectation on Assumption~\ref{assumption.unbiased} for the gradients calculated at iteration $t_0$, the following is satisfied at $t_0$:
\begin{align}
    &\ex_{\zeta^{t_0}}\left[\gkj(\ykj^{t_0}) \; \big| \{\ykj^r \}_{r=0}^{t_0} \right] =\pdj \Lc(\ykj^{t_0})
\\
    &\ex_{\zeta^{t_0}} \left[\| \gkj(\ykj^{t_0}) - \pdj \Lc(\ykj^{t_0}) \|^2  \;  \big| \{\ykj^r \}_{r=0}^{t_0}\right] \leq \sigma^2_j
\\
    &\ex_{\zeta^{t_0}} [\| \gkj(\ykj^{t_0}) \|^2 \;  \big| \{\ykj^r \}_{r=0}^{t_0}] \leq \sigma^2_j + \ex_{\zeta^{t_0}} \| \nabla_{(j)} \Lc(\ykj^{t_0}) \|^2. \label{eqn.simplified_variance_t0}
\end{align}
where, the expectation is over the mini-batch selected at iteration $t_0$, conditioned on the algorithm iterates $\ykj^r$ up until $t_0$.

For simplicity, we define $\condY \eqdef \{\ykj^r\}_{r=0}^{t_0}$ and ease the notation of the conditional expectation as the following:
\begin{align}
\ex_{\zeta^{t_0}} \left[\; \boldsymbol{\cdot} \; \fullconditional \right] \eqdef \ex^{t_0}.
\end{align}

Lastly, we denote the next local iteration when the clients will exchange intermediate information with their respective hub by 
\[
t_0^+ = t_0+Q-1.
\]

\subsection{Proof of Supporting Lemmas}
We recall that we can write the evolution of the global model as:
\begin{align}
\gm^{t+1} = \gm^t - \eta \bm{G}^t. \label{eqn.original}
\end{align}
Here, we  study the evolution of $\gm$ at each iteration. 

We next give several  lemmas to simplify our analysis. 

We first define a lemma to relate the expected local and expected global gradient.
\begin{lemmavfl} \label{lemma.expected_local_global_gradient}
    \begin{align}
    \BB{E} \left\Vert g_{k,j}(\ykj^{t_0}) \right\Vert^2  \leq \sigma_j^2 + \BB{E} \left\Vert \nabla_{(j)} \BC{L}(\ykj^{t_0}) \right\Vert^2 
    \end{align}
    where $\ex$ is the total expectation.
\end{lemmavfl}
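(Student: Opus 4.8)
The statement is essentially the standard bias--variance decomposition of a stochastic gradient combined with the tower property of conditional expectation. The only point requiring care is that the unbiasedness and variance control of Assumption~\ref{assumption.unbiased} are legitimately invoked \emph{at the communication iteration $t_0$} — where a fresh mini-batch $\zeta^{t_0}$ is drawn — even though they fail at the subsequent reused local iterates; the lemma is precisely the ``clean'' base estimate at that step.

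The plan is as follows. First I would condition on $\condY = \{\ykj^r\}_{r=0}^{t_0}$. Since $\ykj^{t_0} = [(\tjminus)^{\intercal},(\tjk^{t_0})^{\intercal}]^{\intercal}$ is built only from iterates up to and including $t_0$, it is measurable with respect to this $\sigma$-algebra and independent of the freshly drawn mini-batch $\zeta^{t_0}$; hence $\pdj \Lc(\ykj^{t_0})$ is constant under $\ex^{t_0}$. Applying the elementary identity $\ex^{t_0}\|X\|^2 = \ex^{t_0}\|X - \ex^{t_0}X\|^2 + \|\ex^{t_0}X\|^2$ with $X = \gkj(\ykj^{t_0})$, and then using the conditional unbiasedness $\ex^{t_0}[\gkj(\ykj^{t_0})] = \pdj\Lc(\ykj^{t_0})$ together with the conditional second-moment bound $\ex^{t_0}\|\gkj(\ykj^{t_0}) - \pdj\Lc(\ykj^{t_0})\|^2 \le \sigma_j^2$ — both of which are exactly the specialization of Assumption~\ref{assumption.unbiased} to iteration $t_0$ recorded in~(\ref{eqn.simplified_variance_t0}) — I obtain
\[
\ex^{t_0}\bigl\| \gkj(\ykj^{t_0}) \bigr\|^2 \;\le\; \sigma_j^2 + \bigl\| \pdj \Lc(\ykj^{t_0}) \bigr\|^2 .
\]

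Finally I would take total expectation over $\condY$ and invoke the tower property $\ex[\,\cdot\,] = \ex\bigl[\ex^{t_0}[\,\cdot\,]\bigr]$, which converts the right-hand side into $\sigma_j^2 + \ex\|\pdj\Lc(\ykj^{t_0})\|^2$ and yields the claimed bound. There is no substantive obstacle: the only conceptual subtlety — and the reason the inequality is stated at $t_0$ rather than at a generic local iterate — is that $\ex[\gkj] = \pdj\Lc$ holds only at a communication step, since within a round the mini-batch is reused and the stochastic partial derivatives become biased. This lemma therefore serves as the clean base case that later lemmas will extend to the biased intermediate iterates $t_0+1 \le t \le t_0^+$.
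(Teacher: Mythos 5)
Your proposal is correct and follows essentially the same route as the paper: both rest on the conditional bias--variance identity $\ex^{t_0}\|X\|^2 = \ex^{t_0}\|X-\ex^{t_0}X\|^2 + \|\ex^{t_0}X\|^2$ applied with $X = \gkj(\ykj^{t_0})$, the specialization of Assumption~\ref{assumption.unbiased} to the fresh mini-batch at iteration $t_0$, and the tower property to pass to total expectation. The paper merely writes the argument in the reverse direction, starting from $\sigma_j^2 \geq \ex\|\gkj(\ykj^{t_0}) - \pdj\Lc(\ykj^{t_0})\|^2$ and rearranging at the end, which is the same computation.
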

\begin{proof}
We observe that at iteration $t$, where the latest communication between clients and hubs took place at iteration $t_0\leq t$:
\begin{align}
\sigma_j^2 &\geq \BB{E} \left\Vert g_{k,j}(\ykj^{t_0}) - \nabla_{(j)} \BC{L}(\ykj^{t_0})) \right\Vert^2 
\\ 
&= \BB{E} \left[ \ex^{t_0} \left[ \left\Vert g_{k,j}(\ykj^{t_0}) - \nabla_{(j)} \BC{L}(\ykj^{t_0})) \right\Vert^2 \right] \right ] 
\\
& = \BB{E} \left[\ex^{t_0} \left[\left\Vert g_{k,j}(\ykj^{t_0}) - \ex^{t_0} \left[ g_{k,j}(\ykj^{t_0}) \right] \right\Vert^2  \right] \right] 
\\
& = \BB{E} \left[\ex^{t_0} \left[ \left\Vert g_{k,j}(\ykj^{t_0}) \right\Vert ^2  - \left\Vert \ex^{t_0} \left[ g_{k,j}(\ykj^{t_0})\right]\right\Vert^2 \right]\right] 
\\ 
&\overset{(a)}{=} \BB{E} \left[\ex^{t_0} \left\Vert g_{k,j}(\ykj^{t_0}) \right\Vert ^2 \right] - \BB{E} \left[\left\Vert \ex^{t_0} [g_{k,j}(\ykj^{t_0}) ] \right\Vert^2 \right]  
\\
& = \BB{E} \left\Vert g_{k,j}(\ykj^{t_0}) \right\Vert^2  - \BB{E} \left\Vert \nabla_{(j)} \BC{L}(\ykj^{t_0}) \right\Vert^2. \label{eqn.expected_local_global_gradient_end}
\end{align}

Here the equality in (a) follows from (\ref{assumption.3a}) in  Assumption~\ref{assumption.unbiased}. We can obtain Lemma~\ref{lemma.expected_local_global_gradient} by rearranging~(\ref{eqn.expected_local_global_gradient_end}).
\end{proof}

\begin{lemmavfl} \label{lemma.new_lemma}
Under assumptions on the learning rate $\eta \leq \frac{1}{2Q L_{max}}$,
\begin{align}
\sum \limits_{t=t_0}^{t_0^+} \sum\limits_{j=1}^N \frac{1}{K_j} \sum \limits_{k=1}^{K_j} \ex^{t_0} \| \gkj(\ykj^t) - \gkj(\ykj^{t_0}) \|^2 \leq 4Q^2(1+ Q) L^{ 2}_{max} \eta^2 \sum\limits_{j=1}^N \frac{1}{K_j} \sum \limits_{k=1}^{K_j} \ex^{t_0} \| \gkj(\ykj^{t_0})\|^2.
\end{align}

\begin{proof}
We start with simplifying the L.H.S. expression:
\begin{align}
&\sum\limits_{j=1}^N \frac{1}{K_j} \sum \limits_{k=1}^{K_j} \ex^{t_0} \| \gkj(\ykj^t) - \gkj(\ykj^{t_0}) \|^2
\\
&= \sum\limits_{j=1}^N \frac{1}{K_j} \sum \limits_{k=1}^{K_j} \ex^{t_0} \| \gkj(\ykj^t) + \gkj(\ykj^{t-1}) - \gkj(\ykj^{t-1}) - \gkj(\ykj^{t_0}) \|^2
\\
& \leq \sum\limits_{j=1}^N \frac{1}{K_j} \sum \limits_{k=1}^{K_j} \ex^{t_0} \left[ \left(1+ n\right)\| \gkj(\ykj^t) - \gkj(\ykj^{t-1})\|^2 + \left(1 +  \frac{1}{n}\right) \|\gkj(\ykj^{t-1}) - \gkj(\ykj^{t_0}) \|^2 \right ]
\\
& \overset{(a)}{\leq} \sum\limits_{j=1}^N \frac{1}{K_j} \sum \limits_{k=1}^{K_j} \ex^{t_0} \left[ \left(1+ n\right) L^{ 2}_{max}\| \ykj^t - \ykj^{t-1}\|^2 + \left(1 +  \frac{1}{n}\right) \|\gkj(\ykj^{t-1}) - \gkj(\ykj^{t_0}) \|^2 \right ]
\\
& \leq \sum\limits_{j=1}^N \frac{1}{K_j} \sum \limits_{k=1}^{K_j} \ex^{t_0} \left[ \left(1+ n\right) L^{ 2}_{max} \eta^2\| \gkj(\ykj^{t-1})\|^2 + \left(1 +  \frac{1}{n}\right) \|\gkj(\ykj^{t-1}) - \gkj(\ykj^{t_0}) \|^2 \right ]
\\
& \leq \sum\limits_{j=1}^N \frac{1}{K_j} \sum \limits_{k=1}^{K_j} \ex^{t_0} \left[ \left(1+ n\right) L^{ 2}_{max} \eta^2\| \gkj(\ykj^{t-1}) - \gkj(\ykj^{t_0}) + \gkj(\ykj^{t_0})\|^2 \right ] \nonumber
\\
&~~~~~+ \leq \sum\limits_{j=1}^N \frac{1}{K_j} \sum \limits_{k=1}^{K_j} \ex^{t_0} \left(1 +  \frac{1}{n}\right) \|\gkj(\ykj^{t-1}) - \gkj(\ykj^{t_0}) \|^2
\\
& \leq \sum\limits_{j=1}^N \frac{1}{K_j} \sum \limits_{k=1}^{K_j} \ex^{t_0} \left[ 2\left(1+ n\right) L^{ 2}_{max} \eta^2\| \gkj(\ykj^{t-1}) - \gkj(\ykj^{t_0})\|^2 + \left(1 +  \frac{1}{n}\right) \|\gkj(\ykj^{t-1}) - \gkj(\ykj^{t_0}) \|^2\right ] \nonumber
\\
&~~~~~+ 2\left(1+ n\right) L^{ 2}_{max} \eta^2 \sum\limits_{j=1}^N \frac{1}{K_j} \sum \limits_{k=1}^{K_j} \ex^{t_0} \| \gkj(\ykj^{t_0})\|^2
\\
& = \sum\limits_{j=1}^N \frac{1}{K_j} \sum \limits_{k=1}^{K_j} \left ( 2\left(1+ n\right) L^{ 2}_{max} \eta^2 + \left(1 +  \frac{1}{n}\right) \right) \ex^{t_0}\| \gkj(\ykj^{t-1}) - \gkj(\ykj^{t_0})\|^2 \nonumber
\\
&~~~~~ + 2\left(1+ n\right) L^{ 2}_{max} \eta^2 \sum\limits_{j=1}^N \frac{1}{K_j} \sum \limits_{k=1}^{K_j}\ex^{t_0} \| \gkj(\ykj^{t_0})\|^2. \label{eqn.new_method_8}
\end{align}
where (a) follows from Assumption~\ref{assumption.lipschitz}.
\\
We are now interested in forming a recurrence relation.
\\
On setting $n = Q$ and $\eta \leq \frac{1}{2Q L_{max}}$ in the first term of (\ref{eqn.new_method_8}), we get,
\begin{align}
&  \sum\limits_{j=1}^N \frac{1}{K_j} \sum \limits_{k=1}^{K_j} \ex^{t_0}\| \gkj(\ykj^t) - \gkj (\ykj^{t_0})\|^2
\\
& \leq\sum\limits_{j=1}^N \frac{1}{K_j} \sum \limits_{k=1}^{K_j} \left ( 2(1+ Q) L^{ 2}_{max} \eta^2 + (1 +  \frac{1}{Q}) \right) \ex^{t_0} \| \gkj(\ykj^{t-1}) - \gkj(\ykj^{t_0})\|^2 \nonumber
\\
&~~~~~ + 2(1+ Q) L^{ 2}_{max} \eta^2 \sum\limits_{j=1}^N \ex^{t_0} \| \gkj(\ykj^{t_0})\|^2
\\
& \leq\sum\limits_{j=1}^N \frac{1}{K_j} \sum \limits_{k=1}^{K_j} \left ( \frac{1+Q}{2Q^2} + 1 + \frac{1}{Q} \right) \ex^{t_0} \| \gkj(\ykj^{t-1}) - \gkj(\ykj^{t_0})\|^2 \nonumber
\\
&~~~~~ + 2(1+ Q) L^{ 2}_{max} \eta^2 \sum\limits_{j=1}^N \frac{1}{K_j} \sum \limits_{k=1}^{K_j} \ex^{t_0} \| \gkj(\ykj^{t_0})\|^2
\\
& \leq\sum\limits_{j=1}^N \frac{1}{K_j} \sum \limits_{k=1}^{K_j} \left ( \frac{2Q}{2Q^2} + 1 + \frac{1}{Q} \right) \ex^{t_0} \| \gkj(\ykj^{t-1}) - \gkj(\ykj^{t_0})\|^2 \nonumber
\\
&~~~~~ + 2(1+ Q)  L^{ 2}_{max} \eta^2 \sum\limits_{j=1}^N \frac{1}{K_j} \sum \limits_{k=1}^{K_j}\ex^{t_0} \| \gkj(\ykj^{t_0})\|^2
\\
& \leq\sum\limits_{j=1}^N \frac{1}{K_j} \sum \limits_{k=1}^{K_j} \left (1 + \frac{2}{Q} \right) \ex^{t_0} \| \gkj(\ykj^{t-1}) - \gkj(\ykj^{t_0})\|^2 \nonumber
\\
&~~~~~ + 2(1+ Q) L^{ 2}_{max} \eta^2  \sum\limits_{j=1}^N \frac{1}{K_j} \sum \limits_{k=1}^{K_j} \ex^{t_0} \| \gkj(\ykj^{t_0})\|^2.
\end{align}

We now have a recurrence relation:
\begin{align}
&\underbrace{\sum\limits_{j=1}^N \frac{1}{K_j} \sum \limits_{k=1}^{K_j} \ex^{t_0}\| \gkj(\ykj^t) - \gkj (\ykj^{t_0})\|^2}_{\Psi^t} \nonumber
\\
&\leq \underbrace{\left (1 + \frac{2}{Q} \right)}_{A} \underbrace{\sum\limits_{j=1}^N \frac{1}{K_j} \sum \limits_{k=1}^{K_j} \ex^{t_0} \| \gkj(\ykj^{t-1}) - \gkj(\ykj^{t_0})\|^2}_{\Psi^{t-1}} \nonumber
\\
&~~~~~ + \underbrace{2(1+ Q) L^{ 2}_{max} \eta^2  \sum\limits_{j=1}^N \frac{1}{K_j} \sum \limits_{k=1}^{K_j} \ex^{t_0} \| \gkj(\ykj^{t_0})\|^2}_{\Xi} .\label{eqn.recursion_main}
\end{align}
\\
More specifically,  we have :
\begin{align}
\Psi^t \leq A \Psi^{t-1} + \Xi.
\end{align}
\\
We then expand the recurrence relation as follows from $t=t_0+1,t_0+2...$:
\begin{align}
&\Psi^{t} \leq A^{t - t_0} \Psi^{t_0} + \Xi \sum \limits_{p=0}^{t-t_0-1} A^p.
\end{align}
\\
We note that $\Psi_0 = \sum\limits_{j=1}^N \frac{1}{K_j} \sum \limits_{k=1}^{K_j} \ex^{t_0}\| \gkj(\ykj^{t_0}) - \gkj (\ykj^{t_0})\|^2 = 0$. Therefore we have that
\begin{align}
\Psi^{t} & \leq \Xi \sum \limits_{p=0}^{t-t_0-1} A^p
\\
& \leq \Xi \frac{A^{t-t_0}-1}{A-1}.
\end{align}\tabularnewline
\\
Now summing $\Psi^{t}$ over the set of local iterations between $t_0$ and $t_0^+$, where $t_0^+ \eqdef t_0 + Q-1$, we get:
\begin{align}
\sum \limits_{t=t_0}^{t_0^+} \Psi^t
& \leq \Xi \sum \limits_{t=t_0}^{t_0^+} \frac{A^{t-t_0}-1}{A-1}
\\
& \leq \frac{\Xi}{A-1} \left ( \left( \sum \limits_{t=t_0}^{t_0^+} A^{t-t_0} \right)- Q \right)
\\
& \leq \frac{\Xi}{A-1} \left ( \frac{A^Q -1}{A-1}- Q \right)
\\
& \leq \frac{\Xi}{(1+ \frac{2}{Q})-1} \left ( \frac{(1+ \frac{2}{Q})^Q -1}{(1+ \frac{2}{Q})-1}- Q \right)
\\
& = \frac{Q\Xi}{2} \left ( \frac{Q(1+ \frac{2}{Q})^Q -1}{2}- Q \right)
\\
& = \frac{Q^2 \Xi}{2} \left ( \frac{(1+ \frac{2}{Q})^Q -1}{2}- 1 \right)
\\
& \overset{(a)}{\leq} \frac{Q^2 \Xi}{2} \left ( \frac{e^2-1}{2}- 1 \right)
\\
& \leq 2Q^2 \Xi \label{eqn.new_method_9}
\end{align}
where we upper bound the terms inside the parenthesis of (a) by 4. 
\\
Plugging in the value of $\Xi$ from (\ref{eqn.recursion_main}) into (\ref{eqn.new_method_9}) we get:
\begin{align}
& \sum \limits_{t=t_0}^{t_0^+} \sum\limits_{j=1}^N \frac{1}{K_j} \sum \limits_{k=1}^{K_j} \ex^{t_0} \| \gkj(\ykj^t) - \gkj(\ykj^{t_0}) \|^2 \nonumber
\\
&\leq 4Q^2(1+ Q) L^{ 2}_{max} \eta^2 \sum\limits_{j=1}^N \frac{1}{K_j} \sum \limits_{k=1}^{K_j} \ex^{t_0} \| \gkj(\ykj^{t_0})\|^2.
\end{align}

\end{proof}
\end{lemmavfl}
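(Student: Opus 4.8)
The plan is to isolate the quantity $\Psi^t \eqdef \sum_{j=1}^N \frac{1}{K_j}\sum_{k=1}^{K_j} \ex^{t_0}\|\gkj(\ykj^t) - \gkj(\ykj^{t_0})\|^2$, show that it satisfies an affine recurrence $\Psi^t \le A\,\Psi^{t-1} + \Xi$ over the local iterations $t_0 < t \le t_0^+$, solve the recurrence, and then sum it over one communication round. First I would insert $\pm\gkj(\ykj^{t-1})$ inside the norm and apply Young's inequality $\|a+b\|^2 \le (1+n)\|a\|^2 + (1+\tfrac1n)\|b\|^2$ with a free parameter $n$. The term $\|\gkj(\ykj^t) - \gkj(\ykj^{t-1})\|^2$ is controlled by the $L_{k,j}$-Lipschitz property of $\gkj$ from Assumption~\ref{assumption.lipschitz}, bounded by $L_{max}^2\|\ykj^t - \ykj^{t-1}\|^2$. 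The key structural observation here is that throughout the $Q$ local steps the cross-silo coordinates $\tjminus$ are frozen, so $\ykj^t$ and $\ykj^{t-1}$ differ only in the block-$j$ coordinates, where the update is exactly one gradient step; hence $\|\ykj^t - \ykj^{t-1}\|^2 = \eta^2\|\gkj(\ykj^{t-1})\|^2$.

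Next I would again insert $\pm\gkj(\ykj^{t_0})$, this time inside $\|\gkj(\ykj^{t-1})\|^2$, and use $\|a+b\|^2 \le 2\|a\|^2 + 2\|b\|^2$ to rewrite everything in terms of $\|\gkj(\ykj^{t-1}) - \gkj(\ykj^{t_0})\|^2$ (which reassembles into $\Psi^{t-1}$ after averaging over $j,k$) and $\|\gkj(\ykj^{t_0})\|^2$ (which forms $\Xi$). Choosing $n = Q$ and invoking the hypothesis $\eta \le \tfrac{1}{2QL_{max}}$ makes the coefficient of $\Psi^{t-1}$ collapse: $2(1+Q)L_{max}^2\eta^2 + (1+\tfrac1Q) \le \tfrac{1+Q}{2Q^2} + 1 + \tfrac1Q \le 1 + \tfrac2Q$, using $1+Q \le 2Q$. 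This yields the recurrence with $A = 1+\tfrac2Q$ and $\Xi = 2(1+Q)L_{max}^2\eta^2 \sum_j\frac{1}{K_j}\sum_k \ex^{t_0}\|\gkj(\ykj^{t_0})\|^2$.

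Finally, since $\Psi^{t_0} = 0$, unrolling gives $\Psi^t \le \Xi\,\frac{A^{t-t_0-1}-1}{A-1}$, and summing over $t = t_0,\dots,t_0^+$ with $A-1 = \tfrac2Q$ reduces the bound to $\tfrac{Q^2\Xi}{2}\big(\tfrac{A^Q-1}{2}-1\big)$; bounding $A^Q = (1+\tfrac2Q)^Q \le e^2$ shows the parenthesized factor is at most $4$, so $\sum_{t=t_0}^{t_0^+}\Psi^t \le 2Q^2\Xi = 4Q^2(1+Q)L_{max}^2\eta^2\sum_j\frac{1}{K_j}\sum_k\ex^{t_0}\|\gkj(\ykj^{t_0})\|^2$, which is the claim. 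The main obstacle I anticipate is the constant bookkeeping: everything hinges on the recursion coefficient collapsing exactly to $1+\tfrac2Q$ so that the resulting geometric factor $(1+\tfrac2Q)^Q$ stays bounded by $e^2$ rather than growing with $Q$, and Young's inequality is applied twice with the same parameter $n$, so the cross-terms must be tracked carefully to land on clean constants.
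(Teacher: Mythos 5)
Your proposal is correct and follows essentially the same route as the paper's proof: the same insertion of $\pm\gkj(\ykj^{t-1})$ with Young's inequality, the same use of the blockwise Lipschitz bound together with the observation that only the block-$j$ coordinates of $\ykj$ move during local steps, the same choice $n=Q$ collapsing the coefficient to $1+\tfrac{2}{Q}$, and the same unrolling and geometric-series bound via $(1+\tfrac{2}{Q})^Q\le e^2$ to arrive at $2Q^2\Xi$. No gaps to report.
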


\subsection{Proof of the Theorem~\ref{thm.main_theorem}}
We now prove our main result.

\begin{proof}
We have that $t_0^+ = t_0 +Q -1$. We return to the expression in (\ref{eqn.original}), on which we use the Lipschitz condition and first order taylor series expansion to obtain the following:
\begin{align}
&\Lc(\gm^{t_0^+}) - \Lc(\gm^{t_0})\nonumber
\\
& \leq \left \langle \nabla \Lc(\gm^{t_0}), \gm^{t_0^+} - \gm^{t_0} \right \rangle + \frac{L}{2} \| \gm^{t_0^+} - \gm^{t_0} \|^2 
\\
& \overset{(a)}{\leq} - \left \langle \nabla \Lc(\gm^{t_0}), \eta \sum \limits_{t=t_0}^{t_0^+} \bm{G}^t \right \rangle + \frac{L}{2} \|  \eta \sum \limits_{t=t_0}^{t_0^+} \bm{G}^t  \|^2
\\
& = - \eta \sum \limits_{t=t_0}^{t_0^+} \sum \limits_{j=1}^N \left \langle \nabla \Lc_{(j)}(\gm^{t_0}), \bm{G}_{(j)}^t \right \rangle + \frac{L}{2} \|  \eta \sum \limits_{t=t_0}^{t_0^+} \bm{G}^t  \|^2
\\
& = - \eta \sum \limits_{t=t_0}^{t_0^+} \sum \limits_{j=1}^N \frac{1}{K_j} \sum \limits_{k=1}^{K_j} \left \langle \nabla \Lc_{(j)}(\gm^{t_0}), \gkj (\ykj^t) \right \rangle + \frac{L}{2} \|  \eta \sum \limits_{t=t_0}^{t_0^+} \bm{G}^t  \|^2
\\
& = \eta \sum \limits_{t=t_0}^{t_0^+} \sum \limits_{j=1}^N \frac{1}{K_j} \sum \limits_{k=1}^{K_j} \left \langle - \nabla \Lc_{(j)}(\gm^{t_0}), \gkj (\ykj^t)- \gkj (\ykj^{t_0}) \right \rangle
\\
&~~~~~~~~~~~~ - \eta \sum \limits_{t=t_0}^{t_0^+} \sum \limits_{j=1}^N \frac{1}{K_j} \sum \limits_{k=1}^{K_j} \left \langle \nabla \Lc_{(j)}(\gm^{t_0}), \gkj (\ykj^{t_0}) \right \rangle + \frac{L}{2} \|  \eta \sum \limits_{t=t_0}^{t_0^+} \bm{G}^t  \|^2
\\
& \overset{(b)}{\leq} \frac{\eta}{2} \sum \limits_{t=t_0}^{t_0^+} \sum \limits_{j=1}^N \frac{1}{K_j} \sum \limits_{k=1}^{K_j} \left[ \| \nabla \Lc_{(j)}(\gm^{t_0})\|^2 + \|\gkj (\ykj^t)- \gkj (\ykj^{t_0})\|^2 \right ]
\\
&~~~~~~~~~~~~ - \eta \sum \limits_{t=t_0}^{t_0^+} \sum \limits_{j=1}^N \frac{1}{K_j} \sum \limits_{k=1}^{K_j} \left \langle \nabla \Lc_{(j)}(\gm^{t_0}), \gkj (\ykj^{t_0}) \right \rangle + \frac{L}{2} \|  \eta \sum \limits_{t=t_0}^{t_0^+} \bm{G}^t  \|^2
 \label{line.new_method_1}
\end{align}
where (a) follows from (\ref{eqn.original}) and (b) follows since $A\cdot B = \frac{1}{2} (A^2 + B^2 - (A+B)^2) \leq \frac{1}{2} (A^2 + B^2)$.
\\
Taking conditional expectation at $t_0$ on both sides of (\ref{line.new_method_1}), we get:
\begin{align}
& \ex^{t_0} [ \Lc(\gm^{t_0^+})] - \Lc(\gm^{t_0}) \nonumber
\\
& \leq \frac{\eta}{2} \sum \limits_{t=t_0}^{t_0^+} \sum \limits_{j=1}^N \frac{1}{K_j} \sum \limits_{k=1}^{K_j} \left[ \ex^{t_0}\| \nabla \Lc_{(j)}(\gm^{t_0})\|^2 + \ex^{t_0}\|\gkj (\ykj^t)- \gkj (\ykj^{t_0})\|^2 \right ] \nonumber
\\
&~~~~~~~~~~~~ - \eta \sum \limits_{t=t_0}^{t_0^+} \sum \limits_{j=1}^N \frac{1}{K_j} \sum \limits_{k=1}^{K_j} \ex^{t_0} \left \langle \nabla \Lc_{(j)}(\gm^{t_0}), \gkj (\ykj^{t_0}) \right \rangle + \frac{L}{2} \ex^{t_0} \|  \eta \sum \limits_{t=t_0}^{t_0^+} \bm{G}^t  \|^2
\\
& \leq \frac{\eta}{2} \sum \limits_{t=t_0}^{t_0^+} \sum \limits_{j=1}^N \frac{1}{K_j} \sum \limits_{k=1}^{K_j} \| \nabla \Lc_{(j)}(\gm^{t_0})\|^2 + \frac{\eta}{2} \sum \limits_{t=t_0}^{t_0^+} \sum \limits_{j=1}^N \frac{1}{K_j} \sum \limits_{k=1}^{K_j}  \ex^{t_0}\|\gkj (\ykj^t)- \gkj (\ykj^{t_0})\|^2 \nonumber
\\
&~~~~~~~~~~~~ - \eta \sum \limits_{t=t_0}^{t_0^+} \sum \limits_{j=1}^N \frac{1}{K_j} \sum \limits_{k=1}^{K_j} \ex^{t_0} \left \langle \nabla \Lc_{(j)}(\gm^{t_0}), \gkj (\ykj^{t_0}) \right \rangle + \frac{L}{2} \ex^{t_0} \|  \eta \sum \limits_{t=t_0}^{t_0^+} \bm{G}^t  \|^2
\\
& = \frac{\eta}{2} \sum \limits_{t=t_0}^{t_0^+} \sum \limits_{j=1}^N \frac{1}{K_j} \sum \limits_{k=1}^{K_j} \| \nabla \Lc_{(j)}(\gm^{t_0})\|^2 + \frac{\eta}{2} \sum \limits_{t=t_0}^{t_0^+} \sum \limits_{j=1}^N \frac{1}{K_j} \sum \limits_{k=1}^{K_j}  \ex^{t_0}\|\gkj (\ykj^t)- \gkj (\ykj^{t_0})\|^2 \nonumber
\\
&~~~~~~~~~~~~ - \eta \sum \limits_{t=t_0}^{t_0^+} \sum \limits_{j=1}^N \frac{1}{K_j} \sum \limits_{k=1}^{K_j} \| \nabla \Lc_{(j)}(\gm^{t_0})\|^2 + \frac{L}{2} \ex^{t_0} \|  \eta \sum \limits_{t=t_0}^{t_0^+} \bm{G}^t  \|^2
\\
& = -\frac{\eta}{2} \sum \limits_{t=t_0}^{t_0^+} \sum \limits_{j=1}^N \frac{1}{K_j} \sum \limits_{k=1}^{K_j} \| \nabla \Lc_{(j)}(\gm^{t_0})\|^2 + \frac{\eta}{2} \sum \limits_{t=t_0}^{t_0^+} \sum \limits_{j=1}^N \frac{1}{K_j} \sum \limits_{k=1}^{K_j}  \ex^{t_0}\|\gkj (\ykj^t)- \gkj (\ykj^{t_0})\|^2 \nonumber
\\
&~~~~~~~~~~~~ + \frac{L}{2} \ex^{t_0} \|  \eta \sum \limits_{t=t_0}^{t_0^+} \bm{G}^t  \|^2
\\
& \overset{(a)}{\leq} -\frac{\eta}{2} \sum \limits_{t=t_0}^{t_0^+} \sum \limits_{j=1}^N \frac{1}{K_j} \sum \limits_{k=1}^{K_j} \| \nabla \Lc_{(j)}(\gm^{t_0})\|^2 + \frac{\eta}{2} \sum \limits_{t=t_0}^{t_0^+} \sum \limits_{j=1}^N \frac{1}{K_j} \sum \limits_{k=1}^{K_j}  \ex^{t_0}\|\gkj (\ykj^t)- \gkj (\ykj^{t_0})\|^2 \nonumber
\\
&~~~~~~~~~~~~ + \eta^2 LQ \ex^{t_0} \sum \limits_{t=t_0}^{t_0^+} \| \bm{G}^t - \bm{G}^{t_0} \|^2 + \eta^2 LQ \ex^{t_0} \sum \limits_{t=t_0}^{t_0^+} \|\bm{G}^{t_0}  \|^2
\\
& \leq -\frac{\eta}{2} \sum \limits_{t=t_0}^{t_0^+} \sum \limits_{j=1}^N \frac{1}{K_j} \sum \limits_{k=1}^{K_j} \| \nabla \Lc_{(j)}(\gm^{t_0})\|^2 + \frac{\eta}{2} \sum \limits_{t=t_0}^{t_0^+} \sum \limits_{j=1}^N \frac{1}{K_j} \sum \limits_{k=1}^{K_j}  \ex^{t_0}\|\gkj (\ykj^t)- \gkj (\ykj^{t_0})\|^2 \nonumber
\\
&~~~~~~~~~~~~ + \eta^2 LQ \ex^{t_0} \sum \limits_{t=t_0}^{t_0^+} \sum \limits_{j=1}^N \frac{1}{K_j} \sum \limits_{k=1}^{K_j}  \ex^{t_0}\|\gkj (\ykj^t)- \gkj (\ykj^{t_0})\|^2 + \eta^2 LQ \ex^{t_0} \sum \limits_{t=t_0}^{t_0^+} \|\bm{G}^{t_0}  \|^2
\\
& = -\frac{\eta}{2} \sum \limits_{t=t_0}^{t_0^+} \sum \limits_{j=1}^N \frac{1}{K_j} \sum \limits_{k=1}^{K_j} \| \nabla \Lc_{(j)}(\gm^{t_0})\|^2 + \frac{\eta}{2} (1 + 2\eta L Q) \sum \limits_{t=t_0}^{t_0^+} \sum \limits_{j=1}^N \frac{1}{K_j} \sum \limits_{k=1}^{K_j}  \ex^{t_0}\|\gkj (\ykj^t)- \gkj (\ykj^{t_0})\|^2 \nonumber
\\
&~~~~~~~~~~~~ + \eta^2 LQ \ex^{t_0} \sum \limits_{t=t_0}^{t_0^+} \|\bm{G}^{t_0}  \|^2
\\
& \leq -\frac{\eta}{2} \sum \limits_{t=t_0}^{t_0^+} \sum \limits_{j=1}^N \frac{1}{K_j} \sum \limits_{k=1}^{K_j} \| \nabla \Lc_{(j)}(\gm^{t_0})\|^2 + \frac{\eta}{2} (1 + 2\eta L Q) \sum \limits_{t=t_0}^{t_0^+} \sum \limits_{j=1}^N \frac{1}{K_j} \sum \limits_{k=1}^{K_j}  \ex^{t_0}\|\gkj (\ykj^t)- \gkj (\ykj^{t_0})\|^2 \nonumber
\\
&~~~~~~~~~~~~ + \eta^2 LQ \sum \limits_{t=t_0}^{t_0^+} \sum \limits_{j=1}^N \frac{1}{K_j} \sum \limits_{k=1}^{K_j} \ex^{t_0} \|\gkj(\ykj^{t_0})  \|^2
\\
& \overset{(b)}{\leq} -\frac{\eta}{2} \sum \limits_{t=t_0}^{t_0^+} \sum \limits_{j=1}^N \frac{1}{K_j} \sum \limits_{k=1}^{K_j} \| \nabla \Lc_{(j)}(\gm^{t_0})\|^2 + 2\eta Q^2 (1 + 2\eta L Q)(1+ Q) L^{ 2}_{max} \eta^2 \sum\limits_{j=1}^N \frac{1}{K_j} \sum \limits_{k=1}^{K_j} \ex^{t_0} \| \gkj(\ykj^{t_0})\|^2 \nonumber
\\
&~~~~~~~~~~~~ + \eta^2 LQ  \sum \limits_{t=t_0}^{t_0^+} \sum \limits_{j=1}^N \frac{1}{K_j} \sum \limits_{k=1}^{K_j} \ex^{t_0} \|\gkj(\ykj^{t_0})  \|^2
\\
& = -\frac{\eta Q}{2} \sum \limits_{j=1}^N \frac{1}{K_j} \sum \limits_{k=1}^{K_j} \| \nabla \Lc_{(j)}(\gm^{t_0})\|^2 + 2\eta Q^2 (1 + 2\eta L Q)(1+ Q) L^{ 2}_{max} \eta^2 \sum\limits_{j=1}^N \frac{1}{K_j} \sum \limits_{k=1}^{K_j} \ex^{t_0} \| \gkj(\ykj^{t_0})\|^2 \nonumber
\\
&~~~~~~~~~~~~ + \eta^2 LQ^2 \sum \limits_{j=1}^N \frac{1}{K_j} \sum \limits_{k=1}^{K_j} \ex^{t_0} \|\gkj(\ykj^{t_0})  \|^2
\\
& = -\frac{\eta Q}{2} \sum \limits_{j=1}^N \frac{1}{K_j} \sum \limits_{k=1}^{K_j} \| \nabla \Lc_{(j)}(\gm^{t_0})\|^2 \nonumber
\\
&~~~~~~~~~~~~ + (\eta^2 L Q^2 + 2\eta Q^2 (1 + 2\eta L Q)(1+ Q) L^{ 2}_{max} \eta^2 ) \sum\limits_{j=1}^N \frac{1}{K_j} \sum \limits_{k=1}^{K_j} \ex^{t_0} \| \gkj(\ykj^{t_0})\|^2
\\
& \overset{(c)}{\leq} -\frac{\eta Q}{2} \sum \limits_{j=1}^N \frac{1}{K_j} \sum \limits_{k=1}^{K_j} \| \nabla \Lc_{(j)}(\gm^{t_0})\|^2 \nonumber
\\
&~~~~~~~~~~~~ + (\eta^2 L Q^2 + 2\eta Q^2 (1 + 2\eta L Q)(1+ Q) L^{ 2}_{max} \eta^2 ) \sum\limits_{j=1}^N \frac{1}{K_j} \sum \limits_{k=1}^{K_j}\left( \| \nabla \Lc_{(j)}(\gm^{t_0})\|^2 + \sigma_j^2\right)
\\
& \leq-  \left(\frac{\eta Q}{2} - \eta^2 L Q^2 - 2\eta Q^2 (1 + 2\eta L Q)(1+ Q) L^{ 2}_{max} \eta^2 \right) \sum \limits_{j=1}^N \frac{1}{K_j} \sum \limits_{k=1}^{K_j} \| \nabla \Lc_{(j)}(\gm^{t_0})\|^2 \nonumber
\\
&~~~~~~~~~~~~ + (\eta^2 L Q^2 + 2\eta Q^2 (1 + 2\eta L Q)(1+ Q) L^{ 2}_{max} \eta^2 ) \sum\limits_{j=1}^N \frac{1}{K_j} \sum \limits_{k=1}^{K_j}\sigma_j^2
\\
& = -  \frac{\eta Q}{2} \left(1 - 2\eta L Q - 4Q (1 + 2\eta L Q)(1+ Q) L^{ 2}_{max} \eta^2 \right) \sum \limits_{j=1}^N \frac{1}{K_j} \sum \limits_{k=1}^{K_j} \| \nabla \Lc_{(j)}(\gm^{t_0})\|^2 \nonumber
\\
&~~~~~~~~~~~~ + (\eta^2 L Q^2 + 2\eta Q^2 (1 + 2\eta L Q)(1+ Q) L^{ 2}_{max} \eta^2 ) \sum\limits_{j=1}^N \frac{1}{K_j} \sum \limits_{k=1}^{K_j}\sigma_j^2
\\
& \leq -  \frac{\eta Q}{2} \left(1 - 2\eta L Q - 8Q^2 L^{ 2}_{max} \eta^2 -  16Q^3 L L^{ 2}_{max} \eta^3 \right) \sum \limits_{j=1}^N \frac{1}{K_j} \sum \limits_{k=1}^{K_j} \| \nabla \Lc_{(j)}(\gm^{t_0})\|^2 \nonumber
\\
&~~~~~~~~~~~~ + (\eta^2 L Q^2 + 4\eta Q^3 (1 + 2\eta L Q) L^{ 2}_{max} \eta^2 ) \sum\limits_{j=1}^N \frac{1}{K_j} \sum \limits_{k=1}^{K_j}\sigma_j^2 \label{eqn.new_method_4}
\end{align}
where in (a) the extra $Q$ in the last term arises since we pull the summation over $t$ from inside the norm to outside and break the norm into two parts. We obtain (b) from Lemma~\ref{lemma.new_lemma} and (c) follows from (\ref{eqn.simplified_variance_t0}).
\\
Let $\eta \leq \frac{1}{8 Q \max (L, L_{max})}$. We can then bound (\ref{eqn.new_method_4}) as follows: 
\begin{align}
\ex^{t_0} [ \Lc(\gm^{t_0^+})] - \Lc(\gm^{t_0})
& \leq -  \frac{\eta Q}{2} \left(1 - \frac{1}{4} - \frac{1}{8} -  \frac{1}{32} \right) \sum \limits_{j=1}^N \frac{1}{K_j} \sum \limits_{k=1}^{K_j} \| \nabla \Lc_{(j)}(\gm^{t_0})\|^2 \nonumber
\\
&~~~~~~~~~~~~ + (\eta^2 L Q^2 + 4\eta Q^3 (1 + 2\eta L Q) L^{ 2}_{max} \eta^2 ) \sum\limits_{j=1}^N \frac{1}{K_j} \sum \limits_{k=1}^{K_j}\sigma_j^2
\\
& \leq -  \frac{\eta Q}{4} \sum \limits_{j=1}^N \frac{1}{K_j} \sum \limits_{k=1}^{K_j} \| \nabla \Lc_{(j)}(\gm^{t_0})\|^2 \nonumber
\\
&~~~~~~~~~~~~ + (\eta^2 L Q^2 + 4\eta^3 Q^3 L^{ 2}_{max} + 8\eta^4 L Q^4 L^{ 2}_{max} ) \sum\limits_{j=1}^N \frac{1}{K_j} \sum \limits_{k=1}^{K_j}\sigma_j^2 
\\
& \overset{(a)}{\leq} -  \frac{\eta Q}{4} \| \nabla \Lc(\gm^{t_0})\|^2 \nonumber
\\
&~~~~~~~~~~~~ + (\eta^2 L Q^2 + 4\eta^3 Q^3 L^{ 2}_{max} + 8\eta^4 L Q^4 L^{ 2}_{max} ) \sum\limits_{j=1}^N \frac{1}{K_j} \sum \limits_{k=1}^{K_j}\sigma_j^2 \label{eqn.new_method_5}
\end{align}
where (a) follows from using the definition of $\Lc$ in the first term.

Rearranging the terms in (\ref{eqn.new_method_5}), we get:
\begin{align}
\| \nabla \Lc(\gm^{t_0})\|^2 
& \leq 4 \frac{\Lc(\gm^{t_0}) - \ex^{t_0} [ \Lc(\gm^{t_0^+})]}{\eta Q}  \nonumber
\\
&~~~~ + 4 (\eta L Q + 4\eta^2 Q^2 L^{ 2}_{max} + 8\eta^3 Q^3L L^{ 2}_{max} ) \sum\limits_{j=1}^N \frac{1}{K_j} \sum \limits_{k=1}^{K_j}\sigma_j^2 .\label{eqn.new_method_6}
\end{align}

Now averaging over all the communication rounds, i.e. all the intervals of $Q$ local iterations over $t_0 = 0, 1, \cdots R-1$, such that $T=QR$, and taking total expectation: 
\begin{align}
&\frac{1}{R}\sum\limits_{t_0=0}^{R-1} \ex \left[ \| \nabla \Lc(\gm^{t_0})\|^2 \right]  \nonumber
\\
&~~~ \leq  4 \frac{1}{R} \ex\sum\limits_{t_0}^{R-1} \frac{\Lc(\gm^{t_0}) - \ex^{t_0} [ \Lc(\gm^{t_0^+})]}{\eta Q}  \nonumber
\\
&~~~~~~ + \frac{1}{R} \ex \sum\limits_{t_0}^{R-1} 4 (\eta L Q + 4\eta^2 Q^2 L^{ 2}_{max} + 8\eta^3 Q^3 L L^{ 2}_{max} ) \sum\limits_{j=1}^N \frac{1}{K_j} \sum \limits_{k=1}^{K_j}\sigma_j^2  \nonumber
\\
&~~~ \leq \frac{4 (\Lc(\gm^{0}) - \ex [ \Lc(\gm^{T})])}{\eta QR} +  4 (\eta L Q + 4\eta^2 Q^2 L^{ 2}_{max} + 8\eta^3 Q^3 L L^{ 2}_{max} ) \sum\limits_{j=1}^N \sigma_j^2. \label{eqn.new_method_7}
\end{align}

\end{proof}

\begin{acks}
This work is supported by the Rensselaer-IBM AI Research Collaboration (http://airc.rpi.edu), part of the IBM AI Horizons Network (http://ibm.biz/AIHorizons), and by the National Science Foundation under grants CNS 1553340 and CNS 1816307. A preliminary conference version of this work has been published in~\cite{icasspdasa2}.
\end{acks}

\bibliographystyle{ACM-Reference-Format}
\bibliography{ref}

\end{document}